\newcommand{\R}{\mathbb{R}}
\newcommand{\Prob}{\mathbb{P}}
\renewcommand\bar\overline
\DeclareMathOperator*{\st}{subject\,\, to \quad}
\newcommand{\Etrain}{\mathcal{E}_{\text{train}}}
\newcommand{\Eall}{\mathcal{E}_{\text{all}}}
\newcommand{\iid}{i.i.d.\ }
\newtheoremstyle{slplain}
  {.4\baselineskip\@plus.1\baselineskip\@minus.1\baselineskip}
  {.3\baselineskip\@plus.1\baselineskip\@minus.1\baselineskip}
  {}
  {}
  {\bfseries}
  {.}
  { }
  {}
\theoremstyle{slplain} 
\newtheorem*{definition*}{Definition}
\newtheorem*{theorem*}{Theorem}
\newtheorem{theorem}{Theorem}[section]
\newtheorem{lemma}[theorem]{Lemma}
\newtheorem{proposition}[theorem]{Proposition}
\newtheorem{definition}[theorem]{Definition}
\newtheorem{problem}[theorem]{Problem}
\newtheorem{assumption}[theorem]{Assumption}
\newtheorem{remark}[theorem]{Remark}
\newcommand{\vcdim}{d_{\text{VC}}}
\newlist{todolist}{itemize}{2}
\setlist[todolist]{label=$\square$}
\newcommand{\imgintable}[1]{
    \begin{minipage}{2.1cm}
        \centering 
        \vspace{5pt}
        \includegraphics[width=2cm]{#1}
        \vspace{5pt}
    \end{minipage}
}
\newcommand{\augimage}[3]{
\begin{figure}[t]
    \centering
    \begin{subfigure}{0.48\textwidth}
        \includegraphics[width=\textwidth]{#1/img.png}
        \caption{Training images.}
    \end{subfigure} \hfill
    \begin{subfigure}{0.48\textwidth}
        \includegraphics[width=\textwidth]{#1/aug.png}
        \caption{Corresponding images after augmentations.}
    \end{subfigure}
    \caption{#2}
    \label{fig:#3}
\end{figure}}
\newcommand{\multiimage}[3]{
\begin{figure}
    \centering
    \includegraphics[width=\textwidth]{#1/img-1.png}
    \includegraphics[width=\textwidth]{#1/img-2.png}
    \includegraphics[width=\textwidth]{#1/img-3.png}
    \caption{#2}
    \label{fig:#3}
\end{figure}
}
\newcommand{\indep}{\perp \!\!\! \perp}
\DeclareMathOperator{\E}{\mathbb{E}}
\newcommand{\norm}[1]{\ensuremath{\left\| #1 \right\|}}
\DeclareMathOperator*{\argmin}{argmin}
\DeclareMathOperator*{\argmax}{argmax}
\DeclareMathOperator*{\minimize}{minimize}
\DeclareMathOperator*{\maximize}{maximize}
\newcommand{\calA}{\ensuremath{\mathcal{A}}}
\newcommand{\calB}{\ensuremath{\mathcal{B}}}
\newcommand{\calD}{\ensuremath{\mathcal{D}}}
\newcommand{\calF}{\ensuremath{\mathcal{F}}}
\newcommand{\calH}{\ensuremath{\mathcal{H}}}
\newcommand{\calL}{\ensuremath{\mathcal{L}}}
\newcommand{\calP}{\ensuremath{\mathcal{P}}}
\newcommand{\calX}{\ensuremath{\mathcal{X}}}
\newcommand{\calY}{\ensuremath{\mathcal{Y}}}
\newcommand{\calZ}{\ensuremath{\mathcal{Z}}}
\def\nd/{\textsuperscript{nd}}
\def\rd/{\textsuperscript{rd}}
\def\th/{\textsuperscript{th}}
\def\nnil{\nil}
\newcounter{prob}
\newcounter{dual}
\newenvironment{prob*}{%
	\csname equation*\endcsname%
	\aligned%
}{%
	\endaligned%
	\csname endequation*\endcsname%
}
\title{Model-Based Domain Generalization}
\author{Alexander Robey}
\author{George J. Pappas}
\author{Hamed Hassani}
\affil{Department of Electrical and Systems Engineering \\ University of Pennsylvania}
\date{}
\begin{document}

\maketitle

\begin{abstract}
    Despite remarkable success in a variety of applications, it is well-known that deep learning can fail catastrophically when presented with out-of-distribution data.  Toward addressing this challenge, we consider the \emph{domain generalization} problem, wherein predictors are trained using data drawn from a family of related training domains and then evaluated on a distinct and unseen test domain.  We show that under a natural model of data generation and a concomitant invariance condition, the domain generalization problem is equivalent to an infinite-dimensional constrained statistical learning problem; this problem forms the basis of our approach, which we call Model-Based Domain Generalization.  Due to the inherent challenges in solving constrained optimization problems in deep learning, we exploit nonconvex duality theory to develop unconstrained relaxations of this statistical problem with tight bounds on the duality gap.  Based on this theoretical motivation, we propose a novel domain generalization algorithm with convergence guarantees.   In our experiments, we report improvements of up to 30 percentage points over state-of-the-art domain generalization baselines on several benchmarks including ColoredMNIST, Camelyon17-WILDS, FMoW-WILDS, and PACS.
\end{abstract}
\section{Introduction}

Despite well-documented success in numerous applications \cite{lecun2015deep,esteves2017polar,esteves2018learning,jaderberg2015spatial}, the complex prediction rules learned by modern machine learning methods can fail catastrophically when presented with out-of-distribution (OOD) data \cite{hendrycks2019benchmarking,djolonga2020robustness,taori2020measuring,hendrycks2020many,torralba2011unbiased}.  Indeed, rapidly growing bodies of work conclusively show that state-of-the-art methods are vulnerable to distributional shifts arising from spurious correlations \cite{arjovsky2019invariant,ahuja2020invariant,lu2021nonlinear}, adversarial attacks \cite{biggio2013evasion,goodfellow2014explaining,madry2017towards,wong2017provable,dobriban2020provable}, sub-populations \cite{santurkar2020breeds,sohoni2020no,koh2020wilds,xiao2020noise}, and naturally-occurring variation \cite{robey2020model,wong2020learning,gowal2020achieving,laidlaw2020perceptual}.  This failure mode is particularly pernicious in \emph{safety-critical applications}, wherein the shifts that arise in fields such as medical imaging \cite{esteva2019guide,yao2019strong,li2020domain,bashyam2020medical}, autonomous driving \cite{zhang2020learning,yang2018real,zhang2017curriculum}, and robotics \cite{julian2020never,sonar2020invariant,vinitsky2020robust} are known to lead to unsafe behavior.  And while some progress has been made toward addressing these vulnerabilities, the inability of modern machine learning methods to generalize to OOD data is one of the most significant barriers to deployment in safety-critical applications~\cite{ribeiro2016should,biggio2018wild}.

In the last decade, the \emph{domain generalization} community has emerged in an effort to improve the OOD performance of machine learning methods \cite{blanchard2011generalizing,muandet2013domain,blanchard2017domain,huang2020self}.  In this field, predictors are trained on data drawn from a family of related training domains and then evaluated on a distinct and unseen test domain.   Although a variety of approaches have been proposed for this setting \cite{sun2016deep,li2018learning}, it was recently shown that that no existing domain generalization algorithm can significantly outperform empirical risk minimization (ERM) \cite{vapnik1999overview} over the training domains when ERM is properly tuned and equipped with state-of-the-art architectures \cite{he2016deep,huang2017densely} and data augmentation techniques \cite{gulrajani2020search}.  Therefore, due to the prevalence of OOD data in safety critical applications, it is of the utmost importance that new algorithms be proposed which can improve the OOD performance of machine learning methods.

In this paper, we introduce a new framework for domain generalization which we call \emph{Model-Based Domain Generalization} (MBDG).  The key idea in our framework is to first learn transformations that map data between domains and then to subsequently enforce invariance to these transformations.  Under a general model of covariate shift and a novel notion of invariance to learned transformations, we use this framework to rigorously re-formulate the domain generalization problem as a semi-infinite constrained optimization problem.  We then use this re-formulation to prove that a tight approximation of the domain generalization problem can be obtained by solving the empirical, parameterized dual for this semi-infinite problem.  Finally, motivated by these theoretical insights, we propose a new algorithm for domain generalization; extensive experimental evidence shows that our algorithm advances the state-of-the-art on a range of benchmarks by up to thirty percentage points.  

\paragraph{Contributions.}  Our contributions can be summarized as follows:

\begin{itemize}[nolistsep,leftmargin=3em]
    \item We propose a new framework for domain generalization in which invariance is enforced to underlying transformations of data which capture inter-domain variation.
    \item Under a general model of covariate shift, we rigorously prove the equivalence of the domain generalization problem to a novel semi-infinite constrained statistical learning problem.
    \item We derive \emph{data-dependent} duality gap bounds for the empirical parameterized dual of this semi-infinite problem, proving that tight approximations of the domain generalization problem can be obtained by solving this dual problem under the covariate shift assumption.
    \item We introduce a primal-dual style algorithm for domain generalization in which invariance is enforced over unsupervised generative models trained on data from the training domains.  
    \item We empirically show that our algorithm significantly outperforms state-of-the-art baselines on several standard benchmarks, including \texttt{ColoredMNIST}, \texttt{Camelyon17-WILDS}, and \texttt{PACS}.
\end{itemize}

\section{Related work}

\paragraph{Domain generalization.}  The rapid acceleration of domain generalization research has led to an abundance of principled algorithms, many of which distill knowledge from an array of disparate fields toward resolving OOD failure modes \cite{zhou2021domain,wang2021generalizing,shi2021gradient,bellot2020accounting}.  Among such works, one prominent thrust has been to learn predictors which have internal feature representations that are consistent across domains \cite{ganin2016domain,albuquerque2019adversarial,li2018domain,motiian2017unified,ghifary2016scatter,hu2020domain,ilse2020diva,akuzawa2019adversarial,chattopadhyay2020learning,piratla2020efficient,shankar2018generalizing,li2018deep}.  This approach is also popular in the field of unsupervised domain adaptation \cite{ben2007analysis,daume2009frustratingly,pan2010domain,tzeng2017adversarial,fu2020learning}, wherein it is assumed that unlabeled data from the test domain is available during training~\cite{patel2015visual,csurka2017domain,wang2018deep}.  Also related are works that seek to learn a kernel-based embedding of each domain in an underlying feature space \cite{dubey2021adaptive,deshmukh2019generalization}, and those that employ Model-Agnostic Meta Learning \cite{finn2017model} to adapt to unseen domains \cite{li2018learning,balaji2018metareg,dou2019domain,li2019episodic,shu2021open,li2019feature,wang2020heterogeneous,qiao2020learning,zhang2020adaptive}.  Recently, another prominent direction has been to design weight-sharing \cite{mancini2018robust,mancini2018best,li2017deeper,ding2017deep} and instance re-weighting schemes~\cite{sagawa2019distributionally,hu2018does,johansson2018learning}.  Unlike any of these approaches, we explicitly enforce hard invariance-based constraints on the underlying statistical domain generalization problem.


\paragraph{Data augmentation.}  Another approach toward improving OOD performance is to modify the available training data.  Among such methods, perhaps the most common is to leverage various forms of data augmentation \cite{krizhevsky2012imagenet,hendrycks2019augmix,chen2019invariance,zhang2019unseen,volpi2018generalizing,xu2020adversarial,yan2020improve,zhang2017mixup}.  Recently, several approaches have been proposed which use style-transfer techniques and image-to-image translation networks~\cite{goodfellow2014generative,karras2019style,brock2018large,gatys2016image,zhu2017unpaired,huang2018multimodal,almahairi2018augmented,russo2018source} to augment the training domains with artificially-generated data \cite{zhou2020deep,carlucci2019domain,vandenhende2019three,arruda2019cross,rahman2019multi,yue2019domain,murez2018image,li2021semantic}.  Alternatively, rather than generating new data, \cite{wang2019learning,nam2019reducing,asadi2019towards} all seek to remove textural features in the data to encourage domain invariance.   Unlike the majority of these works, we do not perform data augmentation directly on the training objective; rather, we derive a principled primal-dual style algorithm which enforces invariance-based constraints on data generated by unsupervised generative models.
\section{Domain generalization}

\begin{figure}
    \centering
    \begin{subfigure}[b]{0.31\textwidth}
        \includegraphics[width=\columnwidth]{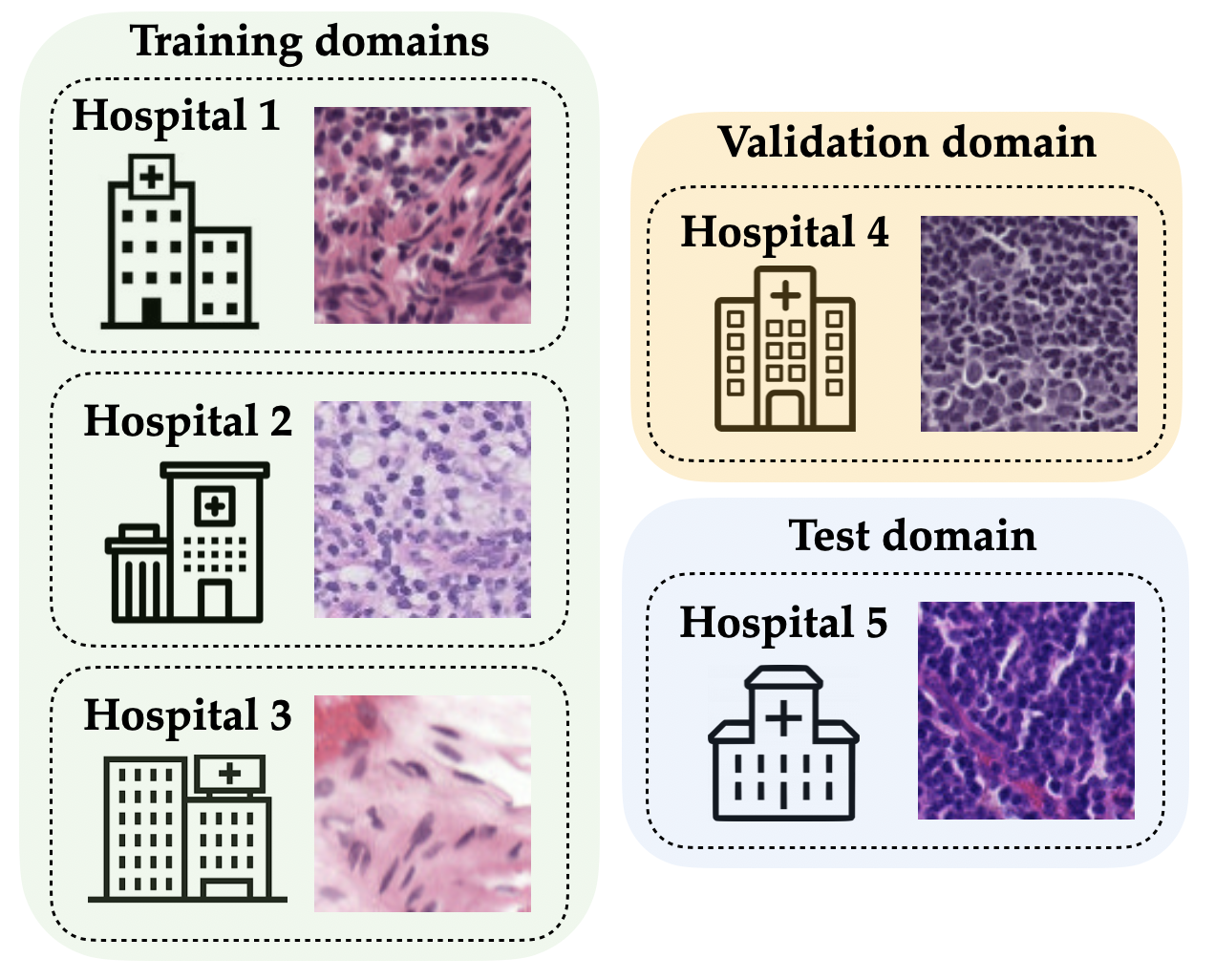}
        \caption{In domain generalization, the data are drawn from a family of related domains.  For example, in the \texttt{Camelyon17-WILDS} dataset \cite{koh2020wilds}, which contains images of cells, the domains correspond to different hospitals where these images were captured.}
        \label{fig:domain-gen}
    \end{subfigure}\quad
    \begin{subfigure}[b]{0.31\textwidth}
        \includegraphics[width=\textwidth]{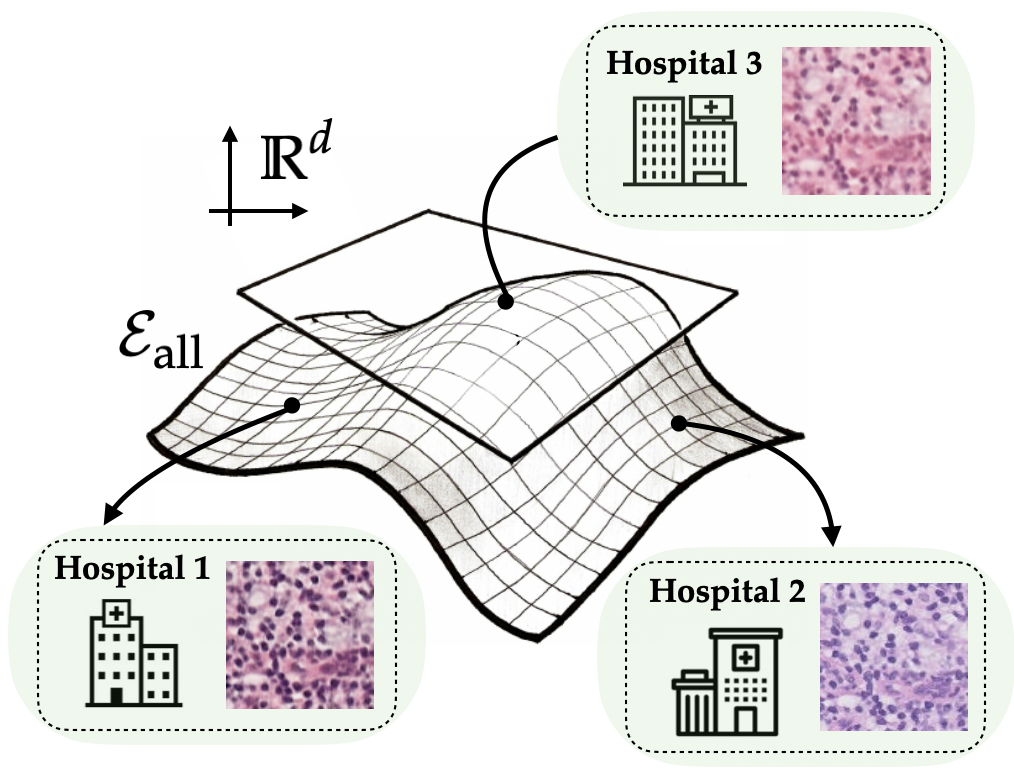}
        \caption{Each data point in a domain generalization task is observed in a particular domain $e\in\Eall$.  The set of all domains $\Eall$ can be thought of as an abstract space lying in $\R^p$.  In \texttt{Camelyon17-WILDS}, this space $\Eall$ corresponds to the set of all possible hospitals.}
        \label{fig:domain-envs}
    \end{subfigure} \quad
    \begin{subfigure}[b]{0.31\textwidth}
        \includegraphics[width=\textwidth]{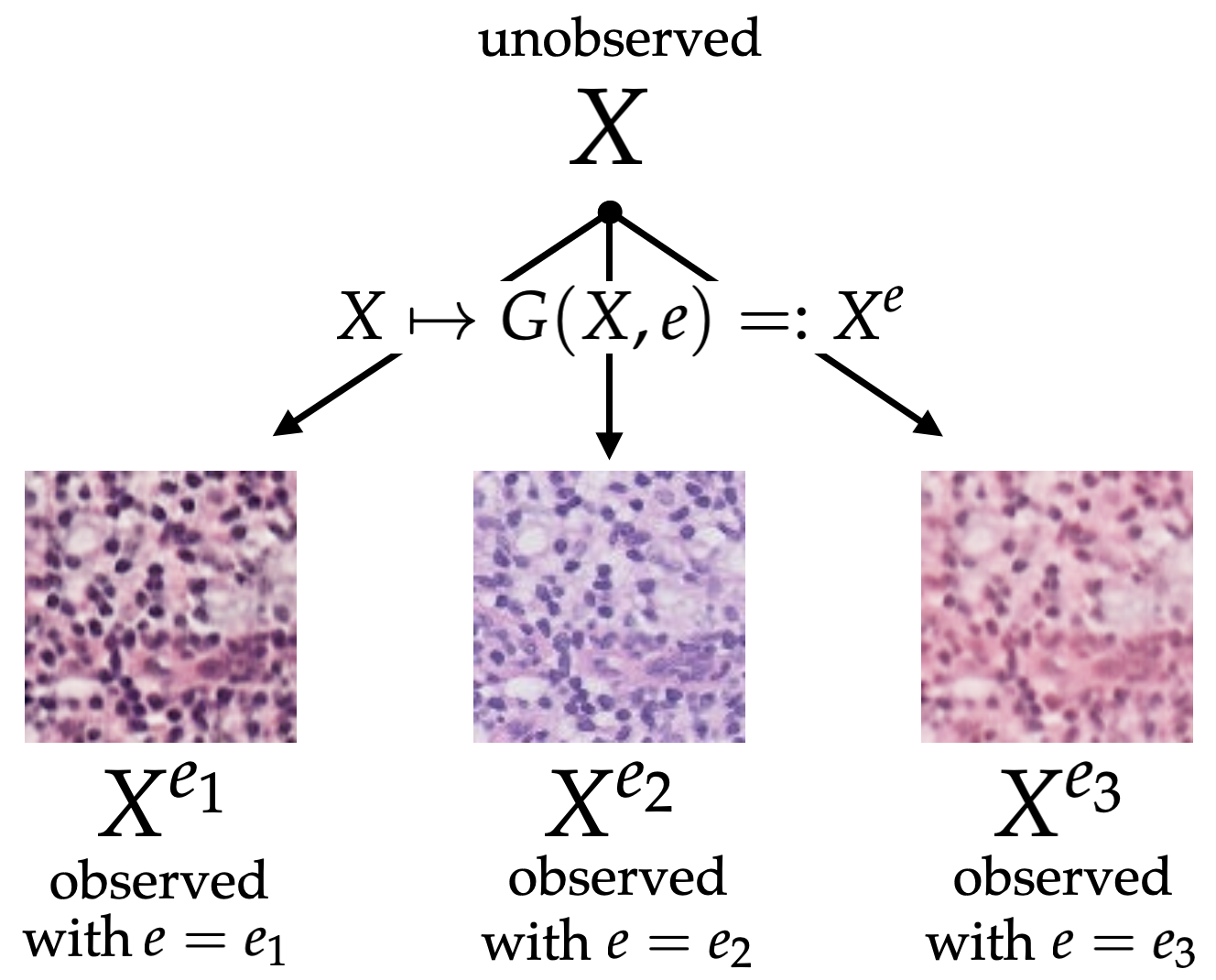}
        \caption{We assume that the variation from domain to domain is characterized by an underlying generative model $G(x,e)$, which transforms the unobserved random variable $X \mapsto G(X,e) := X^e$, where $X^e$ represents $X$ observed in any domain $e\in\Eall$.}
        \label{fig:model-diagram}
    \end{subfigure}
    \caption{An overview of the domain generalization problem setting used in this paper.}
    \label{fig:domain-gen-outline}
\end{figure}

The domain generalization setting is characterized by a pair of random variables $(X,Y)$ over instances $x\in\mathcal{X}\subseteq \R^d$ and corresponding labels $y\in\cal Y$, where $(X,Y)$ is jointly distributed according to an unknown probability distribution $\Prob(X,Y)$.  Ultimately, as in all of supervised learning tasks, the  objective in this setting is to learn a predictor $f$ such that $f(X) \approx Y$, meaning that $f$ should be able to predict the labels $y$ of corresponding instances $x$ for each $(x,y)\sim \Prob(X,Y)$.  However, unlike in standard supervised learning tasks, the domain generalization problem is complicated by the assumption that one cannot sample directly from $\Prob(X,Y)$.  Rather, it is assumed that we can only measure $(X,Y)$ under different environmental conditions, each of which corrupts or varies the data in a different way.  For example, in medical imaging tasks, these environmental conditions might correspond to the imaging techniques and stain patterns used at different hospitals; this is illustrated in Figure~\ref{fig:domain-gen}.  

To formalize this notion of environmental variation, we assume that data is drawn from a set of \emph{environments} or \emph{domains} $\Eall$ (see Figure \ref{fig:domain-envs}).  Concretely, each domain $e\in\Eall$ can be identified with a pair of random variables $(X^e, Y^e)$, which together denote the observation of the random variable pair $(X,Y)$ in environment $e$.  Given samples from a finite subset $\Etrain \subsetneq\Eall$ of domains, the goal of the domain generalization problem is to learn a predictor $f$ that generalizes across all possible environments, implying that $f(X) \approx Y$.  This can be summarized as follows:

\begin{problem}[Domain generalization] \label{prob:domain-gen}
Let $\Etrain \subsetneq \Eall$ be a finite subset of training domains, and assume that for each $e\in\Etrain$, we have access to a dataset $\mathcal{D}^e := \{(x_j^e, y_j^e)\}_{j=1}^{n_e}$ sampled \iid from $\Prob(X^e,Y^e)$.  Given a function class $\mathcal{F}$ and a loss function $\ell:\mathcal{Y}\times\mathcal{Y}\to\R_{\geq 0}$, our goal is to learn a predictor $f\in\calF$ using the data from the datasets $\calD^e$ that minimizes the worst-case risk over the entire family of domains $\Eall$.  That is, we want to solve the following optimization problem:
\begin{align}
    \minimize_{f\in\mathcal{F}} \: \max_{e\in\Eall} \: \E_{\Prob(X^e,Y^e)} \ell(f(X^e), Y^e). \tag{DG} \label{eq:domain-gen}
\end{align}
\end{problem}

In essence, in Problem \ref{prob:domain-gen} we seek a predictor $f\in\mathcal{F}$ that generalizes from the finite set of training domains $\Etrain$ to perform well on the set of all domains $\Eall$.  However, note that while the inner maximization in \eqref{eq:domain-gen} is over the set of all training domains $\Eall$, by assumption we do not have access to data from any of the domains $e\in\Eall \backslash \Etrain$, making this problem challenging to solve.  Indeed, as generalizing to arbitrary test domains is impossible~\cite{krueger2020out}, further structure is often assumed on the topology of $\Eall$ and on the corresponding distributions $\Prob(X^e,Y^e)$.

\subsection{Disentangling the sources of variation across environments}  The difficulty of a particular domain generalization task can be characterized by the extent to which the distribution of data in the unseen test domains $\Eall\backslash\Etrain$ resembles the distribution of data in the training domains $\Etrain$.  For instance, if the domains are assumed to be convex combinations of the training domains, as is often the case in multi-source domain generalization \cite{gan2016learning,matsuura2020domain,niu2015visual}, Problem \ref{prob:domain-gen} can be seen as an instance of distributionally robust optimization \cite{ben2009robust}.  More generally, in a similar spirit to~\cite{krueger2020out}, we identify two forms of variation across domains: \emph{covariate shift} and \emph{concept shift}.  These shifts characterize the extent to which the marginal distributions over instances $\Prob(X^e)$ and the instance-conditional distributions $\Prob(Y^e|X^e)$ differ between domains.  We capture these shifts in the following definition:

\begin{definition}[Covariate shift \& concept shift] \label{def:cov-and-concept-shift}  Problem \ref{prob:domain-gen} is said to experience \textbf{covariate shift} if environmental variation is due to differences between the set of marginal distributions over instances $\{\Prob(X^e)\}_{e\in\Eall}$.  On the other hand, Problem \ref{prob:domain-gen} is said to experience \textbf{concept shift} if environmental variation is due to changes amongst the instance-conditional distributions $\{\Prob(Y^e|X^e)\}_{e\in\Eall}$.
\end{definition}

The growing domain generalization literature encompasses a great deal of past work, wherein both of these shifts have been studied in various contexts~\cite{ben2010theory,david2010impossibility,bagnell2005robust,scholkopf2012causal,lipton2018detecting}, resulting in numerous algorithms designed to solve Problem~\ref{prob:domain-gen}.  Indeed, as this body of work has grown, new benchmark datasets have been developed which span the gamut between covariate and concept shift (see e.g.\ Figure~3 in~\cite{ye2021ood} and the discussion therein).  However, a large-scale empirical study recently showed that no existing algorithm can significantly outperform ERM across these standard domain generalization benchmarks when ERM is carefully implemented~\cite{gulrajani2020search}.  As ERM is known to fail in the presence natural distribution shifts~\cite{rosenfeld2018elephant}, this result highlights the critical need for new algorithms that can go beyond ERM toward solving Problem~\ref{prob:domain-gen}.

\section{Model-based domain generalization} \label{sect:mbdg}

In what follows, we introduce a new framework for domain generalization that we call \emph{Model-Based Domain Generalization} (MBDG).  In particular, we prove that when Problem \ref{prob:domain-gen} is characterized solely by covariate shift, then under a natural invariance-based condition, Problem \ref{prob:domain-gen} is equivalent to an infinite-dimensional constrained statistical learning problem, which forms the basis of MBDG.

\subsection{Formal assumptions for MBDG}  

In general, domain generalization tasks can be characterized by both covariate and concept shift.  However, in this paper, we restrict the scope of our theoretical analysis to focus on problems in which inter-domain variation is due solely to covariate shift through an underlying model of data generation.  Formally, we assume that the data in each domain $e\in\Eall$ is generated from the underlying random variable pair $(X,Y)$ via an unknown function $G$.  

\begin{assumption}[Domain transformation model] \label{assume:gen-model}
Let $\delta_e$ denote a Dirac distribution for $e\in\Eall$.  We assume that there exists\footnote{Crucially, although we assume the existence of a domain transformation model $G$, we emphasize that for many problems, it may be impossible to obtain or derive a simple analytic expression for $G$.  This topic will be discussed at length in Section~\ref{sect:alg} and in Appendix \ref{sect:dtms}.} a measurable function $G:\mathcal{X}\times \Eall \to \mathcal{X}$, which we refer to as a \emph{domain transformation model}, that parameterizes the inter-domain covariate shift via 
\begin{align}
    \Prob(X^e) =^d G \: \# \: (\Prob(X) \times \delta_e) \quad \forall e\in\Eall,
\end{align}
where $\#$ denotes the push-forward measure and $=^d$ denotes equality in distribution.
\end{assumption}

\noindent Informally, this assumption specifies that there should exist a function $G$ that relates the random variables $X$ and $X^e$ via the mapping $X \mapsto G(X,e) = X^e$.  In past work, this particular setting in which the instances $X^e$ measured in an environment $e$ are related to the underlying random variable $X$ by a generative model has been referred to as \emph{domain shift}~\cite[\S 1.8]{quinonero2009dataset}.  In our medical imaging example, the domain shift captured by a domain transformation model would characterize the mapping from the underlying distribution $\Prob(X)$ over different cells to the distribution $\Prob(X^e)$ of images of these cells observed at a particular hospital; this is illustrated in Figure \ref{fig:model-diagram}, wherein inter-domain variation is due to varying colors and stain patterns encountered at different hospitals. 

On the other hand, in our running medical imaging example, the label $y\sim Y$ describing whether a given cell contains a cancerous tumor should not depend on the lighting and stain patterns used at different hospitals.  In this sense, while in other applications, e.g.\ the datasets introduced in~\cite{arjovsky2019invariant,kattakinda2021focus}, the instance-conditional distributions can vary across domains, in this paper we assume that inter-domain variation is \emph{solely} characterized by the domain shift parameterized by $G$.

\begin{assumption}[Domain shift] \label{assume:cov-shift}
We assume that inter-domain variation is solely characterized by domain shift in the marginal distributions $\Prob(X^e)$, as described in Assumption~\ref{assume:gen-model}.  As a consequence, we assume that the instance-conditional distributions $\Prob(Y^e|X^e)$ are stable across domains, meaning that $Y^e$ and $Y$ are equivalent in distribution and that for each $x\in\calX$ and $y\in\calY$, it holds that
\begin{align}
    \Prob(Y = y|X = x) = \Prob(Y^e = y|X^e = G(x,e)) \quad \forall e\in\Eall.
\end{align}
\end{assumption}

\begin{figure}
    \centering
    \begin{subfigure}{0.48\textwidth}
        \centering
        \includegraphics[width=0.6\textwidth]{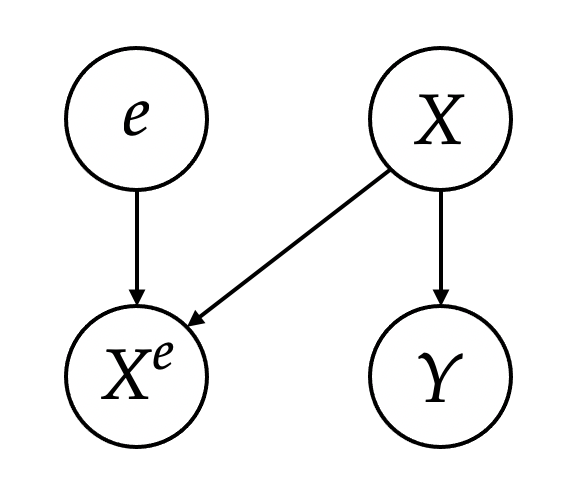}
        \caption{\textbf{Domain shift.}  In this paper, we assume that the instances $X^e$ from a domain $e\in\Eall$ are generated by a domain transformation model $G(X,e)$, resulting in domain shift.  Thus, in the above SCM, $X$ and $e$ are the sole causal ancestors of $X^e$.  Further, we assume that $e$ is not a causally related to $Y=Y^e$.}
        \label{fig:covar-shift-causal}
    \end{subfigure}\hfill
    \begin{subfigure}{0.48\textwidth}
        \centering
        \includegraphics[width=0.6\textwidth]{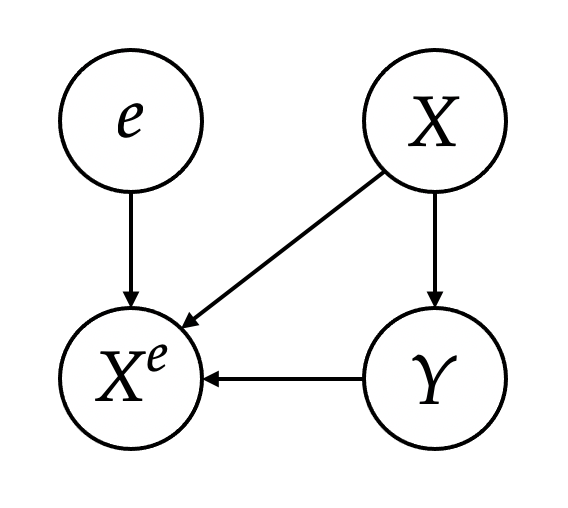}
        \caption{\textbf{Concept shift.}  In this figure, we illustrate a causal data generating model in which the instances $X^e$ can be (spuriously) correlated with the label $Y$, leading to concept shift.  Note that unlike in the SCM shown in (a), in this SCM, $Y$ is also a causal parent of $X^e$.}
        \label{fig:spur-corr-causal}
    \end{subfigure}
    \caption{\textbf{Causal interpretations of domain generalization tasks.}  We compare structural causal models (SCMs) for covariate shift and concept shift.  Throughout, the environment $e\in\Eall$ is assumed to be independent of $(X,Y)$, i.e.\ $e\indep (X,Y)$.}
    \label{fig:causal}
\end{figure}

\subsection{A causal interpretation}  The language of causal inference provides further intuition for the structure imposed on Problem~\ref{prob:domain-gen} by Assumptions~\ref{assume:gen-model} and~\ref{assume:cov-shift}.  In particular, the structural causal model (SCM) for problems in which data is generated according to the mechanism described in Assumptions~\ref{assume:gen-model} and~\ref{assume:cov-shift} is shown in Figure~\ref{fig:covar-shift-causal}.  Recall that in Assumption~\ref{assume:gen-model} imposes that $X$ and $e$ are \emph{causes} of the random variable $X^e$ via the mechanism $X^e = G(X,e)$.  This results in the causal links $e\longrightarrow X^e \longleftarrow X$. Further, in Assumption~\ref{assume:cov-shift}, we assume that $\Prob(Y^e|X^e)$ is fixed across environments, meaning that the label $Y$ is independent of the environment $e$.  In Figure~\ref{fig:covar-shift-causal}, this translates to there being no causal link between $e$ and $Y$.  

To offer a point of comparison, in Figure~\ref{fig:spur-corr-causal}, we show a different SCM that does not fulfill our assumptions.  Notice that in this SCM, $Y$ and $e$ are both causes of $X^e$, meaning that the distributions $\Prob(Y^e|X^e)$ can vary in domain dependent ways.  This gives rise to concept shift, which has also been referred to as \emph{spurious correlation}~\cite{arjovsky2019invariant,singla2021causal}.  Notably, the SCM shown in Figure~\ref{fig:spur-corr-causal} corresponds to the data generating procedure used to construct the \texttt{ColoredMNIST} dataset~\cite{arjovsky2019invariant}, wherein the MNIST digits in various domains $X^e$ are (spuriously) colorized according to the label~$Y$.\footnote{While the data-generating mechanism for \texttt{ColoredMNIST} does not fulfill our assumptions, the algorithm we propose in Section~\ref{sect:alg} still empirically achieves state-of-the-art results on \texttt{ColoredMNIST}.}

\subsection{Pulling back Problem~\ref{prob:domain-gen}}  

The structure imposed on Problem~\ref{prob:domain-gen} by Assumptions~\ref{assume:gen-model} and~\ref{assume:cov-shift} provides a concrete way of parameterizing large families of distributional shifts in domain generalization problems.  Indeed, the utility of these assumptions is that when taken together, they provide the basis for pulling-back Problem \ref{prob:domain-gen} onto the underlying distribution $\Prob(X,Y)$ via the domain transformation model $G$.  This insight is captured in the following proposition:

\begin{proposition} \label{prop:pull-back-domain-gen}
Under Assumptions~\ref{assume:gen-model} and~\ref{assume:cov-shift}, Problem \ref{prob:domain-gen} is equivalent to
\begin{align}
    \minimize_{f\in\cal F} \: \max_{e\in\Eall} \: \E_{\Prob(X,Y)}  \ell(f(G(X,e)), Y). \label{eq:pull-back-dg}
\end{align}
\end{proposition}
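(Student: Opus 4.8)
The plan is to prove the equivalence \emph{environment-by-environment}. I would show that for every fixed $e\in\Eall$ and every $f\in\calF$, the per-domain risk satisfies
\begin{align}
    \E_{\Prob(X^e,Y^e)} \ell(f(X^e), Y^e) = \E_{\Prob(X,Y)} \ell(f(G(X,e)), Y). \nonumber
\end{align}
Once this pointwise identity holds, the two inner maxima over $e\in\Eall$ coincide for each $f$, so the two outer minimizations over $\calF$ share an identical objective and identical minimizers, which is exactly the claimed equivalence between Problem~\ref{prob:domain-gen} and~\eqref{eq:pull-back-dg}.

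To establish the pointwise identity I would first \emph{disintegrate} the left-hand joint expectation into an outer expectation over the marginal $\Prob(X^e)$ and an inner conditional expectation over $\Prob(Y^e\mid X^e)$. Defining the conditional risk $\phi(x^e) := \E\!\left[\ell(f(x^e), Y^e)\mid X^e = x^e\right]$, the tower property gives $\E_{\Prob(X^e,Y^e)} \ell(f(X^e), Y^e) = \E_{\Prob(X^e)}\phi(X^e)$. This step needs only that $\ell$, $f$, and the conditional law are measurable enough that regular conditional probabilities exist (e.g.\ on standard Borel spaces) and that $\phi$ is integrable.

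Next I would invoke the two structural assumptions in sequence. Assumption~\ref{assume:gen-model} states that $X^e =^d G(X,e)$, i.e.\ $\Prob(X^e)$ is the push-forward of $\Prob(X)$ under $G(\cdot,e)$; the change-of-variables formula for push-forward measures then yields $\E_{\Prob(X^e)}\phi(X^e) = \E_{\Prob(X)}\phi(G(X,e))$. Expanding the integrand gives $\phi(G(x,e)) = \int_{\calY}\ell(f(G(x,e)), y)\, d\Prob(Y^e = y\mid X^e = G(x,e))$, and here Assumption~\ref{assume:cov-shift} enters decisively: since $\Prob(Y^e = y\mid X^e = G(x,e)) = \Prob(Y = y\mid X = x)$ for all $x,y$, this inner integral equals $\E\!\left[\ell(f(G(x,e)), Y)\mid X = x\right]$. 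A second application of the tower property collapses $\E_{\Prob(X)}\phi(G(X,e))$ to $\E_{\Prob(X,Y)}\ell(f(G(X,e)), Y)$, completing the identity.

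I expect the main obstacle to be measure-theoretic bookkeeping rather than anything conceptual: one must justify the disintegrations (existence and regularity of the conditional distributions $\Prob(Y^e\mid X^e)$ and $\Prob(Y\mid X)$) and verify that the equality in distribution in Assumption~\ref{assume:gen-model} transfers to the specific test function $\phi$, which it does because push-forward equality of marginals forces equality of integrals of every bounded (or integrable) measurable integrand. The conceptually essential move is recognizing that Assumption~\ref{assume:cov-shift} is precisely the hypothesis that aligns the conditional laws \emph{after} the substitution $x^e = G(x,e)$; without it the pulled-back conditional would fail to match $\Prob(Y\mid X)$ and the identity would break, which is exactly why the concept-shift model of Figure~\ref{fig:spur-corr-causal} must be excluded.
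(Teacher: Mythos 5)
Your proposal is correct and follows essentially the same route as the paper's own proof: both establish the per-environment identity $\E_{\Prob(X^e,Y^e)} \ell(f(X^e), Y^e) = \E_{\Prob(X,Y)} \ell(f(G(X,e)), Y)$ by decomposing the joint law via the tower property, invoking Assumption~\ref{assume:cov-shift} to align the instance-conditional distributions, and invoking Assumption~\ref{assume:gen-model} to push the outer expectation back onto $\Prob(X)$. If anything, your version is slightly more careful than the paper's in tracking that the conditional law must be matched at the evaluation point $x^e = G(x,e)$ rather than at a generic $x^e$, which is exactly how Assumption~\ref{assume:cov-shift} is stated.
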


\noindent The proof of this fact is a straightforward consequence of the decomposition $\Prob(X^e,Y^e) = \Prob(Y^e|X^e)\cdot \Prob(X^e)$ in conjunction with Assumptions~\ref{assume:gen-model} and~\ref{assume:cov-shift} (see Appendix~\ref{sect:proof-of-mbdg}).  Note that this result allows us to implicitly absorb each of the domain distributions $\Prob(X^e,Y^e)$ into the domain transformation model.  Thus, the outer expectation in~\eqref{eq:pull-back-dg} is defined over the underlying distribution $\Prob(X,Y)$.  On the other hand, just as in~\eqref{eq:domain-gen}, this problem is still a challenging statistical min-max problem.  To this end, we next introduce a new notion of invariance with respect to domain transformation models, which allows us to reformulate the problem in \eqref{eq:pull-back-dg} as a semi-infinite constrained optimization problem.

\subsection{A new notion of model-based invariance}  Common to much of the domain generalization literature is the idea that predictors should be invariant to inter-domain changes.  For instance, in \cite{arjovsky2019invariant} the authors seek to learn an \emph{equipredictive representation} $\Phi:\calX\to\calZ$ \cite{koyama2020out}, i.e.\ an intermediate representation that satisfies
\begin{align}
    \Prob(Y^{e_1}|\Phi(X^{e_1})) = \Prob(Y^{e_2}|\Phi(X^{e_2})) \quad \forall e_1,e_2\in\Eall.
\end{align}
Despite compelling theoretical motivation for this approach, it has been shown that current algorithms which seek equipredictive representations do not significantly improve over ERM  \cite{rosenfeld2020risks,kamath2021does,nagarajan2020understanding,ahuja2020empirical}.  With this in mind and given the additional structure introduced in Assumptions~\ref{assume:gen-model} and~\ref{assume:cov-shift}, we introduce a new definition of invariance with respect to the variation captured by the underlying domain transformation model $G$.

\begin{definition}[$G$-invariance] \label{def:g-invar}
Given a domain transformation model $G$, we say a classifier $f$ is \textbf{$\mathbf{G}$-invariant} if it holds for all $e\in\Eall$ that $f(x) = f(G(x,e)) \text{ almost surely when }  x\sim \Prob(X)$.
\end{definition}

\noindent Concretely, this definition says that a predictor $f$ is $G$-invariant if environmental changes under $G(x,e)$ cannot change the prediction returned by $f$.  Intuitively, this notion of invariance couples with the definition of domain shift, in the sense that we expect that a prediction should return the same prediction for any realization of data under $G$.  Thus, whereas equipredictive representations are designed to enforce invariance of in an intermediate representation space $\calZ$, Definition \ref{def:g-invar} is designed to enforce invariance directly on the predictions made by $f$.  In this way, in the setting of Figure~\ref{fig:domain-gen-outline}, $G$-invariance would imply that the predictor $f$ would return the same label for a given cluster of cells regardless of the hospital at which these cells were imaged. 

\subsection{Formulating the MBDG optimization problem}

The $G$-invariance property described in the previous section is the key toward reformulating the min-max problem in~\eqref{eq:pull-back-dg}.  Indeed, the following proposition follows from Assumptions~\ref{assume:gen-model} and~\ref{assume:cov-shift} and from the definition of $G$-invariance.

\begin{proposition} \label{prop:mbdg}
Under Assumptions~\ref{assume:gen-model} and~\ref{assume:cov-shift}, if we restrict the domain $\calF$ of Problem \ref{prob:domain-gen} to the set of $G$-invariant predictors, then Problem \ref{prob:domain-gen} is equivalent to the following constrained optimization problem:
\begin{mdframed}[roundcorner=5pt, backgroundcolor=yellow!8]
\begin{alignat}{2}
    P^\star \triangleq &\minimize_{f\in\mathcal{F}} \: &&R(f) \triangleq \E_{\Prob(X,Y)} \ell(f(X),Y) \tag{MBDG} \label{eq:model-based-domain-gen} \\
    &\st  &&f(x) = f(G(x,e)) \quad \text{ a.e. } x\sim \Prob(X) \:\: \forall e\in\Eall. \notag
\end{alignat}
\end{mdframed}
\end{proposition}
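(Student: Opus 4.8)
The plan is to chain Proposition~\ref{prop:pull-back-domain-gen} together with the defining property of $G$-invariance, so that no fresh appeal to Assumptions~\ref{assume:gen-model} and~\ref{assume:cov-shift} is needed beyond what the pull-back already packages. First I would invoke Proposition~\ref{prop:pull-back-domain-gen} to replace Problem~\ref{prob:domain-gen} by its pulled-back form~\eqref{eq:pull-back-dg}, the minimization of $\max_{e\in\Eall}\E_{\Prob(X,Y)}\ell(f(G(X,e)),Y)$ over $f\in\calF$. Since that proposition is really a per-$f$ identity of objective functionals (the decomposition $\Prob(X^e,Y^e)=\Prob(Y^e|X^e)\cdot\Prob(X^e)$ rewrites each term individually), the equivalence descends to any subclass of $\calF$; in particular, restricting to $G$-invariant predictors turns Problem~\ref{prob:domain-gen} into the minimization of~\eqref{eq:pull-back-dg} over that subclass. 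I would also note at the outset that this subclass is exactly the feasible set cut out by the constraint in~\eqref{eq:model-based-domain-gen}, since Definition~\ref{def:g-invar} and that constraint are verbatim the same condition. Thus the two problems already share an identical feasible set, and it remains only to match their objective values on it.

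The core step is to show that on this feasible set the pulled-back objective collapses to $R(f)$. Fix any $G$-invariant $f$ and any $e\in\Eall$. By Definition~\ref{def:g-invar} we have $f(x)=f(G(x,e))$ for $\Prob(X)$-almost every $x$. Because the $X$-marginal of the joint law $\Prob(X,Y)$ is precisely $\Prob(X)$, the exceptional null set carries zero mass under the joint distribution as well, and because the invariance constrains only the $X$-coordinate it is unaffected by $Y$; hence $\ell(f(G(X,e)),Y)=\ell(f(X),Y)$ holds $\Prob(X,Y)$-almost surely. Taking expectations yields $\E_{\Prob(X,Y)}\ell(f(G(X,e)),Y)=\E_{\Prob(X,Y)}\ell(f(X),Y)=R(f)$ for every $e\in\Eall$.

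Because this value does not depend on $e$, the inner maximization over $\Eall$ in~\eqref{eq:pull-back-dg} is taken over a constant family and equals $R(f)$ for each feasible $f$. Consequently, minimizing~\eqref{eq:pull-back-dg} over the $G$-invariant predictors coincides with minimizing $R(f)$ subject to $f(x)=f(G(x,e))$ a.e.\ for all $e$, which is exactly~\eqref{eq:model-based-domain-gen}; optimal values and minimizers therefore agree, establishing the equivalence. The only genuinely delicate point I anticipate is the measure-theoretic bookkeeping in the core step, namely pushing the per-$e$ almost-sure identity on the $X$-marginal through the loss $\ell$ and through the joint expectation while keeping control of the null sets. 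Reassuringly, once one observes that the resulting integrand is \emph{constant} in $e$, no uniformity over the (possibly infinite) set $\Eall$ and no measurable-selection subtleties enter the collapse of the maximum; everything else reduces to direct substitution.
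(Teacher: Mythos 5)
Your proposal is correct and follows essentially the same route as the paper's own proof: invoke Proposition~\ref{prop:pull-back-domain-gen} to pull the problem back onto $\Prob(X,Y)$, use the $G$-invariance constraint to replace $f(G(X,e))$ by $f(X)$ in the objective, and then drop the inner maximization since the objective no longer depends on $e$. The only difference is that you spell out the measure-theoretic step (transferring the $\Prob(X)$-null exceptional set to the joint law $\Prob(X,Y)$ before substituting inside $\ell$), which the paper's proof performs implicitly.
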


\noindent Here a.e.\ stands for ``almost everywhere'' and $R(f)$ is the statistical risk of a predictor $f$ with respect to the underlying random variable pair $(X,Y)$.  Note that unlike~\eqref{eq:pull-back-dg}, \eqref{eq:model-based-domain-gen} is not a composite optimization problem, meaning that the inner maximization has been eliminated.  In essence, the proof of Proposition~\ref{prob:model-based-domain-gen} relies on the fact that $G$-invariance implies that predictions should not change across domains (see Appendix~\ref{sect:proof-of-mbdg}).

The optimization problem in \eqref{eq:model-based-domain-gen} forms the basis of our Model-Based Domain Generalization framework.  To explicitly contrast this problem to Problem \ref{prob:domain-gen}, we introduce the following concrete problem formulation for Model-Based Domain Generalization.

\begin{problem}[Model-Based Domain Generalization] \label{prob:model-based-domain-gen}
As in Problem \ref{prob:domain-gen}, let $\Etrain \subsetneq \Eall$ be a finite subset of training domains and assume that we have access to datasets $\mathcal{D}^e$ $\forall e\in\Etrain$.  Then under Assumptions~\ref{assume:gen-model} and~\ref{assume:cov-shift}, the goal of Model-Based Domain Generalization is to use the data from the training datasets to solve the semi-infinite constrained optimization problem in \eqref{eq:model-based-domain-gen}.
\end{problem}

\subsection{Challenges in solving Problem \ref{prob:model-based-domain-gen}}  Problem \ref{prob:model-based-domain-gen} offers a new, theoretically-principled perspective on Problem \ref{prob:domain-gen} when data varies from domain to domain with respect to an underlying domain transformation model $G$.  However, just as in general solving the min-max problem of Problem \ref{prob:domain-gen} is known to be difficult, the optimization problem in \eqref{eq:model-based-domain-gen} is also challenging to solve for several reasons:

\begin{enumerate}[itemsep=0.1em, leftmargin=4em]
    \item[(C1)] \textbf{Strictness of $G$-invariance.}  The $G$-invariance constraint in \eqref{eq:model-based-domain-gen} is a strict equality constraint and is thus difficult to enforce in practice.  Moreover, although we require that $f(G(x,e)) = f(x)$ holds for almost every $x\sim\Prob(X)$ and $\forall e\in\Eall$, in practice we only have access to samples from $\Prob(X^e)$ for a finite number of domains $\Etrain\subsetneq\Eall$.  Thus, for some problems it may be impossible to evaluate whether a predictor is $G$-invariant.
    
    \item[(C2)] \textbf{Constrained optimization.} Problem \ref{prob:model-based-domain-gen} is a constrained problem over an infinite dimensional functional space $\calF$.  While it is common to replace $\calF$ with a parameterized function class, this approach creates further complications.  Firstly, enforcing constraints on most modern, non-convex function classes such as the class of deep neural networks is known to be a challenging problem~\cite{chamon2020probably}.  Further, while a variety of heuristics exist for enforcing constraints on such classes (e.g.\ regularization), these approaches cannot guarantee constraint satisfaction for constrained problems~\cite{chamon2020empirical}.
    
    \item[(C3)] \textbf{Unavailable data.} We do have access to the set of all domains $\Eall$ or to the underlying distribution $\Prob(X,Y)$.  Not only does this limit our ability to enforce $G$-invariance (see (C1)), but it also complicates the task of evaluating the statistical risk $R(f)$ in~\eqref{eq:model-based-domain-gen}, since $R(f)$ is defined with respect to $\Prob(X,Y)$.
    
    \item[(C4)] \textbf{Unknown domain transformation model.}  In general, we do not have access to the underlying domain transformation model $G$.  While an analytic expression for $G$ may be known for simpler problems (e.g.\ rotations of the MNIST digits), analytic expressions for $G$ are most often difficult or impossible to obtain.  For instance, obtaining a simple equation that describes the variation in color and contrast in Figure~\ref{fig:model-diagram} would be challenging.
\end{enumerate}

\noindent In the ensuing sections, we explicitly address each of these challenges toward developing a tractable method for approximately solving Problem \ref{prob:model-based-domain-gen} with guarantees on optimality.  In particular, we discuss challenges (C1), (C2), and (C3) in Section~\ref{sect:approx-mbdg}.  We then discuss (C4) in Section~\ref{sect:learning-dtms}.

\section{Data-dependent duality gap for MBDG} \label{sect:approx-mbdg}

In this section, we offer a principled analysis of Problem \ref{prob:model-based-domain-gen}.  In particular, we first address (C1) by introducing a relaxation of the $G$-invariance constraint that is compatible with modern notions of constrained PAC learnability \cite{chamon2020probably}.  Next, to resolve the fundamental difficulty involved in solving constrained statistical problems highlighted in (C2), we follow~\cite{chamon2020empirical} by formulating the parameterized dual problem, which is unconstrained and thus more suitable for learning with deep neural networks.  Finally, to address (C3), we introduce an empirical version of the parameterized dual problem and explicitly characterize the data-dependent duality gap between this problem and Problem \ref{prob:model-based-domain-gen}.  At a high level, this analysis results in an \emph{unconstrained} optimization problem which is guaranteed to produce a solution that is close to the solution of Problem~\ref{prob:domain-gen} (see Theorem~\ref{thm:duality-gap}).

Throughout this section, we have chosen to present our results somewhat informally by  deferring preliminary results and regularity assumptions to the appendices.  Proofs of each of the results in this section are provided in Appendix~\ref{app:omitted-proofs}.

\subsection{Addressing (C1) by relaxing the $G$-invariance constraint}  Among the challenges inherent to solving Problem \ref{prob:model-based-domain-gen}, one of the most fundamental is the difficulty of enforcing the $G$-invariance equality constraint.  Indeed, it is not clear a priori how to enforce a hard invariance constraint on the class $\calF$ of predictors.  To alleviate some of this difficulty, we introduce the following relaxation of Problem \ref{prob:model-based-domain-gen}:
\begin{alignat}{2}
    P^\star(\gamma) \triangleq &\minimize_{f\in\mathcal{F}} &&R(f) \label{eq:relax-mbdg} \\
    &\st &&\mathcal{L}^e(f) \triangleq \E_{\Prob(X)}  d\big(f(X), f(G(X,e))\big)  \leq \gamma \quad\forall e\in\Eall  \notag
\end{alignat}
where $\gamma>0$ is a fixed margin the controls the extent to which we enforce $G$-invariance and $d:\calP(\mathcal{Y})\times\mathcal{P}(\mathcal{Y})\to\R_{\geq 0}$ is a distance metric over the space of probability distributions on $\calY$.  By relaxing the equality constraints in~\eqref{eq:model-based-domain-gen} to the inequality constraints in~\eqref{eq:relax-mbdg} and under suitable conditions on $\ell$ and $d$, \eqref{eq:relax-mbdg} can be characterized by the recently introduced constrained PAC learning framework, which can provide learnability guarantees on constrained statistical problems (see Appendix \ref{sect:pacc} for details).

While at first glance this problem may appear to be a significant relaxation of the MBDG optimization problem in~\eqref{eq:model-based-domain-gen}, when $\gamma = 0$ and under mild conditions on $d$, the two problems are equivalent in the sense that $P^\star(0) = P^\star$ (see Proposition~\ref{prop:relaxation}).  Indeed, we note that the conditions we require on $d$ are not restrictive, and include the KL-divergence and more generally the family of $f$-divergences.  
Moreover, when the margin $\gamma$ is strictly larger than zero, under the assumption that the perturbation function $P^\star(\gamma)$ is $L$-Lipschitz continuous, we show in Remark~\ref{rmk:gamma-remark} that $|P^\star - P^\star(\gamma)| \leq L\gamma$, meaning that the gap between the problems is relatively small when $\gamma$ is chosen to be small.  In particular, when strong duality holds for~\eqref{eq:model-based-domain-gen}, this Lipschitz constant $L$ is equal to the $L^1$ norm of the optimal dual variable $\norm{\nu^\star}_{L^1}$ for~\eqref{eq:model-based-domain-gen} (see Remark~\ref{rmk:opt-dual-var}).

\subsection{Addressing (C2) by formulating the parameterized dual problem}  As written, the relaxation in \eqref{eq:relax-mbdg} is an infinite-dimensional constrained optimization problem over a functional space $\calF$ (e.g.\ $L^2$ or the space of continuous functions).   Optimization in this infinite-dimensional function space is not tractable, and thus we follow the standard convention by leveraging a finite-dimensional parameterization of $\cal F$, such as the class of deep neural networks~\cite{hornik1991approximation,hornik1989multilayer}.  The approximation power of such a parameterization can be captured in the following definition: 
\begin{definition}[$\epsilon$-parameterization]\label{def:eps-param}
Let $\mathcal{H} \subseteq \R^p$ be a finite-dimensional parameter space.  For $\epsilon > 0$, a function $\varphi:\calH\times\calX \to \calY$ is said to be an $\pmb\epsilon$\textbf{-parameterization} of $\mathcal{F}$ if it holds that for each $f\in\mathcal{F}$, there exists a parameter $\theta\in\mathcal{H}$ such that $\E_{\Prob(X)} \norm{\varphi(\theta, x) - f(x)}_\infty \leq \epsilon$.
\end{definition}

\noindent The benefit of using such a parameterization is that optimization is generally more tractable in the parameterized space $\mathcal{A}_\epsilon:= \{\varphi(\theta, \cdot): \theta\in\mathcal{H}\} \subseteq \mathcal{F}$.  However, typical parameterizations often lead to nonconvex problems, wherein methods such as SGD cannot guarantee constraint satisfaction. And while several heuristic algorithms have been designed to enforce constraints over common parametric classes~\cite{pathak2015constrained,chen2018approximating,frerix2020homogeneous,amos2017optnet,ravi2018constrained,donti2021dc3}, these approaches cannot provide guarantees on the underlying statistical problem of interest \cite{chamon2020empirical}.  Thus, to provide guarantees on the underlying statistical problem in Problem \ref{prob:model-based-domain-gen}, given an $\epsilon$-parameterization $\varphi$ of $\mathcal{F}$, we consider the following saddle-point problem:
\begin{align}
    D_\epsilon^\star(\gamma) \triangleq \maximize_{\lambda\in\mathcal{P}(\Eall)} \: \min_{\theta\in\mathcal{H}} \:  R(\theta) + \int_{\Eall} \left[\mathcal{L}^e(\theta) - \gamma\right] \text{d}\lambda(e). \label{eq:param-dual}
\end{align}
where $\calP(\Eall)$ is the space of normalized probability distributions over $\Eall$ and $\lambda\in\calP(\Eall)$ is the (semi-infinite) dual variable.  Here we have slightly abused notation to write $R(\theta) = R(\varphi(\theta, \cdot))$ and $\calL^e(\theta) = \calL^e(\varphi(\theta, \cdot))$.  One can think of \eqref{eq:param-dual} as the dual problem to \eqref{eq:relax-mbdg} solved over the parametric space $\mathcal{A}_\epsilon$.  Notice that unlike Problem \ref{prob:model-based-domain-gen}, the problem in \eqref{eq:param-dual} is \emph{unconstrained}, making it much more amenable for optimization over the class of deep neural networks.  Moreover, under mild conditions, the optimality gap between \eqref{eq:relax-mbdg} and \eqref{eq:param-dual} can be explicitly bounded as follows:

\begin{proposition}[Parameterization gap] \label{prop:param-gap}
Let $\gamma > 0$ be given.  Under mild regularity assumptions (see Assumption~\ref{assume:lipschitz} in Appendix~\ref{app:proof-param-gap}) on $\ell$ and $d$, there exists a small universal constant $k$ such that
\begin{align}
    P^\star(\gamma) \leq D^\star_\epsilon(\gamma) \leq P^\star(\gamma) + \epsilon k \left(1 + \norm{\lambda_\text{pert}^\star}_{L^1} \right),
\end{align}
where $\lambda^\star_\text{pert}$ is the optimal dual variable for a perturbed version of~\eqref{eq:relax-mbdg} in which the constraints are tightened to hold with margin $\gamma - k\epsilon$.
\end{proposition}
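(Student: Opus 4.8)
The plan is to prove the two inequalities separately, using weak duality plus the richness of $\calF$ for the lower bound and a parameterization-transfer argument for the upper bound. Throughout I would abbreviate the Lagrangian as $L_\gamma(f,\lambda) \triangleq R(f) + \int_{\Eall}[\mathcal{L}^e(f)-\gamma]\,\text{d}\lambda(e)$ and the unparameterized dual function as $g_\gamma(\lambda)\triangleq\min_{f\in\calF}L_\gamma(f,\lambda)$, treating the multiplier $\lambda$ as a positive measure on $\Eall$ with $\norm{\lambda}_{L^1}=\lambda(\Eall)$ its total mass. The one structural fact I would invoke is strong duality for the relaxed statistical problem \eqref{eq:relax-mbdg} over the rich class $\calF$, i.e.\ $P^\star(\gamma')=\max_\lambda g_{\gamma'}(\lambda)$ for every admissible margin $\gamma'$; this is the hidden-convexity/Slater argument of the constrained-learning framework referenced in the preliminaries. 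For the lower bound $P^\star(\gamma)\le D^\star_\epsilon(\gamma)$, I would observe that since $\mathcal{A}_\epsilon\subseteq\calF$, restricting the inner minimization of \eqref{eq:param-dual} to the parameterized class can only raise the dual function, $\min_{\theta\in\calH}L_\gamma(\varphi(\theta,\cdot),\lambda)\ge g_\gamma(\lambda)$ for each $\lambda$; maximizing over $\lambda$ and applying strong duality gives $D^\star_\epsilon(\gamma)\ge\max_\lambda g_\gamma(\lambda)=P^\star(\gamma)$.

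For the upper bound, let $\lambda^\star_\epsilon$ maximize \eqref{eq:param-dual}, so $D^\star_\epsilon(\gamma)=\min_\theta L_\gamma(\varphi(\theta,\cdot),\lambda^\star_\epsilon)$. The transfer step invokes Definition~\ref{def:eps-param}: given $f\in\calF$, pick $\theta$ with $\E_{\Prob(X)}\norm{\varphi(\theta,x)-f(x)}_\infty\le\epsilon$, and use the Lipschitz regularity of $\ell$ and $d$ (Assumption~\ref{assume:lipschitz}) to get $|R(\varphi(\theta,\cdot))-R(f)|\le M_\ell\epsilon$ and $|\mathcal{L}^e(\varphi(\theta,\cdot))-\mathcal{L}^e(f)|\le 2M_d\epsilon$ uniformly in $e$. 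This yields $\min_\theta L_\gamma(\varphi(\theta,\cdot),\lambda^\star_\epsilon)\le g_\gamma(\lambda^\star_\epsilon)+k\epsilon\big(1+\norm{\lambda^\star_\epsilon}_{L^1}\big)$ with $k\triangleq\max\{M_\ell,2M_d\}$. The decisive bookkeeping step is that the stray term $k\epsilon\norm{\lambda^\star_\epsilon}_{L^1}=\int k\epsilon\,\text{d}\lambda^\star_\epsilon$ is constant in $f$ and can be folded into the margin, converting $g_\gamma(\lambda^\star_\epsilon)+k\epsilon\norm{\lambda^\star_\epsilon}_{L^1}$ into exactly $g_{\gamma-k\epsilon}(\lambda^\star_\epsilon)$. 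Since this is one feasible multiplier for the tightened dual, $g_{\gamma-k\epsilon}(\lambda^\star_\epsilon)\le\max_\lambda g_{\gamma-k\epsilon}(\lambda)=P^\star(\gamma-k\epsilon)$ by strong duality of the tightened problem, leaving $D^\star_\epsilon(\gamma)\le P^\star(\gamma-k\epsilon)+k\epsilon$.

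To close the gap between $P^\star(\gamma-k\epsilon)$ and $P^\star(\gamma)$, I would use that the value function $\gamma\mapsto P^\star(\gamma)$ is convex and non-increasing, and that $-\norm{\lambda^\star_\text{pert}}_{L^1}$ is a subgradient at $\gamma-k\epsilon$, where $\lambda^\star_\text{pert}$ is precisely the optimal dual variable of the problem tightened to margin $\gamma-k\epsilon$. Convexity then gives $P^\star(\gamma-k\epsilon)\le P^\star(\gamma)+k\epsilon\norm{\lambda^\star_\text{pert}}_{L^1}$, and combining with the previous display produces $D^\star_\epsilon(\gamma)\le P^\star(\gamma)+k\epsilon\big(1+\norm{\lambda^\star_\text{pert}}_{L^1}\big)$, as claimed.

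I expect the main obstacle to lie in the strong-duality input rather than in the transfer arithmetic: viewed over deterministic predictors, \eqref{eq:relax-mbdg} is nonconvex and carries a continuum of constraints indexed by $\Eall$, so justifying zero duality gap for both the nominal and tightened problems, and the interpretation of $\norm{\lambda^\star_\text{pert}}_{L^1}$ as a subgradient of the value function, requires the non-atomic hidden-convexity machinery together with a Slater-type qualification on $\gamma$. A secondary subtlety is that the constraint-transfer bound involves the term $\E_{\Prob(X)}\norm{\varphi(\theta,G(X,e))-f(G(X,e))}_\infty$, which is governed by the shifted law $\Prob(X^e)=G\,\#\,(\Prob(X)\times\delta_e)$ rather than by $\Prob(X)$; controlling it uniformly over $e\in\Eall$ is what makes the additive error genuinely proportional to $\norm{\lambda^\star_\epsilon}_{L^1}$ and will likely need either a sup-norm strengthening of Definition~\ref{def:eps-param} or a uniform parameterization guarantee across domains.
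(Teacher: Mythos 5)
Your proposal is correct and follows essentially the same route as the paper's proof: the lower bound via $\mathcal{A}_\epsilon\subseteq\calF$ together with strong duality of \eqref{eq:relax-mbdg} (Slater's condition, supplied by part 3 of Assumption~\ref{assume:lipschitz}), and the upper bound by Lipschitz transfer of the Lagrangian to the parameterized class, folding the $k\epsilon\norm{\lambda}_{L^1}$ error into a margin-tightened problem, and then bounding $P^\star(\gamma-k\epsilon)$ by $P^\star(\gamma)+k\epsilon\norm{\lambda^\star_\text{pert}}_{L^1}$ through the tightened problem's optimal dual variable --- the paper executes this last step with saddle-point inequalities, which is the same argument as the perturbation-subgradient inequality you invoke (it is the paper's own Lemma~\ref{lemma:pert}, used in Remark~\ref{rmk:opt-dual-var}). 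The ``secondary subtlety'' you flag --- that the constraint-transfer error involves $\varphi(\theta,G(X,e))-f(G(X,e))$ under the shifted law $\Prob(X^e)$ --- is exactly what the paper's Assumption~\ref{assume:lipschitz} is formulated to preempt: its Lipschitz-like condition on $d$ bounds $|d(f_1(x),f_1(G(x,e)))-d(f_2(x),f_2(G(x,e)))|$ by $L_d\norm{f_1(x)-f_2(x)}_\infty$ uniformly over $e\in\Eall$, so only the base-point approximation error from Definition~\ref{def:eps-param} ever enters.
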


\noindent In this way, solving the parameterized dual problem in \eqref{eq:param-dual} provides a solution that can be used to recover a close approximation of the primal problem in~\eqref{eq:relax-mbdg}.  To see this, observe that Prop.\ \ref{prop:param-gap} implies that $|D_\epsilon^\star(\gamma) - P^\star(\gamma)| \leq \epsilon k (1 + ||\lambda_\text{pert}^\star||_{L^1} )$.  This tells us that the gap between $P^\star(\gamma)$ and $D^\star_\epsilon(\gamma)$ is small when we use a tight $\epsilon$-parameterization of $\calF$.

\subsection{Addressing (C3) by bounding the empirical duality gap}  The parameterized dual problem in \eqref{eq:param-dual} gives us a principled way to address Problem \ref{prob:model-based-domain-gen} in the context of deep learning.  However, complicating matters is the fact that we do not have access to the full distribution $\Prob(X,Y)$ or to data from any of the domains in $\Eall\backslash\Etrain$.  In practice, it is ubiquitous to solve optimization problems such as \eqref{eq:param-dual} over a finite sample of $N$ data points drawn from $\Prob(X,Y)$\footnote{Indeed, in practice we do not have access to any samples from $\Prob(X,Y)$.  In Section~\ref{sect:alg}, we argue that the $N$ samples from $\Prob(X,Y)$ can be replaced by the $\sum_{e\in\Etrain} n_e$ samples drawn from the training datasets $\calD^e$.}.  More specifically, given $\{(x_j, y_j)\}_{j=1}^N$ drawn i.i.d.\ from the underlying random variables $(X,Y)$, we consider the empirical counterpart of \eqref{eq:param-dual}:
\begin{align}
    D^\star_{\epsilon, N, \Etrain}(\gamma) \triangleq \maximize_{\lambda(e)\geq 0, \: e\in\Etrain} \: \min_{\theta\in\mathcal{H}} \: \hat{\Lambda}(\theta, \lambda) \triangleq \hat{R}(\theta) + \frac{1}{|\Etrain|}\sum_{e\in\Etrain} \left[\hat{\mathcal{L}}^e(\theta) - \gamma\right] \lambda(e) \label{eq:param-empir-dual}
\end{align}
where $\hat R(\theta)$ and $\hat \calL^e(\theta)$ are the empirical counterparts of $R(\theta)$ and $\calL^e(\theta)$, i.e.
\begin{align}
    \hat{R}(\theta) := \frac{1}{N}\sum_{j=1}^N \ell(\varphi(\theta, x_j), y_j)  \quad\text{and}\quad \hat{\mathcal{L}}^e(\theta) := \frac{1}{N} \sum_{j=1}^N d(\varphi(\theta, x_j), \varphi(\theta, G(x_j, e))),
\end{align}
and $\hat{\Lambda}(\theta, \lambda)$ is the empirical Lagrangian.  Notably, the duality gap between the solution to \eqref{eq:param-empir-dual} and the original model-based problem in  \eqref{eq:model-based-domain-gen} can be explicitly bounded as follows.

\begin{theorem}[Data-dependent duality gap] \label{thm:duality-gap}
Let $\epsilon > 0$ be given, and let $\varphi$ be an $\epsilon$-parameterization of $\mathcal{F}$. Under mild regularity assumptions on $\ell$ and $d$ and assuming that $\mathcal{A}_\epsilon$ has finite VC-dimension, with probability $1-\delta$ over the $N$ samples from $\Prob(X,Y)$ we have that
\begin{align}
    |P^\star - D_{\epsilon,N,\Etrain}^\star(\gamma) | \leq L\gamma + \epsilon k \left(1 + \norm{\lambda_\text{pert}^\star}_{L^1} \right) + {\cal O}\left( \sqrt{\log(N)/N}\right)
\end{align}
where $L$ is the Lipschitz constant of $P^\star(\gamma)$ and $k$ and $\lambda^\star_\text{pert}$ are as defined in Proposition~\ref{prop:param-gap}.
\end{theorem}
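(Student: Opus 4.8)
The plan is to bound the target quantity by a triangle inequality that isolates the three sources of error already identified in Section~\ref{sect:approx-mbdg}, invoking the corresponding earlier results for the first two while carrying out the genuinely new argument --- uniform convergence --- for the third. Concretely, I would insert the intermediate quantities $P^\star(\gamma)$ (the relaxed problem) and $D^\star_\epsilon(\gamma)$ (the population parameterized dual) and write
\[
|P^\star - D^\star_{\epsilon,N,\Etrain}(\gamma)| \leq |P^\star - P^\star(\gamma)| + |P^\star(\gamma) - D^\star_\epsilon(\gamma)| + |D^\star_\epsilon(\gamma) - D^\star_{\epsilon,N,\Etrain}(\gamma)|.
\]
The first term is controlled by Remark~\ref{rmk:gamma-remark}, giving $|P^\star - P^\star(\gamma)| \leq L\gamma$, and the second by Proposition~\ref{prop:param-gap}, giving $|P^\star(\gamma) - D^\star_\epsilon(\gamma)| \leq \epsilon k (1 + \norm{\lambda^\star_\text{pert}}_{L^1})$. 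These two steps are essentially bookkeeping once the earlier results are in hand, so the entire burden of the proof falls on the third, empirical term.

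For the empirical term, the key observation is that the max-min value of the Lagrangian is $1$-Lipschitz in the sup-norm of that Lagrangian: writing $\Lambda(\theta,\lambda)$ for the population objective appearing in \eqref{eq:param-dual} and $\hat\Lambda(\theta,\lambda)$ for its empirical counterpart in \eqref{eq:param-empir-dual}, if $\sup_{\theta,\lambda} |\Lambda(\theta,\lambda) - \hat\Lambda(\theta,\lambda)| \leq \eta$ over the relevant bounded domains, then $|D^\star_\epsilon(\gamma) - D^\star_{\epsilon,N,\Etrain}(\gamma)| \leq \eta$. Thus it suffices to establish a uniform deviation bound of order $\eta = \mathcal{O}(\sqrt{\log(N)/N})$ for the empirical risk $\hat R(\theta)$ and for each empirical constraint slack $\hat{\mathcal{L}}^e(\theta)$. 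Since $\mathcal{A}_\epsilon$ has finite VC-dimension by hypothesis and $\ell$, $d$ are bounded by the regularity assumptions, I would appeal to standard VC/Rademacher uniform-convergence theory to obtain $\sup_\theta |\hat R(\theta) - R(\theta)| = \mathcal{O}(\sqrt{\log(N)/N})$, and analogously for the constraint losses after checking that the composed class $\{x \mapsto d(\varphi(\theta,x), \varphi(\theta, G(x,e)))\}$ retains finite complexity --- which holds because composing with the fixed map $G$ and the bounded metric $d$ does not inflate the VC/pseudo-dimension. A union bound over the finitely many domains in $\Etrain$ then yields the claim with probability $1-\delta$, the $\delta$-dependence being absorbed into the constant hidden in $\mathcal{O}(\cdot)$.

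Chaining the three bounds gives exactly $L\gamma + \epsilon k(1 + \norm{\lambda^\star_\text{pert}}_{L^1}) + \mathcal{O}(\sqrt{\log(N)/N})$, as claimed. The main obstacle I anticipate is making the uniform-convergence step genuinely uniform over the dual variable $\lambda$ as well as over $\theta$: because $\hat\Lambda$ is linear in $\lambda$ with the constraint slacks as coefficients, a naive estimate would scale with $\norm{\lambda}_{L^1}$ and could diverge. The remedy is to restrict attention to a bounded dual feasible set --- justified by the boundedness of the optimal dual variable implied by strong duality (cf.\ Remark~\ref{rmk:opt-dual-var}) --- so that the $\lambda$-weighted combination of the uniformly small constraint deviations stays $\mathcal{O}(\sqrt{\log(N)/N})$. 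A secondary subtlety is the restriction of the dual maximization from $\mathcal{P}(\Eall)$ to distributions supported on $\Etrain$; this is handled within the constrained-learning framework of \cite{chamon2020empirical}, treating $\Etrain$ as a finite sample of constraints whose generalization error is likewise absorbed into the statistical term.
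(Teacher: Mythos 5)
Your decomposition and the treatment of the first two terms coincide exactly with the paper's proof: the same triangle inequality through $P^\star(\gamma)$ and $D^\star_\epsilon(\gamma)$, with Remark~\ref{rmk:gamma-remark} and Proposition~\ref{prop:param-gap} supplying the bounds $L\gamma$ and $\epsilon k(1+\norm{\lambda^\star_\text{pert}}_{L^1})$. Where you diverge is the empirical term, and this is where your argument has a genuine gap. Your uniform-convergence-of-the-Lagrangian step requires the two max-min problems to be posed over the \emph{same} dual feasible set and the \emph{same} family of constraints, but they are not: $D^\star_\epsilon(\gamma)$ maximizes over $\lambda\in\mathcal{P}(\Eall)$ and its Lagrangian integrates the constraint slacks $\mathcal{L}^e(\theta)-\gamma$ over \emph{all} of $\Eall$, whereas $D^\star_{\epsilon,N,\Etrain}(\gamma)$ maximizes over nonnegative weights supported on the finitely many training domains and only involves $\hat{\mathcal{L}}^e$ for $e\in\Etrain$. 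Uniform convergence as $N\to\infty$ says nothing about domains $e\in\Eall\setminus\Etrain$; the error incurred by replacing the constraint index set $\Eall$ with $\Etrain$ is governed by how representative the fixed, finite collection of training domains is, not by the number of data points, so it cannot be ``absorbed into the statistical term'' $\mathcal{O}(\sqrt{\log(N)/N})$ as you assert in your closing remark. Deferring this to the framework of \cite{chamon2020empirical} does not repair it, because there the sampled constraints are drawn from the same distribution that defines the population constraints, which is not the situation here.

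The paper closes exactly this hole with a different device (Proposition~\ref{prop:empir-gap}): it takes primal--dual optimal pairs for both problems and invokes complementary slackness, so that the constraint terms vanish at the respective optima and both dual values collapse to pure risk values, $D^\star_\epsilon(\gamma)=R(\varphi(\theta^\star_\epsilon,\cdot))$ and $D^\star_{\epsilon,N,\Etrain}(\gamma)=\hat{R}(\varphi(\theta^\star_{\epsilon,N,\Etrain},\cdot))$. The discrepancy between $\mathcal{P}(\Eall)$ and weights on $\Etrain$ then never enters, and a single VC-type bound on $\sup_{\theta}|R(\varphi(\theta,\cdot))-\hat{R}(\varphi(\theta,\cdot))|$, combined with a sandwich argument using the optimality of the two primal solutions, yields the $\mathcal{O}(\sqrt{\log(N)/N})$ term. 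If you wanted to salvage your route, you would additionally have to (i) bound the gap between the population dual over $\mathcal{P}(\Eall)$ and the population dual restricted to $\Etrain$ --- precisely the step that complementary slackness renders unnecessary --- and (ii) justify a deterministic bound on the $L^1$ norm of the \emph{empirical} optimal dual variable: your appeal to Remark~\ref{rmk:opt-dual-var} covers the population problem only, while the empirical dual optimum is data-dependent and would need a uniform empirical Slater condition holding with high probability.
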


\noindent The key message to take away from Theorem \ref{thm:duality-gap} is that given samples from $\Prob(X,Y)$, the duality gap incurred by solving the empirical problem in \eqref{eq:param-empir-dual} is small when (a) the $G$-invariance margin $\gamma$ is small, (b) the parametric space $\calA_\epsilon$ is a close approximation of $\calF$, and (c) we have access to sufficiently many samples.  Thus, assuming that Assumptions~\ref{assume:gen-model} and~\ref{assume:cov-shift} hold, the solution to the domain generalization problem in Problem~\ref{prob:domain-gen} is closely-approximated by the solution to the empirical, parameterized dual problem in~\eqref{eq:param-empir-dual}.
\section{Learning domain transformation models from data} \label{sect:learning-dtms}

\begin{figure}
    \centering
    \includegraphics[width=0.8\textwidth]{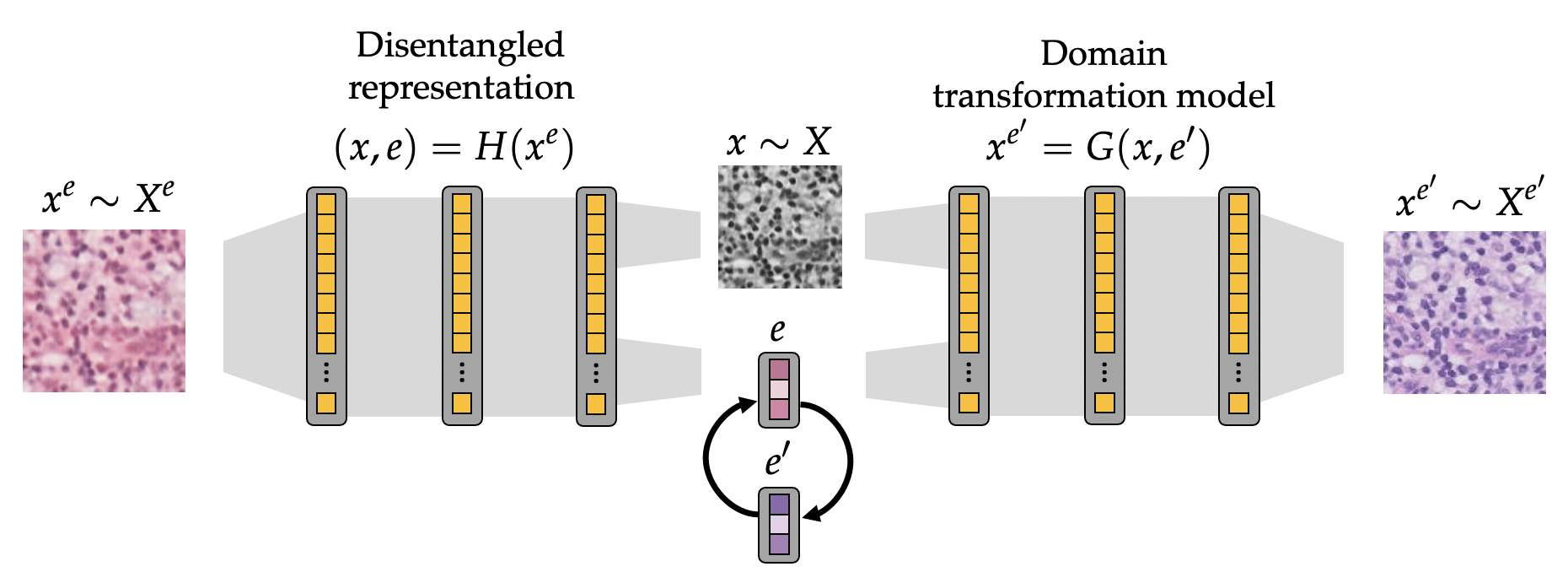}
    \caption{\textbf{Multi-modal image-to-image translation networks.}  In this paper, we parameterize domain transformation models via multi-modal image-to-image translation networks, which can be trained to map images from one domain so that they resemble images from different domains.}
    \label{fig:dtm-arch}
\end{figure}

Regarding challenge (C4), critical to our approach is having access to the underlying domain transformation model $G(x,e)$. For the vast majority of settings, the underlying function $G(x,e)$ is not known a priori and cannot be represented by a simple expression.  For example, obtaining a closed-form expression for a model that captures the variation in coloration, brightness, and contrast in the medical imaging dataset shown in Figure~\ref{fig:domain-gen-outline} would be challenging. 

\subsection{Multimodal image-to-image translation networks}

To address this challenge, we argue that a realistic \emph{approximation} of the underlying domain transformation model can be learned from the instances drawn from the training datasets $\calD^e$ for $e\in\Etrain$.  In this paper, to learn domain transformation models, we train multimodal image-to-image translation networks (MIITNs) on the instances drawn from the training domains. MIITNs are designed to transform samples from one dataset so that they resemble a diverse collection of images from another dataset.  That is, the constraints used to train these models enforce that a diverse array of samples is outputted for each input image.  This feature precludes the possibility of learning trivial maps between domains, such as the identity transformation. 

\begin{table}
    \centering
    \scalebox{0.85}{
    \begin{tabular}{|c|c|c|} \hline
        \thead{Dataset} & \thead{Original} & \thead{Samples from learned domain transformation models $G(x,e)$} \\ \hline
        \texttt{ColoredMNIST} 
        & \imgintable{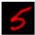}
        & 
        \imgintable{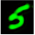}
        \imgintable{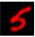}
        \imgintable{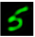}
        \imgintable{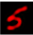}
        \\ \hline
        \texttt{\makecell{Camelyon17- \\ WILDS}}
        & \imgintable{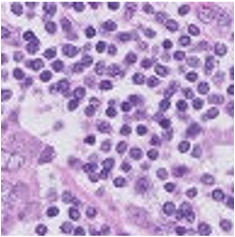} 
        & \imgintable{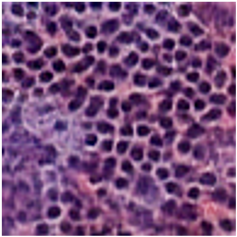}
        \imgintable{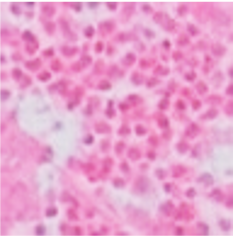}
        \imgintable{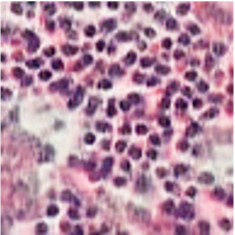}
        \imgintable{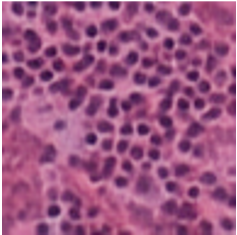}
        \\ \hline
        \texttt{\makecell{FMoW- \\ WILDS}} 
        & \imgintable{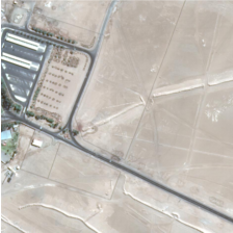}
        & \imgintable{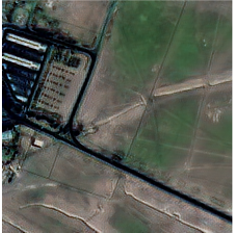}
        \imgintable{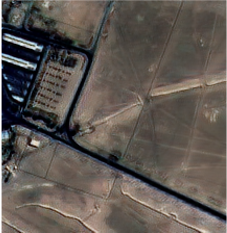}
        \imgintable{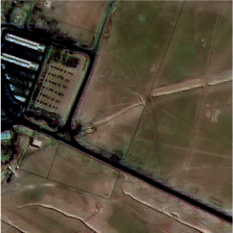}
        \imgintable{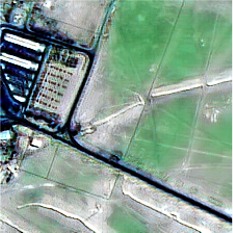}
        \\ \hline
        \texttt{PACS} 
        & \imgintable{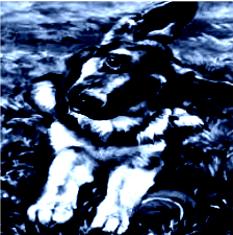} 
        & \imgintable{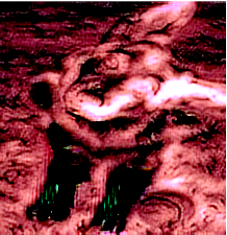} 
        \imgintable{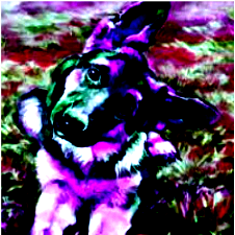} 
        \imgintable{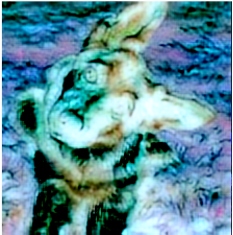} 
        \imgintable{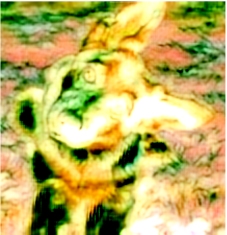} 
        \\ \hline
    \end{tabular}}
    \caption{We show samples from domain transformation models trained on images from the training datasets $\calD^e$ for $e\in\Etrain$ using the MUNIT architecture for the \texttt{Camelyon17-WILDS}, \texttt{FMOW-WILDS}, and \texttt{PACS} datasets.}
    \label{tab:model-based-samples}
\end{table}

As illustrated in Figure~\ref{fig:dtm-arch}, these architectures generally consist of two components: a disentangled representation~\cite{higgins2018towards} and a generative model.   The role of the disentangled representation is to recover a sample $x$ generated according to $X$ from a instance $x^e$ observed in a particular domain $e\in\Eall$.  In other words, for a fixed instance $x^e = G(x,e)$, the disentangled representation is designed to disentangle $x$ from $e$ via $(x,e) = H(x^e)$.  On the other hand, the role of the generative is to map each instance $x\sim X$ to a realization in a new environment $e'$.  Thus, given $x$ and $e$ at the output of the disentangled representation, we generate an instance from a new domain by replacing the environmental code $e$ with a different environmental parameter $e'\in\Eall$ to produce the instance $x^{e'} = G(x, e')$.  In this way, MIITNs are a natural framework for learning domain transformation models, as they facilitate 1) recovering samples from $X$ via the disentangled representation, and 2) generating instances from new domains in a multimodal fashion.

\paragraph{Samples from learned domain transformation models.}  In each of the experiments in Section \ref{sect:experiments}, we use the MUNIT architecture introduced in \cite{huang2018multimodal} to parameterize MIITNs.  As shown in Table \ref{tab:model-based-samples} and in Appendix \ref{sect:dtms}, models trained using the MUNIT architecture learn accurate and diverse transformations of the training data, which often generalize to generate images from new domains.  Notice that in this table, while the generated samples still retain the characteristic features of the input image (e.g.\ in the top row, the cell patterns are the same across the generated samples), there is clear variation between the generated samples.  Although these learned models cannot be expected to capture the full range of inter-domain generalization in the unseen test domains $\Eall\backslash\Etrain$, in our experiments, we show that these learned models are sufficient to significantly advance the state-of-the-art on several domain generalization benchmarks.

\section{A principled algorithm for Model-Based Domain Generalization} \label{sect:alg}

Motivated by the theoretical results in Section~\ref{sect:approx-mbdg} and the approach for learning domain transformation models in Section~\ref{sect:learning-dtms}, we now introduce a new domain generalization algorithm designed to solve the empirical, parameterized dual problem in~\eqref{eq:param-empir-dual}.  We emphasize that while our theory relies on the assumption that inter-domain variation is solely characterized by covariate shift, our algorithm is broadly applicable to problems with or without covariate shift (see the experimental results in Section~\ref{sect:experiments}).  In particular, assuming access to an appropriate learned domain transformation model $G$, we leverage $G$ toward solving the unconstrained dual optimization problem in~\eqref{eq:param-empir-dual} via a primal-dual iteration. 

\subsection{Primal-dual iteration}  Given a learned approximation $G(x,e)$ of the underlying domain transformation model, the next step in our approach is to use a primal-dual iteration \cite{bertsekas2015convex} toward solving \eqref{eq:param-empir-dual} using the training datasets $\calD^e$.  As we will show, the primal-dual iteration is a natural algorithmic choice for solving the empirical, parameterized dual problem in~\eqref{eq:param-empir-dual}.  Indeed, because the outer maximization in \eqref{eq:param-empir-dual} is a linear program in $\lambda$, the primal-dual iteration can be characterized by alternating between the following steps:

\begin{minipage}{\textwidth}
\begin{minipage}{.44\linewidth}
\begin{align}
  \theta^{(t+1)} \in \rho\mbox{-}\argmin_{\theta\in\calH} \: \hat{\Lambda}(\theta, \lambda^{(t)}) \label{eq:primal-step}
\end{align}
\end{minipage}%
\begin{minipage}{.52\linewidth}
\begin{align}
  \lambda^{(t+1)}(e) \gets \left[\lambda^{(t)}(e) + \eta \left(\hat{\calL}^e(\theta) - \gamma \right)\right]_+ \label{eq:dual-step}
\end{align}
\end{minipage}
\end{minipage}
\vspace{0.5em}

\noindent Here $[\cdot]_+ = \max\{0, \cdot\}$, $\eta > 0$ is the dual step size, and $\rho\mbox{-}\argmin$ denotes a solution that is $\rho$-close to being a minimizer, i.e.\ it holds that
\begin{align}
    \hat{\Lambda}(\theta^{(t+1)}, \lambda^{(t)}) \leq \min_{\theta\in\calH} \hat{\Lambda}(\theta, \lambda^{(t)}) + \rho.
\end{align}
\noindent For clarity, we refer to \eqref{eq:primal-step} as the primal step, and we call \eqref{eq:dual-step} the dual step.  

The utility of running this primal-dual scheme is as follows.  It can be shown that if this iteration is run for sufficiently many steps and with small enough step size, the iteration convergences with high probability to a solution which closely approximates the solution to Problem~\ref{prob:model-based-domain-gen}.  In particular, this result is captured in the following theorem\footnote{For clarity, we state this theorem informally in the main text; a full statement of the theorem and proof are provided in Appendix~\ref{sect:primal-dual-conv}.}:

\begin{theorem}[Primal-dual convergence] \label{thm:primal-dual}
Assuming that $\ell$ and $d$ are $[0,B]$-bounded, $\calH$ has finite VC-dimension, and under mild regularity conditions on \eqref{eq:param-empir-dual}, the primal-dual pair $(\theta^{(T)}, \lambda^{(T)})$ obtained after running the alternating primal-dual iteration in \eqref{eq:primal-step} and \eqref{eq:dual-step} for $T$ steps with step size $\eta$, where
\begin{align}
    T \triangleq \left\lceil \frac{1}{2\eta \kappa} \right\rceil + 1 \qquad\text{and}\qquad \eta\leq \frac{2\kappa}{|\Etrain|B^2}
\end{align}
satisfies the following inequality:
\begin{align}
    |P^\star - \hat{\Lambda}(\theta^{(T)}, \mu^{(T)})| \leq K(\rho, \kappa, \gamma) + \mathcal{O}\left(\sqrt{\log(N)/N}\right).
\end{align}
Here $\kappa = \kappa(\epsilon)$ is a constant that captures the regularity of the parametric space $\calH$ and $K(\rho, \kappa, \gamma)$ is a small constant depending linearly on $\rho$, $\kappa$, and $\gamma$.
\end{theorem}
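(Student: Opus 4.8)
The plan is to interpret the dual update \eqref{eq:dual-step} as an inexact projected supergradient ascent scheme on the concave empirical dual function $\hat{d}(\lambda) \triangleq \min_{\theta\in\calH}\hat{\Lambda}(\theta,\lambda)$, to establish its ergodic convergence to the optimal dual value $D^\star_{\epsilon,N,\Etrain}(\gamma)$, and then to close the remaining gap to $P^\star$ by invoking the data-dependent duality gap bound of Theorem~\ref{thm:duality-gap}. Throughout, I take $\mu^{(T)} \triangleq \frac{1}{T}\sum_{t=1}^{T}\lambda^{(t)}$ to be the running average of the dual iterates, which is the object appearing in the statement.

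First I would record the structural facts that drive the argument. Since $\hat{\Lambda}(\theta,\lambda)$ is affine in $\lambda$, the dual function $\hat{d}$ is concave as a pointwise minimum of affine functions, and for any $\theta'$ the vector with entries $\tfrac{1}{|\Etrain|}(\hat{\calL}^e(\theta') - \gamma)$ is a supergradient of $\hat{d}$ at $\lambda$ whenever $\theta'$ minimizes $\hat{\Lambda}(\cdot,\lambda)$; the $\rho$-approximate minimization in \eqref{eq:primal-step} then means the dual step uses an \emph{inexact} supergradient, with the inexactness controlled by $\rho$. The boundedness assumption $\ell,d\in[0,B]$ furnishes a uniform bound on the norm of these supergradients, since each coordinate $\hat{\calL}^e(\theta)-\gamma$ lies in a bounded interval; this is exactly what is needed to control the second-order term in the descent inequality.

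Next I would run the standard potential-function analysis. For a fixed comparator $\lambda$ in the nonnegative orthant, expanding $\norm{\lambda^{(t+1)}-\lambda}^2$ using nonexpansiveness of the projection $[\cdot]_+$ and the inexact-supergradient inequality yields, after telescoping over $t=1,\dots,T$ and dividing by $T$,
\begin{align}
  \hat{d}(\lambda) - \frac{1}{T}\sum_{t=1}^{T}\hat{d}(\lambda^{(t)})
  \;\leq\; \frac{\norm{\lambda^{(1)}-\lambda}^2}{2\eta T} + \frac{\eta}{2}\,G^2 + C\rho, \notag
\end{align}
where $G^2 = \mathcal{O}(|\Etrain|B^2)$ bounds the squared supergradient norm and $C\rho$ collects the inexactness error. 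Taking $\lambda$ to be a dual optimizer and using concavity of $\hat{d}$ (Jensen's inequality) to pass from the average of $\hat{d}(\lambda^{(t)})$ to $\hat{d}(\mu^{(T)})$, the choices $T=\lceil 1/(2\eta\kappa)\rceil+1$ and $\eta\leq 2\kappa/(|\Etrain|B^2)$ make $\eta T = \Theta(1/\kappa)$ and force both the first and second terms to be $\mathcal{O}(\kappa)$, giving $D^\star_{\epsilon,N,\Etrain}(\gamma) - \hat{d}(\mu^{(T)}) = \mathcal{O}(\kappa) + C\rho$.

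Finally I would convert this dual-value convergence into the claimed bound on $\hat{\Lambda}(\theta^{(T)},\mu^{(T)})$ and transfer it to $P^\star$. The first step uses the $\rho$-approximate optimality of the primal iterate together with the near-feasibility implied by the dual dynamics to sandwich $\hat{\Lambda}(\theta^{(T)},\mu^{(T)})$ between $\hat{d}(\mu^{(T)})$ and $D^\star_{\epsilon,N,\Etrain}(\gamma)$ up to terms linear in $\rho$, $\kappa$, and $\gamma$, which I collect into the constant $K(\rho,\kappa,\gamma)$. The second step is a one-line triangle inequality: writing
\begin{align}
  |P^\star - \hat{\Lambda}(\theta^{(T)},\mu^{(T)})|
  \leq |P^\star - D^\star_{\epsilon,N,\Etrain}(\gamma)| + |D^\star_{\epsilon,N,\Etrain}(\gamma) - \hat{\Lambda}(\theta^{(T)},\mu^{(T)})|, \notag
\end{align}
I bound the first summand by Theorem~\ref{thm:duality-gap} (which contributes the $\mathcal{O}(\sqrt{\log(N)/N})$ statistical term, with the $L\gamma$ and $\epsilon$-parameterization contributions absorbed into $K$ and into the $\epsilon$-dependence of $\kappa$) and the second by the preceding optimization analysis. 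I expect the main obstacle to be precisely this last conversion: because the primal minimization over $\theta\in\calH$ is nonconvex, I cannot appeal to primal strong duality, and must instead exploit the affine-in-$\lambda$ structure of $\hat{\Lambda}$ and the ergodic-averaging identity to relate the Lagrangian at the final primal-dual pair to the empirical dual optimum without ever solving the inner primal problem exactly. Carefully tracking how the $\rho$-inexactness propagates through the averaging—so that it enters only linearly in $K$—is the most delicate bookkeeping in the proof.
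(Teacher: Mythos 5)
Your decomposition is not the one the paper uses, and as written it has one genuine hole. The paper's own proof is essentially three lines: it splits $|P^\star - \hat{\Lambda}(\theta^{(T)},\mu^{(T)})|$ by the triangle inequality through $P^\star(\gamma)$, bounds $|P^\star - P^\star(\gamma)|\leq L\gamma$ via Remark~\ref{rmk:gamma-remark}, and then quotes Theorem~2 of~\cite{chamon2021constrained} wholesale for the remaining bound $|P^\star(\gamma)-\hat{\Lambda}(\theta^{(T)},\mu^{(T)})| \leq \rho + M\nu + \mathcal{O}(\sqrt{\log(N)/N})$; all of the optimization-theoretic content is outsourced to that citation. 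You instead pivot through the empirical dual optimum $D^\star_{\epsilon,N,\Etrain}(\gamma)$, invoke Theorem~\ref{thm:duality-gap} for the statistical and parameterization terms, and attempt to prove the primal-dual convergence from scratch via inexact projected supergradient ascent. That alternative decomposition is coherent and avoids the external dependence, and your supergradient machinery (concavity of $\hat{d}(\lambda)=\min_{\theta\in\calH}\hat{\Lambda}(\theta,\lambda)$, $\rho$-inexact supergradients, gradient bounds from $[0,B]$-boundedness, telescoping plus Jensen) is the standard and correct route to $D^\star_{\epsilon,N,\Etrain}(\gamma) - \hat{d}(\mu^{(T)}) \leq \mathcal{O}(\kappa) + C\rho$ with the stated $T$ and $\eta$. (One side remark: the dual comparator in your potential-function bound must exist and have finite norm, which requires the strict-feasibility condition that the paper's formal statement in Appendix~\ref{sect:primal-dual-conv} assumes explicitly; you should surface it rather than fold it into ``mild regularity conditions.'')

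The gap is the final conversion, which you flag as ``delicate bookkeeping'' but never carry out --- and it is not bookkeeping, it is the crux of the theorem you would be re-proving. Your sandwich needs $\hat{\Lambda}(\theta^{(T)},\mu^{(T)}) \leq D^\star_{\epsilon,N,\Etrain}(\gamma) + \mathcal{O}(\rho+\kappa+\gamma)$, but $\theta^{(T)}$ is a $\rho$-minimizer of $\hat{\Lambda}(\cdot,\lambda^{(T-1)})$, not of $\hat{\Lambda}(\cdot,\mu^{(T)})$. Using affineness in $\lambda$,
\begin{align}
\hat{\Lambda}(\theta^{(T)},\mu^{(T)}) = \hat{\Lambda}(\theta^{(T)},\lambda^{(T-1)}) + \frac{1}{|\Etrain|}\sum_{e\in\Etrain}\left[\hat{\calL}^e(\theta^{(T)})-\gamma\right]\left(\mu^{(T)}(e)-\lambda^{(T-1)}(e)\right),
\end{align}
and while the first term is at most $\hat{d}(\lambda^{(T-1)})+\rho \leq D^\star_{\epsilon,N,\Etrain}(\gamma)+\rho$, the correction term is controlled only by $\norm{\mu^{(T)}-\lambda^{(T-1)}}$, and nothing in your argument makes this small: each dual coordinate moves by up to $\eta B$ per step, so over the horizon $T=\lceil 1/(2\eta\kappa)\rceil+1$ the last iterate can drift a distance of order $\eta B T \approx B/(2\kappa)$ from the trajectory average $\mu^{(T)}$ --- a quantity that grows, rather than shrinks, as $\kappa\to 0$. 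What your ergodic analysis actually controls is the average $\frac{1}{T}\sum_{t}\hat{\Lambda}(\theta^{(t+1)},\lambda^{(t)})$ of correctly matched pairs (each within $\rho$ of $\hat{d}(\lambda^{(t)})$), or a best/randomized iterate along the trajectory; promoting that to a statement about the single mismatched pair $(\theta^{(T)},\mu^{(T)})$ requires the boundedness of the dual iterates (via strict feasibility) and the approximate complementary-slackness argument that constitute precisely the result of~\cite{chamon2021constrained} that the paper cites. To repair your proof, either restate the conclusion for the matched trajectory average or best iterate, or reproduce that bounded-iterate argument explicitly.
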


\noindent This theorem means that by solving the empirical, parameterized dual problem in~\ref{eq:param-empir-dual} for sufficiently many steps with small enough step size, we can reach a solution that is close to solving the Model-Based Domain Generalization problem in Problem \ref{prob:model-based-domain-gen}.  In essence, the proof of this fact is a corollary of Theorem~\ref{thm:duality-gap} in conjunction with the recent literature concerning constrained PAC learning~\cite{chamon2021constrained} (see Appendix~\ref{sect:pacc}).

\begin{algorithm}[t]
   \caption{Model-Based Domain Generalization (MBDG)}
   \label{alg:mbst}
\begin{algorithmic}[1]
   \State {\bfseries Hyperparameters:} Primal step size $\eta_p > 0$,  dual step size $\eta_d \geq 0$, margin $\gamma > 0$
   \Repeat
   \For{minibatch $\{(x_j, y_j)\}_{j=1}^m$ in training dataset $\cup_{e\in\Etrain} \calD^e$}
   \State $\tilde{x}_j \gets \Call{GenerateImage}{x_j}$ $\forall j\in[m]$ \Comment{Generate model-based images}
   \State \text{distReg}$(\theta) \gets (1/m)\sum_{j=1}^m d(\varphi(\theta, x_j), \varphi(\theta, \tilde{x}_j))$ \Comment{Calculate distance regularizer}
   \State $\text{loss}(\theta) \gets (1/m) \sum_{j=1}^m  \ell\left(x_j, y_j; \varphi(\theta, \cdot)\right)$ \Comment{Calculate classification loss}
   \State $\theta \gets \theta - \eta_p \nabla_\theta [ \: \text{loss}(\theta) + \lambda \cdot  \text{distReg}(\theta) \: ]$ \Comment{Primal step for $\theta$}
   \State $\lambda \gets \left[ \lambda + \eta_d \left( \text{distReg}(\theta) - \gamma \right)\right]_+$ \Comment{Dual step for $\lambda$}
   \EndFor
   \Until{convergence} \\
\Procedure{GenerateImage}{$x^e$}
    \State $(x,e) \gets H(x^e)$ \Comment{Decompose $x^e$ into $x$ and $e$}
    \State Sample $e'\sim\mathcal{N}(0, I)$ \Comment{$e'$ is a latent code for MUNIT}
    \State \textbf{return} $G(x,e')$ \Comment{Return image produced by MUNIT}
\EndProcedure
\end{algorithmic}
\end{algorithm}

\subsection{Implementation of MBDG}  

In practice, we modify the primal-dual iteration in several ways to engender a more practical algorithmic scheme.  To begin, we remark that while our theory calls for data drawn from $\Prob(X,Y)$, in practice we only have access to finitely-many samples from $\Prob(X^e,Y^e)$ for $e\in\Etrain$.  However, note that the $G$-invariance condition implies that when \eqref{eq:param-empir-dual} is feasible, $\varphi(\theta, x) \approx \varphi(\theta, x^e)$ when $x^e\sim\Prob(X^e)$ and $x^e = G(x,e)$, where $x\sim\Prob(X)$.  Therefore, the data from $\cup_{e\in\Etrain} \calD^e$ is a useful proxy for data drawn from $\Prob(X,Y)$.   Furthermore, because (a) it may not be tractable to find a $\rho$-minimizer over $\calH$ at each iteration and (b) there may be a large number of domains in $\Etrain$, we propose two modifications of the primal-dual iteration in which we replace \eqref{eq:primal-step} with a stochastic gradient step and we use only one dual variable for all of the domains.  We call this algorithm MBDG; pseudocode is provided in Algorithm~\ref{alg:mbst}.

\paragraph{Walking through Algorithm~\ref{alg:mbst}.}  In Algorithm~\ref{alg:mbst}, we outline two main procedures.  In lines 12-15, we describe the \Call{GenerateImage}{$x^e$} procedure, which takes an image $x^e$ as input and returns an image that has been passed through a learned domain transformation model.  The MUNIT architecture uses a normally distributed latent code to vary the environment of a given image.  Thus, whenever \Call{GenerateImage}{} is called, an environmental latent code $e'\sim\mathcal{N}(0,I)$ is sampled and then passed through $G$ along with the disentangled input image.  

In lines 4-8 of Algorithm~\ref{alg:mbst}, we show the main training loop for MBDG.  In particular, after generating new images using the \Call{GenerateImage}{} procedure, we calculate the loss term $\text{loss}(\theta)$ and the regularization term $\text{distReg}(\theta)$, both of which are defined in the empirical, parameterized dual problem in~\eqref{eq:param-empir-dual}.  Note that we choose to enforce the constraints between $x^e = G(x,e)$ and $x^{e'} = G(x,e')$, so that $\text{distReg}(\theta) = (1/m)\sum_{j=1}^m d(\varphi(\theta, x^e), \varphi(\theta, x^{e'})$.  We emphasize that this is completely equivalent to enforcing the constraints between $x^e = G(x,e)$ and $x$, in which the regulizer would be $\text{distReg}(\theta) = (1/m)\sum_{j=1}^m d(\varphi(\theta, x^e), \varphi(\theta, x)$.  Next, in line 7, we perform the primal SGD step on $\theta$, and then in line 8, we perform the dual step on $\lambda$.  Throughout, we use the KL-divergence for the distance function $d$ in the $G$-invariance term $\text{distReg}(\theta)$.

\begin{figure}
    \centering
    \begin{subfigure}[b]{0.48\textwidth}
        \includegraphics[width=0.9\textwidth]{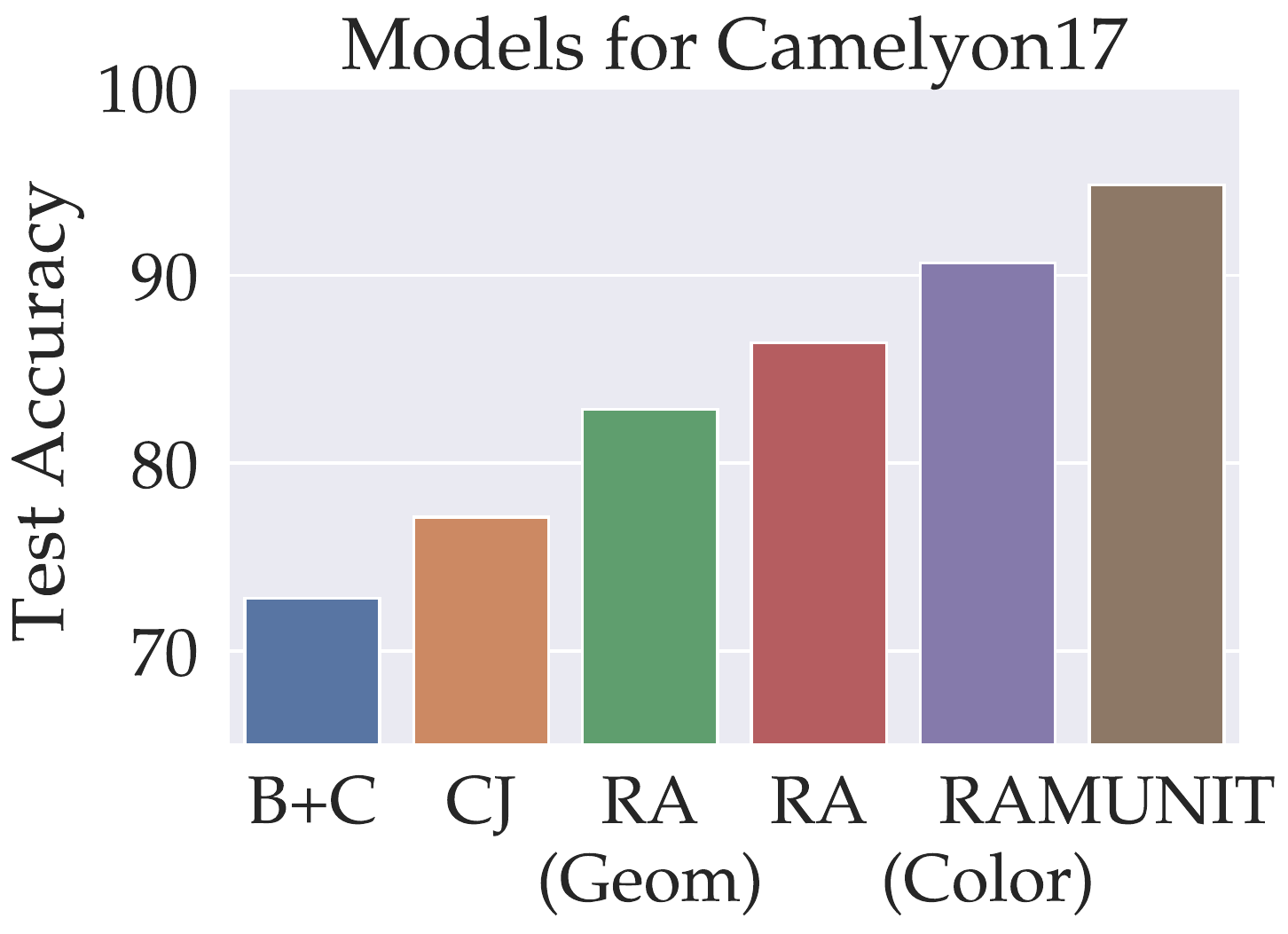}
        \captionof{figure}{\textbf{Known vs.\ learned models.}  We compare the performance of MBDG for known models (first five columns) against a model that was trained with the data from the training domains using MUNIT.}
        \label{fig:diff-models}
    \end{subfigure} \hfill
    \begin{subfigure}[b]{0.48\textwidth}
    \centering
    \includegraphics[width=0.87\textwidth]{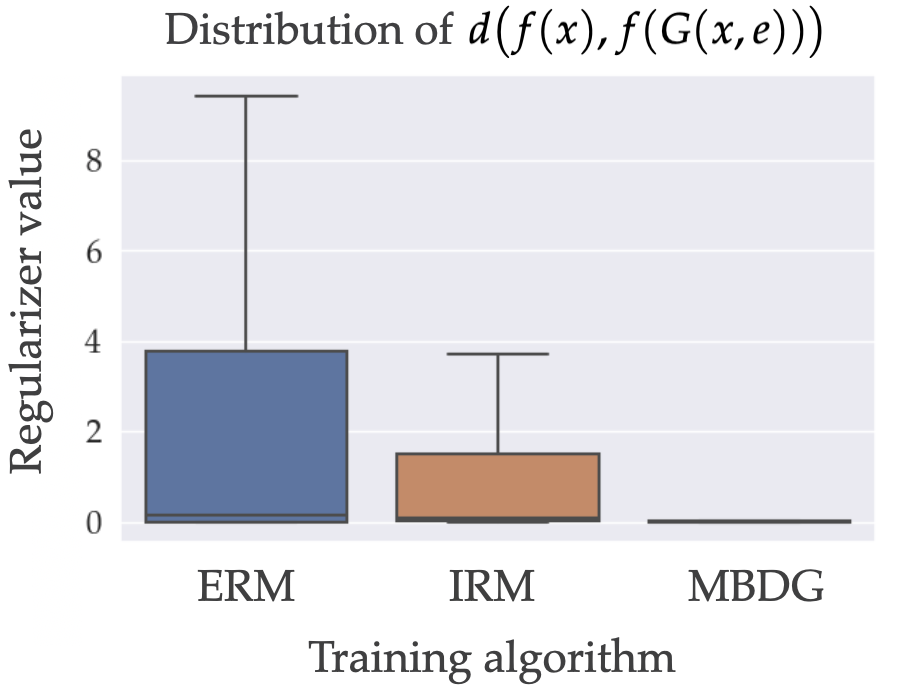}
    \captionof{figure}{\textbf{Measuring $G$-invariance.}  We show the distribution of $\text{distReg}(\theta)$ calculated in line~5 of Algorithm~\ref{alg:mbst} for classifiers trained using ERM, IRM, and MBDG on \texttt{Camelyon17-WILDS}.}
    \label{fig:measuring-invar}
    \end{subfigure}
    \caption{\textbf{Camelyon17-WILDS analysis.}  In (a) we show the benefit of learning $G$ from data as opposed to replacing $G$ with standard data-augmentation transformations; in (b) we measure $G$-invariance over the training data, showing that ERM and IRM are not invariant to $G$.}
\end{figure}

\section{Experiments}\label{sect:experiments}

We now evaluate the performance of MBDG on a range of standard domain generalization benchmarks.  In the main text, we present results for \texttt{ColoredMNIST}, \texttt{Camelyon17-WILDS}, \texttt{FMoW-WILDS}, and \texttt{PACS}; we defer results for \texttt{VLCS} to the supplemental.  For \texttt{ColoredMNIST}, \texttt{PACS}, and \texttt{VLCS}, we used the DomainBed\footnote{\url{https://github.com/facebookresearch/DomainBed}} package \cite{gulrajani2020search}, facilitating comparison to a range of baselines.  Model selection for each of these datasets was performed using hold-one-out cross-validation.  For \texttt{Camelyon17-WILDS} and \texttt{FMoW-WILDS}, we used the repository provided with the WILDS dataset suite\footnote{\url{https://github.com/p-lambda/wilds}}, and we performed model-selection using the out-of-distribution validation set provided in the WILDS repository.  Further details concerning hyperparameter tuning and model selection are deferred to Appendix~\ref{sect:further-exps}.

\subsection{Camelyon17-WILDS and FMoW-WILDS}

We first consider the \texttt{Camelyon17-WILDS} and \texttt{FMoW-WILDS} datasets from the WILDS family of domain generalization benchmarks \cite{koh2020wilds}.  \texttt{Camelyon17} contains roughly 400k $96\times96$ images of potentially cancerous cells taken at different hospitals, whereas \texttt{FMoW-WILDS} contains roughly 500k $224\times224$ images of aerial scenes characterized by different forms of land use.  Thus, both of these datasets are significantly larger than \texttt{ColoredMNIST} in both the number of images and the dimensionality of each image.  In Table~\ref{tab:wilds}, we report classification accuracies for MBDG and a range of baselines on both \texttt{Camelyon17-WILDS} and \texttt{FMOW-WILDS}.  Of particular interest is the fact that MBDG improves by more than 20 percentage points over the state-of-the-art baselines on \texttt{Camelyon17-WILDS}.  On \texttt{FMoW-WILDS}, we report a relatively modest improvement of around one percentage point.  

\newlength{\oldintextsep}
\setlength{\oldintextsep}{\intextsep}

\setlength\intextsep{-5pt}

\begin{wraptable}{r}{0.45\textwidth}
\begin{center}
\adjustbox{max width=\textwidth}{%
\scalebox{0.95}{
\begin{tabular}{lcc}
\toprule
\textbf{Algorithm}   & \textbf{Camelyon17} & \textbf{FMoW} \\
\midrule
ERM                  & 73.3 $\pm$ 9.9   &  51.3 (0.4)            \\
IRM                  & 60.9 $\pm$ 15.3  &  51.1 (0.4)            \\
ARM                  &  62.1 $\pm$ 6.4 &  47.9 (0.3)            \\
CORAL                  & 59.2 $\pm$ 15.1  &  49.6 (0.5)              \\
\midrule
MBDG                   & \textbf{94.8 $\pm$ 0.4}  &  \textbf{52.3 $\pm$ 0.5}            \\
\bottomrule
\end{tabular}}}
\end{center}
\caption{\textbf{WILDS accuracies.} We report classification accuracies for \texttt{Camelyon17-WILDS} and \texttt{FMoW-WILDS}.  For both datasets, we used the out-of-distribution validation set provided in the WILDS repository to perform model selection.}
\label{tab:wilds}
\end{wraptable}

In essence, the significant improvement we achieve on \texttt{Camelyon17-WILDS} is due to the ability of the learned model to vary the coloration and brightness in the images.  In the second row of Table~\ref{tab:model-based-samples}, observe that the input image is transformed so that it resembles images from the other domains shown in Figure~\ref{fig:domain-gen-outline}.  Thus, the ability of MBDG to enforce invariance to the changes captured by the learned domain transformation model is the key toward achieving strong domain generalization on this benchmark.  To further study the benefits of enforcing the $G$-invariance constraint, we consider two ablation studies on \texttt{Camelyon17-WILDS}.

\paragraph{Measuring the $G$-invariance of trained classifiers.}  In Section \ref{sect:mbdg}, we restricted our attention predictors satisfying the $G$-invariance condition.  To test whether our algorithm successfully enforces $G$-invariance when a domain transformation model $G$ is learned from data, we measure the distribution of distReg$(\theta)$ over all of the instances from the training domains of \texttt{Camelyon17-WILDS} for ERM, IRM, and MBDG.  In Figure~\ref{fig:measuring-invar}, observe that whereas MBDG is quite robust to changes under $G$, ERM and IRM are not nearly as robust.   This property is key to the ability of MBDG to learn invariant representations across domains.

\paragraph{Ablation on learning models vs.\ data augmentation.} 

As shown in Table \ref{tab:model-based-samples} and in Appendix \ref{sect:dtms}, accurate approximations of an underlying domain transformation model can often be learned from data drawn from the training domains.
However, rather than learning $G$ from data, a heuristic alternative is to replace the $\Call{GenerateImage}{}$ procedure in Algorithm \ref{alg:mbst} with standard data augmentation transformations.  In Figure \ref{fig:diff-models}, we investigate this approach with five different forms of data augmentation: B+C (brightness and contrast), CJ (color jitter), and three variants of RandAugment~\cite{cubuk2020randaugment} (RA, RA-Geom, and RA-Color).  More details regarding these data augmentation schemes are given in Appendix~\ref{sect:further-exps}.  The bars in Figure~\ref{fig:diff-models} show that although these schemes offer strong performance in our MBDG framework, the learned model trained using MUNIT offers the best OOD accuracy.

\subsection{ColoredMNIST}

\setlength{\columnsep}{20pt}%
\begin{wraptable}{r}{0.55\textwidth}
\centering
\scalebox{0.85}{
\begin{tabular}{lcccc}
\toprule
\textbf{Algorithm}   & \textbf{+90\%}       & \textbf{+80\%}       & \textbf{-90\%}       & \textbf{Avg}         \\
\midrule
ERM                  & 50.0 $\pm$ 0.2       & 50.1 $\pm$ 0.2       & 10.0 $\pm$ 0.0       & 36.7                 \\
IRM                  & 46.7 $\pm$ 2.4       & 51.2 $\pm$ 0.3       & 23.1 $\pm$ 10.7      & 40.3                 \\
GroupDRO             & 50.1 $\pm$ 0.5       & 50.0 $\pm$ 0.5       & 10.2 $\pm$ 0.1       & 36.8                 \\
Mixup                & 36.6 $\pm$ 10.9      & 53.4 $\pm$ 5.9       & 10.2 $\pm$ 0.1       & 33.4                 \\
MLDG                 & 50.1 $\pm$ 0.6       & 50.1 $\pm$ 0.3       & 10.0 $\pm$ 0.1       & 36.7                 \\
CORAL                & 49.5 $\pm$ 0.0       & 59.5 $\pm$ 8.2       & 10.2 $\pm$ 0.1       & 39.7                 \\
MMD                  & 50.3 $\pm$ 0.2       & 50.0 $\pm$ 0.4       & 9.9 $\pm$ 0.2        & 36.8                 \\
DANN                 & 49.9 $\pm$ 0.1       & 62.1 $\pm$ 7.0       & 10.0 $\pm$ 0.1       & 40.7                 \\
CDANN                & 63.2 $\pm$ 10.1      & 44.4 $\pm$ 4.5       & 9.9 $\pm$ 0.2        & 39.1                 \\
MTL                  & 44.3 $\pm$ 4.9       & 50.7 $\pm$ 0.0       & 10.1 $\pm$ 0.1       & 35.0                 \\
SagNet               & 49.9 $\pm$ 0.4       & 49.7 $\pm$ 0.3       & 10.0 $\pm$ 0.1       & 36.5                 \\
ARM                  & 50.0 $\pm$ 0.3       & 50.1 $\pm$ 0.3       & 10.2 $\pm$ 0.0       & 36.8                 \\
VREx                 & 50.2 $\pm$ 0.4       & 50.5 $\pm$ 0.5       & 10.1 $\pm$ 0.0       & 36.9                 \\
RSC                  & 49.6 $\pm$ 0.3       & 49.7 $\pm$ 0.4       & 10.1 $\pm$ 0.0       & 36.5                 \\
\midrule
MBDA                  & 72.0 $\pm$ 0.1       & 50.7 $\pm$ 0.1       & 22.5 $\pm$ 0.0      & 48.3                 \\
MBDG-DA                 & 72.7 $\pm$ 0.2       & 71.4 $\pm$ 0.1       & 33.2 $\pm$ 0.1      & 59.0                  \\

MBDG-Reg                  & 73.3 $\pm$ 0.0      & \textbf{73.7 $\pm$ 0.0}      & 27.2 $\pm$ 0.1     & 58.1              \\   
\midrule
MBDG                  & \textbf{73.7 $\pm$ 0.1}       & 68.4 $\pm$ 0.0       & \textbf{63.5 $\pm$ 0.0}      & \textbf{68.5}                 \\       
\bottomrule
\end{tabular}}
\caption{\textbf{ColoredMNIST accuracies.}  We report classification accuracies for \texttt{ColoredMNIST}.  Model-selection was performed via hold-one-out cross-validation.}
\label{tab:cmnist}
\end{wraptable}

\setlength\intextsep{-15pt}

We next consider the \texttt{ColoredMNIST} dataset \cite{arjovsky2019invariant}, which is a standard domain generalization benchmark created by colorizing subsets of the MNIST dataset~\cite{lecun2010mnist}.  This dataset contains three domains, each of which is characterized by a different level of correlation between the label and digit color.  The domains are constructed so that the colors are more strongly correlated with the labels than with the digits.  Thus, as was argued in~\cite{arjovsky2019invariant}, stronger domain generalization on \texttt{ColoredMNIST} can be obtained by eliminating color as a predictive feature.  

As shown in Table \ref{tab:cmnist}, despite the fact that the data generating procedure used to construct this dataset does not fulfill Assumptions~\ref{assume:gen-model} and~\ref{assume:cov-shift} (see Figure~\ref{fig:spur-corr-causal}), the MBDG algorithm still improves over each baseline by nearly thirty percentage points.  Indeed, due to way the \texttt{ColoredMNIST} dataset is constructed, the best possible result is an accuracy of 75\%.  Thus, the fact that MBDG achieves 68.5\% accuracy when averaged over the domains means that it is close to achieving perfect domain generalization.  

\begin{figure}[t]
    \centering
    \begin{subfigure}[b]{0.40\textwidth}
        \centering
        \includegraphics[width=0.9\textwidth]{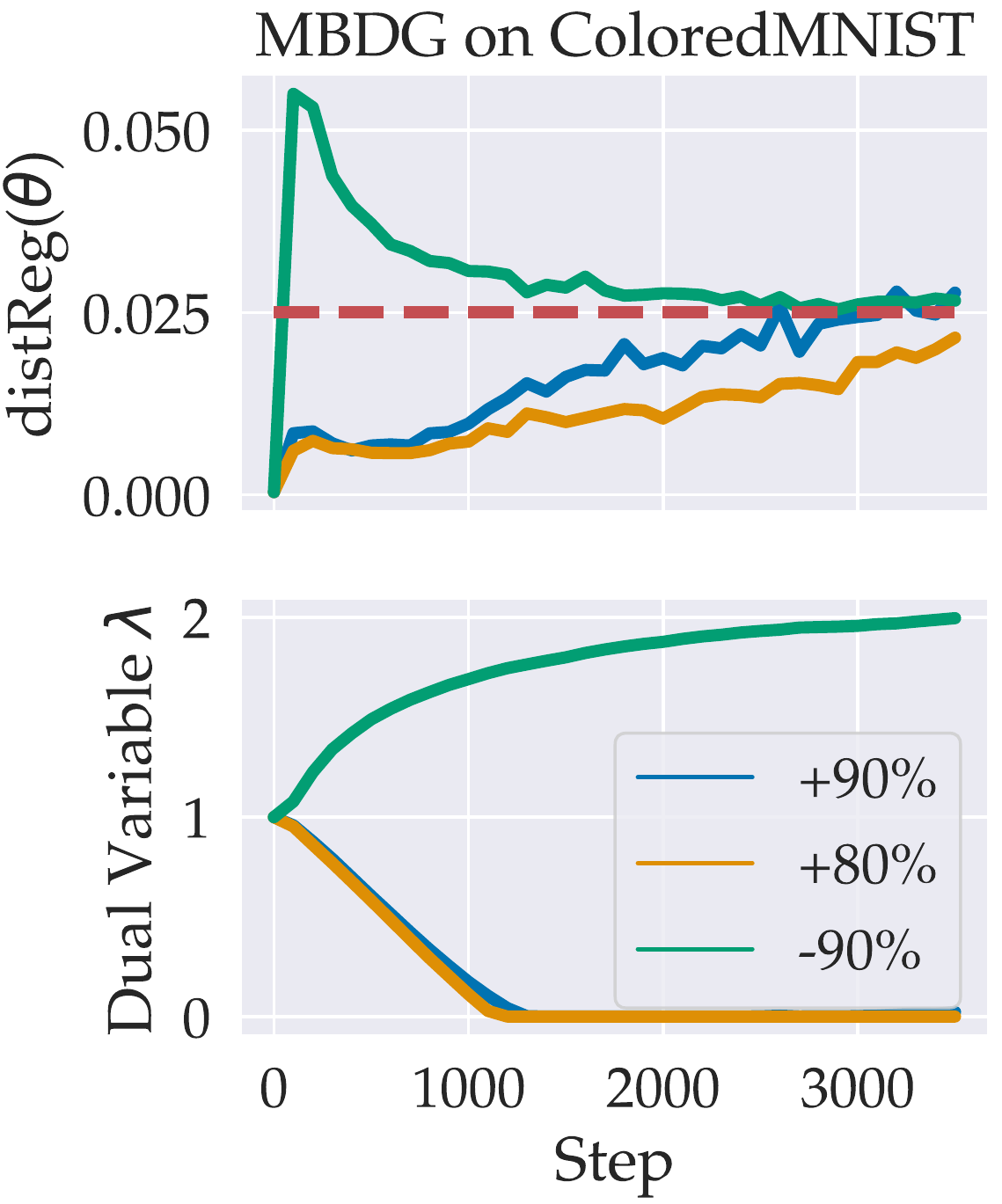}
        \caption{\textbf{Tracking the dual variables.}  We show the value of distReg$(\theta)$ and the dual variables $\lambda$ for each MBDG classifier in Table~\ref{tab:cmnist}.  The margin $\gamma = 0.025$ is shown in red.}
        \label{fig:cmnist-dual-var}
    \end{subfigure}\hfill
    \begin{subfigure}[b]{0.58\textwidth}
        \centering
        \includegraphics[width=0.9\textwidth]{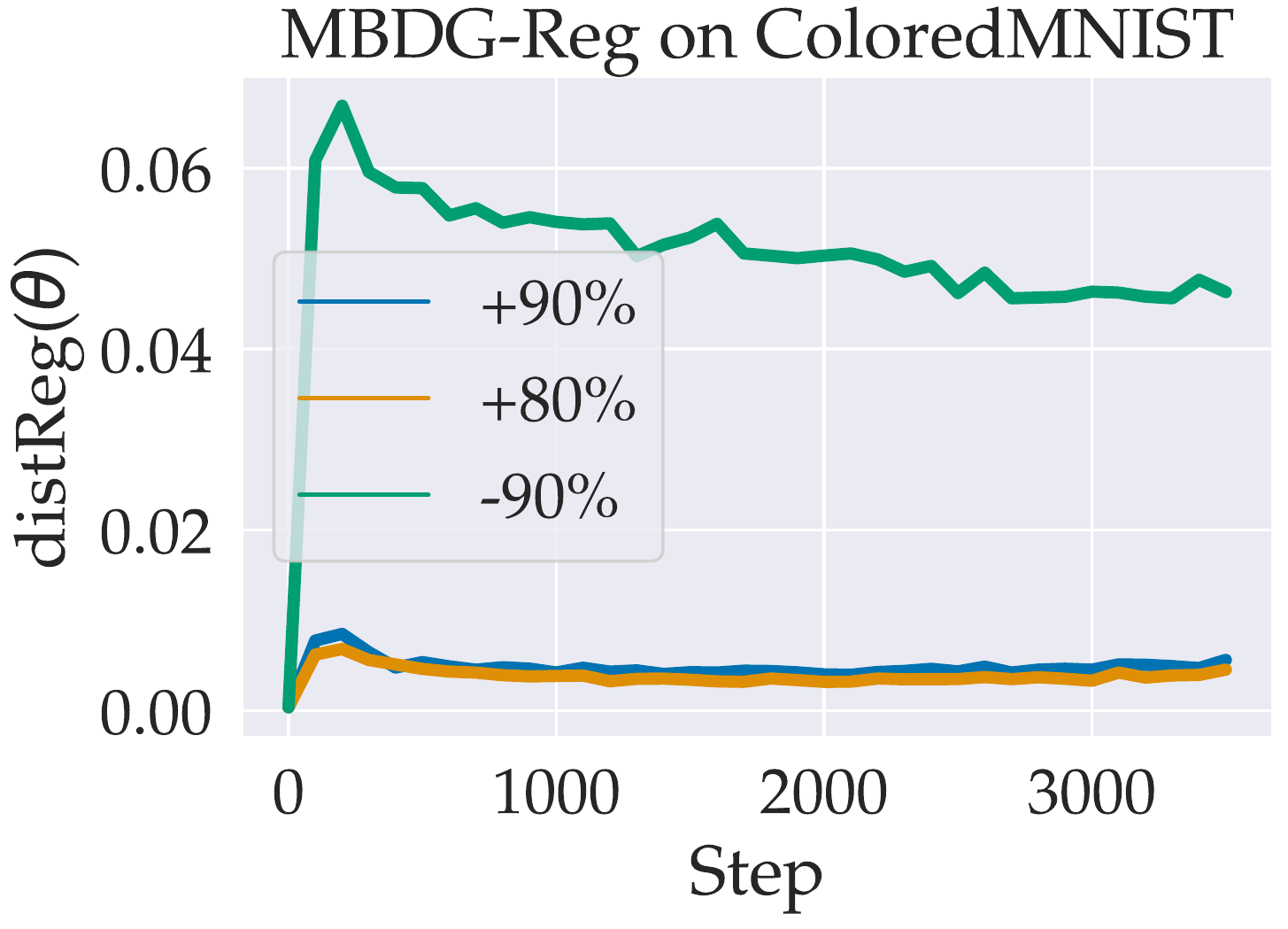}
        \caption{\textbf{Regularized MBDG.}  We show the value of the $\text{distReg}(\theta)$ term for each domain in \texttt{ColoredMNIST} for fixed dual variables $\lambda$.  This corresponds to the MBDG-Reg row in Table~\ref{tab:cmnist}.  Note that the +90\% constraint never reaches the margin $\gamma = 0.025$.}
      \label{fig:reg}
    \end{subfigure}
    \caption{\textbf{Primal-dual ascent vs.\ regularization on ColoredMNIST.}  We compare the constraint satisfaction of (a) the primal-dual ascent method described in Algorithm~\ref{alg:mbst} and (b) a regularized version of MBDG.  Notice that whereas the dual variable update step used in (a) pushes the value of $\text{distReg}(\theta)$ for the -90\% test domain (shown in green) down to the margin of $\gamma = 0.025$, the regularized version shown in (b) does not enforce constraint satisfaction.}
\end{figure}

To understand the reasons behind this improvement, consider the first row of Table~\ref{tab:model-based-samples}.  Notice that whereas the input image shows a red `5', samples from the learned domain transformation model show the same `5' colored green.  Thus, the $G$-invariance constraint calculated in line 5 of Algorithm~\ref{alg:mbst} forces the classifier $f$ to predict the same label for both the red `5' and the green `5'.  Therefore, in essence the $G$-invariance constraint explicitly eliminates color as a predictive feature, resulting in the strong performance shown in Table~\ref{tab:cmnist}.  To further evaluate the MBDG algorithm and its performance on \texttt{ColoredMNIST}, we consider three ablation studies.

\paragraph{Tracking the dual variables.}  For the three MBDG classifiers selected by cross-validation at the bottom of Table~\ref{tab:cmnist}, we plot the constraint term distReg$(\theta)$ and the corresponding dual variable at each training step in Figure~\ref{fig:cmnist-dual-var}.  Observe that for the +90\% and +80\% domains, the dual variables decay to zero, as the constraint is satisfied early on in training.  On the other hand, the constraint for the -90\% domain is not satisfied early on in training, and in response, the dual variable increases, gradually forcing constraint satisfaction.  As we show in the next subsection, without the dual update step, the constraints may never be satisfied (see Figure \ref{fig:reg}).  This underscores the message of Theorem~\ref{thm:primal-dual}, which is that the primal dual method can be used to enforce constraint satisfaction for Problem~\ref{prob:model-based-domain-gen}, resulting in stronger invariance across domains.

\paragraph{Regularization vs.\ dual asce gnt.}  A common trick for encouraging constraint satisfaction in deep learning is to introduce soft constraints by adding a regularizer multiplied by a fixed penalty weight to the objective.  While this approach yields a related problem to \eqref{eq:param-empir-dual} where the dual variables are fixed (see Appendix \ref{sect:reg-vs-primal-dual}), there are few formal guarantees for this approach and tuning the penalty weight can require expert or domain-specific knowledge.  

In Table~\ref{tab:cmnist}, we show the performance of a regularized version of MBDG (MBDG-Reg in Table~\ref{tab:cmnist}) where the dual variable is fixed during training (see Appendix~\ref{sect:reg-variant} for pseudocode).  Note that while the performance of MBDG-Reg improves significantly over the baselines, it lags more than ten percentage points behind MBDG.  Furthermore, consider that relative to Figure \ref{fig:cmnist-dual-var}, the value of distReg($\theta$) shown in~\ref{fig:reg} is much larger than the margin of $\gamma=0.025$ used in Figure~\ref{fig:cmnist-dual-var}, meaning that the constraint is not being satisfied when running MBDG-Reg.  Therefore, while regularization offers a heuristic alternative to MBDG, the primal-dual approach offers both stronger guarantees as well as superior performance.

\paragraph{Ablation on data augmentation.}  To study the efficacy of the primal-dual approach taken by the MBDG algorithm toward improving the OOD accuracy on the test domain, we consider two natural alternatives MBDG: (1) ERM with data augmentation through the learned model $G(x,e)$ (MBDA); and (2) MBDG with data augmentation through $G(x,e)$ on the training objective (MBDG-DA).  We provide psuedocode and further discussion of both of these methods in Appendix~\ref{sect:data-aug-algs}.  As shown at the bottom of Table \ref{tab:cmnist}, while these variants significantly outperform the baselines, they not perform nearly as well as MBDG.  Thus, while data augmentation can in some cases improve performance, the primal-dual iteration is a much more effective tool for enforcing invariance across domains.

\subsection{PACS}

\begin{table}
\centering
\adjustbox{max width=\textwidth}{%
\scalebox{0.9}{
\begin{tabular}{lccccc}
\toprule
\textbf{Algorithm}   & \textbf{A}           & \textbf{C}           & \textbf{P}           & \textbf{S}           & \textbf{Avg}         \\
\midrule
ERM                  & 83.2 $\pm$ 1.3       & 76.8 $\pm$ 1.7       & \textbf{97.2 $\pm$ 0.3}       & 74.8 $\pm$ 1.3       & 83.0                 \\
IRM                  & 81.7 $\pm$ 2.4       & 77.0 $\pm$ 1.3       & 96.3 $\pm$ 0.2       & 71.1 $\pm$ 2.2       & 81.5                 \\
GroupDRO             & 84.4 $\pm$ 0.7       & 77.3 $\pm$ 0.8       & 96.8 $\pm$ 0.8       & 75.6 $\pm$ 1.4       & 83.5                 \\
Mixup                & 85.2 $\pm$ 1.9       & 77.0 $\pm$ 1.7       & 96.8 $\pm$ 0.8       & 73.9 $\pm$ 1.6       & 83.2                 \\
MLDG                 & 81.4 $\pm$ 3.6       & 77.9 $\pm$ 2.3       & 96.2 $\pm$ 0.3       & 76.1 $\pm$ 2.1       & 82.9                 \\
CORAL                & 80.5 $\pm$ 2.8       & 74.5 $\pm$ 0.4       & 96.8 $\pm$ 0.3       & 78.6 $\pm$ 1.4       & 82.6                 \\
MMD                  & 84.9 $\pm$ 1.7       & 75.1 $\pm$ 2.0       & 96.1 $\pm$ 0.9       & 76.5 $\pm$ 1.5       & 83.2                 \\
DANN                 & 84.3 $\pm$ 2.8       & 72.4 $\pm$ 2.8       & 96.5 $\pm$ 0.8       & 70.8 $\pm$ 1.3       & 81.0                 \\
CDANN                & 78.3 $\pm$ 2.8       & 73.8 $\pm$ 1.6       & 96.4 $\pm$ 0.5       & 66.8 $\pm$ 5.5       & 78.8                 \\
MTL                  & \textbf{85.6 $\pm$ 1.5}       & 78.9 $\pm$ 0.6       & 97.1 $\pm$ 0.3       & 73.1 $\pm$ 2.7       & 83.7                 \\
SagNet               & 81.1 $\pm$ 1.9       & 75.4 $\pm$ 1.3       & 95.7 $\pm$ 0.9       & 77.2 $\pm$ 0.6       & 82.3                 \\
ARM                  & 85.9 $\pm$ 0.3       & 73.3 $\pm$ 1.9       & 95.6 $\pm$ 0.4       & 72.1 $\pm$ 2.4       & 81.7                 \\
VREx                 & 81.6 $\pm$ 4.0       & 74.1 $\pm$ 0.3       & 96.9 $\pm$ 0.4       & 72.8 $\pm$ 2.1       & 81.3                 \\
RSC                  & 83.7 $\pm$ 1.7       & \textbf{82.9 $\pm$ 1.1}       & 95.6 $\pm$ 0.7       & 68.1 $\pm$ 1.5       & 82.6                 \\
\midrule
MBDG                 & 80.6 $\pm$ 1.1       & 79.3 $\pm$ 0.2        & 97.0 $\pm$ 0.4       & \textbf{85.2 $\pm$ 0.2}        &  \textbf{85.6}                 \\
\bottomrule
\end{tabular}}}
\caption{\textbf{PACS.}  We report classification accuracies for \texttt{PACS}.  Model-selection was performed via hold-one-out cross-validation.}
\label{tab:pacs}
\end{table}

In this subsection, we provide results for the standard \texttt{PACS} benchmark.  This dataset contains four domains of $224\times224$ images; the domains are ``art/paining'' (A), ``cartoon'' (C), ``photo'' (P), and ``sketch'' (S).  In the fourth row of Table~\ref{tab:model-based-samples}, we show several samples for one of the domain transformation models used for the PACS dataset.  Further, Table~\ref{tab:pacs} shows that MBDG achieves 85.6\% classification accuracy (averaged across the domains), which is the best known result for \texttt{PACS}.  In particular, this result is nearly two percentage points higher than any of the baselines, which represents a significant advancement in the state-of-the-art for this benchmark.  In large part, this result is due to significant improvements on the ``Sketch'' (S) subset, wherein MBDG improves by nearly seven percentage points over all other baselines.
\section{Conclusion}

In this paper, we introduced a new framework for domain generalization called Model-Based Domain Generalization.  In this framework, we showed that under a natural model of data generation and a concomitant notion of invariance, the classical domain generalization problem is equivalent to a semi-infinite constrained statistical learning problem.  We then provide a theoretical, duality based perspective on problem, which results in a novel primal-dual style algorithm that improves by up to 30 percentage points over state-of-the-art baselines.

\newpage

\bibliography{bibliography}
\bibliographystyle{unsrt}

\newpage
\appendix
\section{Further theoretical results and discussion}

\subsection{On the optimality of relaxation of Problem \texorpdfstring{\ref{prob:model-based-domain-gen}}{Lg} in \texorpdfstring{\eqref{eq:relax-mbdg}}{Lg}}

In Section \ref{sect:approx-mbdg} of the main text, we claimed that the relaxation introduced in \eqref{eq:relax-mbdg} was tight under certain conditions.  In this section of the appendix, we formally enumerate the conditions under which the relaxation is tight.  Further, we show that the tightness of the relaxation can be characterized by the margin parameter $\gamma$.

\subsubsection{The case when \texorpdfstring{$\gamma = 0$}{Lg}}
\label{sect:relaxing-invar-prop}

In Section \ref{sect:approx-mbdg}, we claimed that the relaxation of the Model-Based Domain Generalization problem given in \eqref{eq:relax-mbdg} was tight when $\gamma = 0$ under mild conditions on the distance metric $d$.  In particular, we simply require that $d(\Prob, \mathbb{T}) = 0$ if and only if $\Prob = \mathbb{T}$ almost surely.  We emphasize that this condition is not overly restrictive.  Indeed, a variety of distance metrics, including the KL-divergence and more generally the family of $f$-divergences, satisfy this property (c.f.\ \cite[Theorem~8.6.1]{cover1999elements}).  In what follows, we formally state and prove this result.

\begin{proposition} \label{prop:relaxation}
Let $d$ be a distance metric between probability measures for which it holds that $d(\Prob, \mathbb{T}) = 0$ for two distributions $\Prob$ and $\mathbb{T}$ if and only if $\Prob = \mathbb{T}$ almost surely.  Then $P^\star(0) = P^\star$.
\end{proposition}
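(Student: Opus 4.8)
The plan is to show that the two optimization problems \eqref{eq:model-based-domain-gen} and \eqref{eq:relax-mbdg} (the latter taken at $\gamma = 0$) share identical feasible sets; since they also share the same objective $R(f)$, their optimal values $P^\star$ and $P^\star(0)$ must then coincide. Thus the entire argument reduces to establishing, for each $e\in\Eall$ separately, a pointwise equivalence between the relaxed inequality constraint $\mathcal{L}^e(f)\leq 0$ and the $G$-invariance equality constraint $f(x) = f(G(x,e))$ a.e.

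First I would observe that because $d$ takes values in $\R_{\geq 0}$, the functional $\mathcal{L}^e(f) = \E_{\Prob(X)} d\big(f(X), f(G(X,e))\big)$ is the expectation of a nonnegative integrand and is therefore itself nonnegative. Consequently the relaxed constraint $\mathcal{L}^e(f) \leq 0$ can hold only with equality, i.e.\ it is equivalent to $\mathcal{L}^e(f) = 0$. Next I would invoke the standard fact that a nonnegative random variable with vanishing expectation is zero almost surely: from $\E_{\Prob(X)} d\big(f(X), f(G(X,e))\big) = 0$ we conclude that $d\big(f(x), f(G(x,e))\big) = 0$ for $\Prob(X)$-almost every $x$. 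At this point the hypothesis on $d$ enters decisively: since $d(\Prob, \mathbb{T}) = 0$ if and only if $\Prob = \mathbb{T}$ almost surely, the vanishing of $d\big(f(x), f(G(x,e))\big)$ is equivalent to $f(x) = f(G(x,e))$ holding almost surely for $x\sim\Prob(X)$ — which is precisely the $G$-invariance equality constraint appearing in \eqref{eq:model-based-domain-gen}. Running this chain of equivalences in both directions shows that, for each fixed $e$, a predictor satisfies the $\gamma = 0$ relaxed constraint if and only if it is $G$-invariant with respect to $e$; intersecting over all $e\in\Eall$ then yields equality of the two feasible sets.

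I do not expect a serious obstacle here: the argument is essentially the identity-of-indiscernibles property of $d$ combined with the almost-everywhere characterization of zero-expectation nonnegative functions. The one point requiring care is the interpretation of the predictor's output: because $d$ is a metric on $\calP(\calY)$, one must read $f(x)$ and $f(G(x,e))$ as the label distributions produced by $f$, so that the equality $f(X) = f(G(X,e))$ is understood as equality of these output distributions, consistent with the $G$-invariance constraint. With that convention fixed, each implication is routine, and the stated condition on $d$ is exactly what upgrades a one-sided inclusion into the full equivalence needed to conclude $P^\star(0) = P^\star$.
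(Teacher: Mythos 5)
Your proposal is correct and follows essentially the same route as the paper's own proof: nonnegativity of $d$ reduces the inequality constraint to $\mathcal{L}^e(f)=0$, the zero-expectation characterization of nonnegative random variables localizes this to $d\bigl(f(x), f(G(x,e))\bigr)=0$ almost everywhere, and the identity-of-indiscernibles hypothesis on $d$ converts that into the $G$-invariance equality, giving identical feasible sets and hence $P^\star(0)=P^\star$. Your added remark about reading $f(x)$ as an output distribution in $\calP(\calY)$ is a sensible clarification but does not alter the argument.
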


\begin{proof}
The idea in this proof is simply to leverage the fact a non-negative random variable has expectation zero if and only if it is zero almost everywhere.  For ease of exposition, we remind the reader of the definition of the relaxed constraints: $\calL^e(f) := \E_{\Prob(X)} d(f(X), f(G(X,e)) )$.

First, observe that because $d(\cdot, \cdot)$ is a metric, it is non-negative-valued.  Then the following statement is trivial
\begin{align}
    \calL^e(f) \leq 0 \iff \calL^e(f) = 0.
\end{align}
Next, we claim that under the assumptions given in the statement of the proposition, $\calL^e(f) = 0$ is equivalent to the $G$-invariance condition.  To verify this claim, for simplicity we start by defining the random variable
\begin{align}
    Z_e \triangleq d\big(f(X), f(G(X, e)) \big)
\end{align}
and note that by construction $Z_e\geq 0$ a.e.\ and $\calL^e(f) = \E_{\Prob(X)} Z_e$.  Now consider that because $Z_e$ is non-negative and has an expectation of zero, we have that $\E_{\Prob(X)} Z_e = 0$ if and only if $Z_e = 0$ almost surely (c.f.\ Prop.\ 8.1 in \cite{bass2013real}).  In other words, we have shown that
\begin{align}
    \calL^e(f) = 0 \iff d\big(f(x), f(G(x,e)) \big) = 0 \quad \text{a.e. } \: x\sim\Prob(X) \label{eq:ae-iff-zero-exp}
\end{align}
holds for each $e\in\Eall$.  Now by assumption, we have that for any two distributions $\Prob$ and $\mathbb{T}$ sharing the same support that $d(\Prob, \mathbb{T}) = 0$ holds if and only if $\Prob = \mathbb{T}$ almost surely.  Applying this to \eqref{eq:ae-iff-zero-exp}, we have that
\begin{align}
    \calL^e(f) = 0 \iff f(x) = f(G(x,e)) \quad \text{a.e. } \: x\sim\Prob(X).
\end{align}
Altogether we have shown that $\calL^e(f) \leq 0$ if and only if $f$ is $G$-invariant.  Thus, when $\gamma = 0$, the optimization problems in \eqref{eq:model-based-domain-gen} and \eqref{eq:relax-mbdg} are equivalent, which implies that $P^\star(0) = P^\star$. 
\end{proof}

\subsection{The case when \texorpdfstring{$\gamma > 0$}{Lg}} \label{sect:gamma-greater-than-zero}

When $\gamma > 0$, the relaxation is no longer tight.  However, if the perturbation function $P^\star(\gamma)$ is assumed to be Lipschitz continuous, we can directly characterize the tightness of the bound.

\begin{remark} \label{rmk:gamma-remark}
Let us assume that the perturbation function $P^\star(\gamma)$ is $L$-Lipschitz continuous in $\gamma$.  Then given Proposition \ref{prop:relaxation}, it follows that $|P^\star - P^\star(\gamma)| \leq L\gamma$.  
\end{remark}

\begin{proof}
Observe that by Proposition \ref{prop:relaxation}, we have that $P^\star = P^\star(0)$.  It follows that
\begin{align}
    |P^\star - P^\star(\gamma)| &= |P^\star(0) - P^\star(\gamma)| \\
    &\leq L|0 - \gamma| \label{eq:apply-lipschitz} \\
    &= L\gamma
\end{align}
where the inequality in \eqref{eq:apply-lipschitz} follows by the definition of Lipschitz continuity.
\end{proof}

\noindent We note that in general the perturbation function $P^\star(\gamma)$ cannot be guaranteed to be Lipschitz.  However, as we will show in Remark~\ref{rmk:opt-dual-var}, when strong duality holds for~\eqref{eq:model-based-domain-gen}, $P^\star(\gamma)$ turns out to be Lipschitz continuous with a Lipschitz constant equal to the $L^1$ norm of optimal dual variable for the dual problem to~\eqref{eq:model-based-domain-gen}.  Before proving this result, we state a preliminary lemma from~\cite{boyd2004convex}.
\begin{lemma}[\S 5.6.2 in~\cite{boyd2004convex}] \label{lemma:pert}
Consider a generic optimization problem
\begin{align}
    p^\star \triangleq \min_{x\in\R^d} \: f_0(x) \quad\text{subject to} \quad  f_i(x) \leq 0 \quad\forall i\{1, \dots, m\}.
\end{align}
Assume that strong duality holds for this problem, and let $\lambda^\star$ denote an optimal dual variable.  Define the perturbation function as follows:
\begin{align}
    p^\star(u) \triangleq \min_{x\in\R^d} \: f_0(x) \quad\text{subject to}\quad f_i(x) \leq u_i \quad\forall i\in\{1,\dots,m\}
\end{align}
where $u\in\R^m$.  Then it holds that $p^\star(u) \geq p^\star - u^\top \lambda^\star$.
\end{lemma}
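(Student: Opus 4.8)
The plan is to exploit the fact that the optimal dual variable $\lambda^\star$ of the \emph{unperturbed} problem still furnishes a valid lower bound on the objective of every \emph{perturbed} problem, through the Lagrangian. I would begin by introducing the Lagrangian $L(x,\lambda) = f_0(x) + \sum_{i=1}^m \lambda_i f_i(x)$ and the dual function $g(\lambda) = \inf_{x\in\R^d} L(x,\lambda)$. By strong duality for the unperturbed problem, $p^\star = g(\lambda^\star)$, and by dual feasibility we have $\lambda^\star \geq 0$ componentwise; these are the only two facts about $\lambda^\star$ that the argument will use.

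First I would dispose of the degenerate case: if the perturbed feasible set is empty then $p^\star(u) = +\infty$ and the claimed inequality holds trivially, so I may assume it is nonempty. I would then fix an arbitrary $x$ with $f_i(x) \leq u_i$ for all $i$ and establish the key chain
\begin{align}
    p^\star = g(\lambda^\star) \leq L(x,\lambda^\star) = f_0(x) + \sum_{i=1}^m \lambda_i^\star f_i(x) \leq f_0(x) + \sum_{i=1}^m \lambda_i^\star u_i,
\end{align}
where the first inequality is the infimum bound defining $g$, and the last uses $f_i(x) \leq u_i$ together with $\lambda_i^\star \geq 0$ to preserve the direction of each summand. Rearranging yields $f_0(x) \geq p^\star - u^\top \lambda^\star$.

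Since this lower bound holds for every $x$ feasible for the perturbed problem and its right-hand side does not depend on $x$, I would finish by taking the infimum over all such $x$, obtaining $p^\star(u) \geq p^\star - u^\top \lambda^\star$.

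The argument is short and essentially routine, so I do not anticipate a genuine obstacle; the only place demanding care is invoking each hypothesis exactly where it is needed. Strong duality is used solely to identify $p^\star$ with $g(\lambda^\star)$; the sign condition $\lambda^\star \geq 0$ is what allows the perturbation inequalities $f_i(x)\leq u_i$ to pass through the weighted sum without reversing; and the infimum bound $g(\lambda^\star)\leq L(x,\lambda^\star)$ requires no feasibility of $x$ for the \emph{unperturbed} problem, which is precisely why the single dual variable $\lambda^\star$ simultaneously controls all perturbations $u$. The main (mild) subtlety is therefore this bookkeeping rather than any substantive difficulty.
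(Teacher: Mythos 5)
Your proof is correct and is essentially the same argument the paper relies on, namely the one-line sensitivity bound from \S 5.6.2 of Boyd and Vandenberghe: use strong duality to write $p^\star = g(\lambda^\star)$, bound $g(\lambda^\star) \leq L(x,\lambda^\star) \leq f_0(x) + u^\top \lambda^\star$ for any $x$ feasible for the perturbed problem, and take the infimum over such $x$. Your explicit handling of the infeasible case and the remark that $x$ need not be feasible for the unperturbed problem are minor but welcome clarifications of the same proof.
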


\noindent This useful result, which follows from a simple one-line proof in \S 5.6.2 of~\cite{boyd2004convex}, shows that the perturbation function $p^\star(u)$ can be related to the optimal value of the unperturbed problem via the optimal dual variable.  We can readily use a semi-infinite version of this lemma to prove the following remark:

\begin{remark} \label{rmk:opt-dual-var}
Consider the dual problem to~\eqref{eq:model-based-domain-gen}:
\begin{align}
    D^\star \triangleq \max_{\lambda\in\calB(\Eall)} \: \min_{f\in\calF} \: R(f) + \int_{\Eall} \left[ L^e(f) - \gamma\right] \text{d}\nu(e)
\end{align}
where $\calB(\cdot)$ denotes the cone of non-regular, non-negative Borel measures supported on its argument~\cite{goberna2017recent}.  Assume that strong duality holds, and let $\nu^\star$ denote an optimal dual variable for this problem.  Then it holds that 
\begin{align}
    |P^\star - P^\star(\gamma)| \leq \gamma \norm{\nu^\star}_{L^1}.
\end{align}
\end{remark}

\begin{proof}
The idea here is to apply Lemma~\ref{lemma:pert} for the constant function defined by $u = u(e) = \gamma$ $\forall e\in\Eall$.  To begin, let $\langle\cdot, \cdot\rangle$ denote the standard inner product on $L^2$; i.e. $\langle f,g\rangle = \int_{\Eall} f(e)g(e)\text{d}e$ for $f,g\in L^2(\Eall)$.  In this way, we find that
\begin{align}
    P^\star - \langle u, \nu^\star\rangle \leq P^\star(\gamma) \leq P^\star \label{eq:pert-mb}
\end{align}
where the second inequality holds because for $\gamma$ strictly larger than zero, the relaxation in~\eqref{eq:relax-mbdg} corresponds to an expansion of the feasible set of relative to~\eqref{eq:model-based-domain-gen}.  In this case, since $u$ is constant, a simple calculation shows that
\begin{align}
    \langle u, \nu^\star\rangle = \int_{\Eall} \nu^\star(e) u(e)\text{d}e = \gamma \int_{\Eall} \nu^\star(e)\text{d}e = \gamma \cdot \norm{\nu^\star}_{L^1}
\end{align}
where in the last step we have used the fact that the optimal dual variable $\nu^\star \succeq 0$.  Now if we apply this result to~\eqref{eq:pert-mb}, we find that
\begin{align}
    P^\star - \gamma\norm{\nu^\star} \leq P^\star(\gamma) \leq P^\star,
\end{align}
which directly implies the desired result.
\end{proof}

\subsection{Relationship to constrained PAC learning} \label{sect:pacc}

Recently, the authors of \cite{chamon2020probably} introduced the Probably Approximately Correct Constrained (PACC) framework, which extends the classical PAC framework to constrained problems.  In particular, recall the following definition of agnostic PAC learnability:
\begin{definition}[PAC learnability]
A hypothesis class $\calH$ is said to be (agnostic) PAC learnable if for every $\epsilon,\delta\in(0,1)$ and every distribution $\Prob_0$, there exists a $\theta^\star\in\calH$ which can be obtained from $N\geq N_{\calH}(\epsilon,\delta)$ samples from $\Prob_0$ such that $\E \ell(\varphi(\theta, X),Y) \leq U^\star + \epsilon$ with probability $1-\delta$, where
\begin{align}
    U^\star \triangleq \minimize_{\theta\in\calH} \: \E_{\Prob_0(X,Y)} \ell(\varphi(\theta, X), Y)
\end{align}
\end{definition}
\noindent The authors of \cite{chamon2020probably} extended this definition toward studying the learning theoretic properties of constrained optimization problems of the form
\begin{alignat}{2}
    C^\star \triangleq &\minimize_{\theta\in\calH} \: &&\E_{\Prob_0(X,Y)} \ell_0(\varphi(\theta, X), Y) \label{eq:pacc} \\
    &\st &&\E_{\Prob_i(X,Y)} \ell_i(\varphi(\theta, X), Y) \leq c_i \quad\text{for } i\in\{1, \dots, m\} \\
    & &&\ell_j(\varphi(\theta, X), Y) \leq c_j \:\: \Prob_j-\text{a.e.} \quad \text{for } j\in\{m+1, \dots m+q\}
\end{alignat}
via the following definition:
\begin{definition}[PACC learnability]
A hypothesis class $\calH$ is said to be PACC learnable if for every $\epsilon,\delta\in(0,1)$ and every distribution $\calP_i$ for $i\in\{0, \dots, m+q\}$, there exists a $\theta^\star\in\calH$ which can be obtained from $N\geq N_{\calH}(\epsilon,\delta)$ samples from each of the distributions $\Prob_i$ such that, with probability $1-
\delta$, $\theta^\star$ is:
\begin{enumerate}
    \item[(1)] approximately optimal, meaning that
    \begin{align}
        \E_{\Prob_0} \ell_0(\varphi(\theta^\star, X),Y) \leq C^\star + \epsilon
    \end{align}
    \item[(2)] approximately feasible, meaning that
    \begin{align}
        &\E_{\Prob_i(X,Y)} \ell_i(\varphi(\theta, X), Y) \leq c_i + \epsilon \quad\text{for } i\in\{1, \dots, m\} \\
        &\ell_j(\varphi(X), Y) \leq c_j \:\: \forall (x,y)\in\mathcal{K}_j \quad\text{for } j \in \{m+1, \dots, m+q\}
    \end{align}
    where $\mathcal{K}_j\subseteq\calX\times\calY$ are sets of $\Prob_j$ measure at least $1-\epsilon$.
\end{enumerate}
\end{definition}
\noindent One of the main results in \cite{chamon2020probably} is that a hypothesis class $\calH$ is PAC learnable if and only if it is PACC learnable.

Now if we consider the optimization problem in \eqref{eq:pacc}, we see that the admissible constraints are both inequality constraints.  In contrast, the optimization problem in Problem \ref{prob:model-based-domain-gen} contains a family of equality constraints.  Thus, in addition to easing the burden of enforcing hard $G$-invariance, the relaxation in \eqref{eq:relax-mbdg} serves to manipulate the Model-Based Domain Generalization problem into a form compatible with \eqref{eq:pacc}.  This is one of the key steps that sets the stage for deriving the learning theoretic guarantees for Model-Based Domain Generalization (e.g.\ Theorems \ref{thm:duality-gap} and \ref{thm:primal-dual}).

\subsection{Regularization vs.\ dual ascent} \label{sect:reg-vs-primal-dual}

A common trick for encouraging constraint satisfaction is to introduce soft constraints by adding a regularizer multiplied by a fixed penalty weight to the objective.  As noted in Section \ref{sect:experiments}, this approach yields a similar optimization problem to \eqref{eq:param-empir-dual}.  In particular, the regularized version of \eqref{eq:param-empir-dual} is the following:
\begin{align}
    \hat{D}_{\epsilon,N,\Etrain}^\star \triangleq \minimize_{\theta\in\calH} \hat{R}(\theta) + \frac{1}{|\Etrain|}\sum\nolimits_{e\in\Etrain} \left[\hat{\mathcal{L}}^e(\theta) - \gamma\right] w(e) \label{eq:regularized-mbdg}
\end{align}
where $w(e) \geq 0$ $e\in\Etrain$ are weights that are chosen as hyperparameters.  From an optimization perspective, the benefit of such an objective is that gradient-based algorithms are known to converge to local minima given small enough step sizes~\eqref{eq:model-based-domain-gen}.  However, classical results in learning theory can only provide generalization guarantees on the aggregated objective, rather than on each term individually.  Furthermore, the choice of the penalty weights $w(e)$ is non-trivial and often requires significant domain knowledge, limiting the applicability of this approach.

In contrast, in primal-dual style algorithms, the weights $\lambda(e)$ are not fixed beforehand.  Rather, the $\lambda(e)$ are updated iteratively via the dual ascent step described in line 8 of Algorithm \ref{alg:mbst}.  Furthermore, as we showed in the main text, the optimal value of the primal problem $P^\star$ can be directly related to the solution of the empirical dual problem in \eqref{eq:param-empir-dual} via Theorem \ref{thm:duality-gap}.  Such guarantees are not possible in the regularization case, which underscores the benefits of the primal-dual iteration over the more standard regularization approach. 

\newpage
\section{Omitted proofs} \label{app:omitted-proofs}

In this appendix, we provide the proofs that were omitted in the main text.  For ease of exposition, we restate each result before proving it so that the reader can avoid scrolling back and forth between the main text and the appendices.

\subsection{Proof of Proposition\ \ref{prop:pull-back-domain-gen}} \label{sect:proof-of-pull-back}

\textbf{Proposition~\ref{prop:pull-back-domain-gen}.}  Under Assumptions~\ref{assume:gen-model} and~\ref{assume:cov-shift}, Problem \ref{prob:domain-gen} is equivalent to
\begin{align}
    \minimize_{f\in\cal F} \: \max_{e\in\Eall} \: \E_{\Prob(X,Y)}  \ell(f(G(X,e)), Y).
\end{align}

\begin{proof}
The main idea in this proof is the following.  First, we decompose the joint distribution $\Prob(X^e,Y^e)$ into $\Prob(Y^e|X^e)\cdot \Prob(X^e)$ to expand the risk term in the objective of \eqref{eq:domain-gen}.  Next, we leverage Assumptions~\ref{assume:gen-model} and~\ref{assume:cov-shift} to rewrite the outer and inner expectations engendered by the tower property.  Finally, we undo our expansion to arrive at at the statement of the proposition.

To begin, observe that by the our decomposition $\Prob(X^e,Y^e)=\Prob(Y^e|X^e)\cdot \Prob(X^e)$ of the joint expectation, we can rewrite the objective of \eqref{eq:domain-gen} in the following way:
\begin{align}
    \E_{\Prob(X^e,Y^e)} \ell(f(X^e), Y^e) = \E_{\Prob(X^e)} \left[ \E_{\Prob(Y^e|X^e)} \ell(f(X^e), Y^e) \right]. \label{eq:tower}
\end{align}
Then, recall that by Assumption \ref{assume:cov-shift}, we have that $\Prob(Y^e|X^e) = \Prob(Y|X)$ $\forall e \in\Eall$, i.e.\ the conditional distribution of labels given instances is the same across domains.  Thus, if we consider the inner expectation in \eqref{eq:tower}, it follows that
\begin{align}
     \E_{\Prob(Y^e|X^e)} \ell(f(X^e), Y^e) = \E_{\Prob(Y|X)} \ell(f(X), Y). \label{eq:use-cov-assump}
\end{align}
Now observe that under Assumption \ref{assume:gen-model}, we have that $\Prob(X^e) \overset{d}{=} G \: \# \: (\Prob(X), \delta_e)$.  Therefore, a simple manipulation reveals that
\begin{align}
    \E_{\Prob(X^e)} \left[\E_{\Prob(Y^e|X^e)} \ell(f(X), Y) \right] &= \E_{G\: \# \: (\Prob(X), \:\delta_e)} \left[\E_{\Prob(Y|X)} \ell(f(X), Y) \right] \\
    &= \E_{\Prob(X)} \left[ \E_{\Prob(Y|X)} \ell(f(G(X, e)), Y) \right] \\
    &= \E_{\Prob(X,Y)} \ell(f(G(X,e)), Y), \label{eq:reapply-tower}
\end{align}
where the final step again follows from the tower property of expectation.  Therefore, by combining \eqref{eq:tower} and \eqref{eq:reapply-tower}, we conclude that
\begin{align}
    \E_{\Prob(X^e,Y^e)} \ell(f(X^e), Y^e) = \E_{\Prob(X,Y)} \ell(f(G(X,e)), Y),
\end{align}
which directly implies the statement of the proposition.
\end{proof}

\subsection{Proof of Proposition \ref{prop:mbdg}} \label{sect:proof-of-mbdg}

\textbf{Proposition~\ref{prop:mbdg}.}  Under Assumptions~\ref{assume:gen-model} and~\ref{assume:cov-shift}, if we restrict the feasible set to the set of $G$-invariant predictors, then Problem \ref{prob:domain-gen} is equivalent to the following semi-infinite constrained problem:
\begin{alignat}{2}
    P^\star \triangleq &\minimize_{f\in\mathcal{F}} \: &&R(f) \triangleq \E_{\Prob(X,Y)} \ell(f(X),Y) \\
    &\st  &&f(x) = f(G(x,e)) \quad \text{ a.e. } x\sim \Prob(X) \:\: \forall e\in\Eall. \notag
\end{alignat}

\begin{proof}
The main idea in this proof is simply to leverage the definition of $G$-invariance and the result of Prop.\ \ref{prop:pull-back-domain-gen}.  Starting from Prop.\ \ref{prop:pull-back-domain-gen}, we see that by restricting the feasible set to the set of $G$ invariant predictors, the optimization problem in \eqref{eq:pull-back-dg} can be written as
\begin{alignat}{2}
    P^\star = &\minimize_{f\in\calF} \: &&\max_{e\in\Eall} \:\E_{\Prob(X,Y)} \ell(f(G(X,e)), Y) \\
    &\st && f(x) = f(G(x,e)) \quad \text{a.e.} x\sim \Prob(X), \: \forall e\in\Eall
\end{alignat}
Now observe that due to the constraint, we can replace the $f(G(X,e))$ term in the objective with $f(X)$.  Thus, the above problem is equivalent to
\begin{alignat}{2}
P^\star = &\minimize_{f\in\calF} \: &&\max_{e\in\Eall} \:\E_{\Prob(X,Y)} \ell(f(X), Y) \label{eq:max-without-e} \\
    &\st && f(x) = f(G(x,e)) \quad \text{a.e. } \: x\sim \Prob(X), \: \forall e\in\Eall
\end{alignat}
Now observe that the objective in \eqref{eq:max-without-e} is free of the optimization variable $e\in\Eall$.  Therefore, we can eliminate the inner maximization step in \eqref{eq:max-without-e}, which verifies the claim of the proposition.
\end{proof}

\subsection{Proof of Proposition \ref{prop:param-gap}} \label{app:proof-param-gap}

Before proving Proposition~\ref{prop:param-gap}, we formally state the assumptions we require on $\ell$ and $d$.  These assumptions are enumerated in the following Assumption:

\begin{assumption}\label{assume:lipschitz}
We make the following assumptions:
\begin{enumerate}
    \item The loss function $\ell$ is non-negative, convex, and $L_\ell$-Lipschitz continuous in it's first argument,~i.e.
    \begin{align}
        |\ell(f_1(x), y) - \ell(f_2(x),y)| \leq \norm{f_1(x) - f_2(x)}_\infty
    \end{align}
    \item The distance metric $d$ is non-negative, convex, and satisfies the following uniform Lipschitz-like inequality for some constant $L_d>0$:
    \begin{align}
        |d(f_1(x), f_1(G(x,e))) - d(f_2(x), f_2(G(x,e)))| \leq L_d \norm{f_1(x) - f_2(x)}_\infty \quad\forall e\in\Eall.
    \end{align}
    \item There exists a predictor $f\in\calF$ such that $\calL^e(f) < \gamma - \epsilon \cdot \max\{L_\ell, L_d\}$ $\forall e\in\Eall$.
\end{enumerate}
\end{assumption}

\noindent At a high level, these assumptions necessitate that $\ell$ and $d$ are sufficiently regular and that the problem is strictly feasible with a particular margin $\epsilon\cdot \max\{L_\ell, L_d\}$.  In particular, this final assumption is essential as it implies that strong duality holds for~\eqref{eq:relax-mbdg}, which is a key technical element of the proof.  Given these assumptions, we restate Proposition~\ref{prop:param-gap} below:\\

\noindent\textbf{Proposition~\ref{prop:param-gap}.}  Let $\gamma > 0$ be given.  Then under Assumption~\ref{assume:lipschitz}, it holds that
\begin{align}
    P^\star(\gamma) \leq D^\star_\epsilon(\gamma) \leq P^\star(\gamma) + \epsilon \left(1 + \norm{\lambda_\text{pert}^\star}_{L^1} \right) \cdot\max\{L_\ell, L_d\} \label{eq:upper-and-lower}
\end{align}
where $\lambda_\text{pert}^\star$ is the optimal dual variable for a perturbed version of~\eqref{eq:relax-mbdg} in which the constraints are tightened to hold with margin $\gamma - \epsilon\cdot\max\{L_\ell, L_d\}$.  In particular, this result implies that
\begin{align}
    |P^\star(\gamma) - D_\epsilon^\star(\gamma)| \leq \epsilon \left(1 + \norm{\lambda_\text{pert}^\star}_{L^1} \right) \cdot\max\{L_\ell, L_d\}
\end{align}

\begin{proof}
In this proof, we extend the results of \cite{chamon2020empirical} to optimization problems with an infinite number of constraints.  The key insight toward deriving the lower bound is to use the fact that maximizing over the $\epsilon$-parameterization of $\calF$ yields a sub-optimal result vis-a-vis maximizing over $\calF$.  On the other hand, the upper bound, which requires slightly more machinery, leverages Jensen's and H\"older's inequalities along with the definition of the $\epsilon$-parameterization to over-approximate the parameter space via a Lipschitz $\epsilon$-ball covering argument.

\textbf{Step 1.}  In the first step, we prove the lower bound in~\eqref{eq:upper-and-lower}. To begin, we define the dual problem to the relaxed Model-Based Domain Generalization problem in \eqref{eq:relax-mbdg} in the following way:
\begin{align}
    D^\star(\gamma) \triangleq \maximize_{\lambda\in\calB(\Eall)} \: \min_{f\in\calF } \: \Lambda(f, \lambda) \triangleq R(f) + \int_{\Eall} \left[\mathcal{L}^e(\varphi(\theta, \cdot)) - \gamma\right] \text{d}\lambda(e). \label{eq:true-dual}
\end{align}
where with a slight abuse of notation, we redefine the Lagrangian $\Lambda$ from \eqref{eq:param-dual} in its first argument.  Now recall that by assumption, there exists a predictor $f\in\calF$ such that $\calL(f) < \gamma$ $\forall e\in\Eall$.  Thus, Slater's condition holds \cite{boyd2004convex}, and therefore so too does strong duality.  Now let $f^\star$ be optimal for the primal problem \eqref{eq:relax-mbdg}, and let $\lambda^\star \in\calB(\Eall)$ be dual optimal for the dual problem \eqref{eq:true-dual}; that is,
\begin{align}
    f^\star \in\argmin_{f\in\calF} \: \max_{\lambda\in\calB(\Eall)} \: R(f) + \int_{\Eall} \left[\mathcal{L}^e(\varphi(\theta, \cdot)) - \gamma\right] \text{d}\lambda(e) \label{eq:def-of-primal-opt}
\end{align}
and 
\begin{align}
    \lambda^\star \in \argmax_{\lambda\in\calB(\Eall)} \: \min_{f\in\calF} \: R(f) + \int_{\Eall} \left[\mathcal{L}^e(\varphi(\theta, \cdot)) - \gamma\right] \text{d}\lambda(e) \label{def-of-dual-opt}
\end{align}
At this early stage, it will be useful to state the following saddle-point relation, which is a direct result of strong duality:
\begin{align}
    \Lambda(f^\star, \lambda') \leq \Lambda(f^\star, \lambda^\star) \leq \Lambda(f', \lambda^\star) \label{eq:orig-saddle-point}
\end{align}
which holds for all $f'\in\calF$ and for all $\lambda'\in\calB(\Eall)$.  Now consider that by the definition of the optimization problem in \eqref{eq:param-dual}, we have that
\begin{align}
    D^\star_\epsilon(\gamma) = \max_{\lambda\in\calB(\Eall)} \: \min_{\theta\in\calH} \: \Lambda(\theta, \lambda) \geq \min_{\theta\in\calH} \Lambda(\theta, \lambda') \quad\forall \lambda'\in\calB(\Eall).
\end{align}
Therefore, by choosing $\lambda' = \lambda^\star$ in the above expression, and since $\calA_\epsilon = \{\varphi(\theta, \cdot) : \theta\in\calH\} \subseteq \calF$ by the definition of an $\epsilon$-parametric approximation, we have that
\begin{align}
    D^\star_\epsilon(\gamma) \geq \min_{\theta\in\calH} \: \Lambda(\theta, \lambda^\star) \geq \min_{f\in\calF} \: \Lambda(f,\lambda^\star) = P^\star(\gamma).
\end{align} 
This concludes the proof of the lower bound: $P^\star(\gamma) \leq D^\star_\epsilon(\gamma)$.

\textbf{Step 2.} Next, we show that $D_\epsilon^\star(\gamma)$ is upper bounded by the optimal value of a perturbed version of the empirical dual problem.  To begin, we add and subtract $\min_{f\in\calF} \: \Lambda(f, \lambda)$ from the parameterized dual problem in \eqref{eq:param-dual}.
\begin{align}
    D_\epsilon^\star(\gamma) &=  \max_{\lambda\in\calB(\Eall)} \: \min_{\theta\in\mathcal{H}} \: \left[\Lambda(\theta, \lambda) + \min_{f\in\calF} \: \Lambda(f,\lambda) - \min_{f\in\calF} \: \Lambda(f,\lambda) \right] \\
    &= \max_{\lambda\in\calB(\Eall)} \: \min_{\substack{\theta\in\mathcal{H} \\ f\in\calF}} \: \Lambda(f, \lambda) + \big[ R(\varphi(\theta, \cdot)) - R(f) \big] + \int_{\Eall} \big[ \calL^e(\varphi(\theta, \cdot)) - \calL^e(f)\big] \text{d}\lambda(e) \label{eq:add-and-subtract}
\end{align}
Now let $\mu(e)$ denote any probability measure with support over $\Eall$.  Consider the latter two terms in the above problem, and observe that we can write 
\begin{align}
    &\big[R(\varphi(\theta, \cdot)) - R(f) \big] + \int_{\Eall} \big[ \calL^e(\varphi(\theta, \cdot)) - \calL^e(f)\big] \text{d}\lambda(e) \\
    &\qquad = \int_{\Eall} \big[R(\varphi(\theta, \cdot)) - R(f) \big] \mu(e)\text{d}e + \int_{\Eall} \big[ \calL^e(\varphi(\theta, \cdot)) - \calL^e(f)\big] \lambda(e) \text{d}e \\
    &\qquad = \int_{\Eall} \begin{bmatrix}R(\varphi(\theta, \cdot)) - R(f)  \\ \calL^e(\varphi(\theta, \cdot)) - \calL^e(f) \end{bmatrix}^\top \begin{bmatrix} \mu(e) \\ \lambda(e)\end{bmatrix} \text{d}e \\
    &\qquad \overset{(*)}{\leq} \int_{\Eall} \norm{\begin{bmatrix} \mu(e) \\ \lambda(e) \end{bmatrix}}_{1} \cdot \norm{\begin{bmatrix}R(\varphi(\theta, \cdot)) - R(f)  \\ \calL^e(\varphi(\theta, \cdot)) - \calL^e(f) \end{bmatrix}}_{\infty} \text{d}e \\
    &\qquad = \int_{\Eall} \big(\mu(e) + \lambda(e)\big) \cdot \max\left\{R(\varphi(\theta, \cdot)) - R(f), \calL^e(\varphi(\theta, \cdot)) - \calL^e(f)\right\}\text{d}e \\
    &\qquad\overset{(**)}{\leq} \norm{\mu + \lambda}_{L^1} \cdot \norm{\max\left\{R(\varphi(\theta, \cdot)) - R(f), \calL^e(\varphi(\theta, \cdot)) - \calL^e(f)\right\}}_{L^\infty} \\
    &\qquad \overset{(\square)}{\leq} (1 + \norm{\lambda}_{L^1}) \cdot \norm{\max\left\{R(\varphi(\theta, \cdot)) - R(f), \calL^e(\varphi(\theta, \cdot)) - \calL^e(f)\right\}}_{L^\infty}. \label{eq:bound-plus-one}
\end{align}
where $(*)$ and $(**)$ follows from separate applications of H\"older's ineqaulity \cite{stein2011functional}, and $(\square)$ follows from an application of Minkowski's inequality and from the fact that $\mu$ is a (normalized) probability distribution.  Let us now consider the second term in the above product:
\begin{align}
    &\norm{\max\left\{R(\varphi(\theta, \cdot)) - R(f), \calL^e(\varphi(\theta, \cdot)) - \calL^e(f)\right\}}_{L^\infty} \\
    &\quad = \norm{\max\{ \mathbb{E}[\ell(\varphi(\theta, X), Y) - \ell(f(X),Y)], \mathbb{E}[d(\varphi(\theta, X), \varphi(\theta, G(X,e))) - d(f(X), f(G(X,e)))] \}}_{L^\infty} \\
    &\quad \overset{(\circ)}{\leq} \norm{ \mathbb{E}\left[ \max\{|\ell(\varphi(\theta, X), Y) - \ell(f(X),Y)|, |d(\varphi(\theta, X), \varphi(\theta, G(X,e))) - d(f(X), f(G(X,e)))| \right] }_{L^\infty} \\
    &\quad \overset{(\triangle)}{\leq} \E\norm{  \max\{|\ell(\varphi(\theta, X), Y) - \ell(f(X),Y)|, |d(\varphi(\theta, X), \varphi(\theta, G(X,e))) - d(f(X), f(G(X,e)))|  }_{L^\infty} \\
    &\quad \leq \mathbb{E} \left[ \max\{L_\ell \norm{\varphi(\theta, X) - f(X)}_\infty, L_d \norm{\varphi(\theta, X) - f(X) }_\infty\} \right] \\
    &\quad = \max\{L_\ell, L_d\} \cdot \E \norm{\varphi(\theta, X) - f(X) }_\infty. \label{eq:bound-max}
\end{align}
where $(\circ)$ and $(\triangle)$ both follow from Jensen's inequality, and the final inequality follows from our Lipschitzness assumptions on $\ell$ and $d$.  For simplicity, let $c = \max\{L_\ell, L_d\}$.  Now returning to~\eqref{eq:add-and-subtract}, we can combine~\eqref{eq:bound-plus-one} and~\eqref{eq:bound-max} to obtain
\begin{align}
    D^\star_\epsilon(\gamma) &\leq \max_{\lambda\in\calB(\Eall)} \min_{\substack{\theta\in\calH \\ f\in\calF}} \Lambda(f,\lambda) + c(1+\norm{\lambda}_{L^1}) \cdot \E \norm{\varphi(\theta, X) - f(X) }_\infty \\
    &= \max_{\lambda\in\calB(\Eall)} \min_{f\in\calF} \Lambda(f,\lambda) + c(1 + \norm{\lambda}_{L^1}) \cdot  \min_{\theta\in\calH} \E \norm{\varphi(\theta, X) - f(X) }_\infty \\
    &\leq \max_{\lambda\in\calB(\Eall)} \min_{f\in\calF} \Lambda(f,\lambda) + c\epsilon(1 + \norm{\lambda}_{L^1}). \label{eq:upper-bound-pert}
\end{align}
Now let $D^\star_\text{pert}(\gamma)$ denote the optimal value of the above problem; that is,
\begin{align}
    D^\star_\text{pert}(\gamma) &\triangleq \max_{\lambda\in\calB(\Eall)} \min_{f\in\calF} \Lambda(f,\lambda) + c\epsilon(1 + \norm{\lambda}_{L^1}) \\
    &\qquad = \max_{\lambda\in\calB(\Eall)} \min_{f\in\calF} R(f) + ce + \int_{\Eall} \left[\calL^e(f) - \gamma + c\epsilon\right]\text{d}\lambda(e) \label{eq:perturbed-problem}
\end{align}

\textbf{Step 3.}  In the final step, we prove the theorem.  We begin with the perhaps unintuitive fact that the perturbed problem defined above is the dual problem to a perturbed version of the optimization problem in~\eqref{eq:relax-mbdg}.  More specifically, the perturbed problem in~\eqref{eq:perturbed-problem} is the dual of
\begin{alignat}{2}
    P^\star_\text{pert}(\gamma) \triangleq &\minimize_{f\in\calF} &&R(f) + c\epsilon \\
    &\st &&\calL^e(f) \leq \gamma - c\epsilon \quad\forall e\in\Eall.
\end{alignat}
Note that as this primal perturbed optimization problem is convex since~\eqref{eq:relax-mbdg} is convex, and by assumption strong duality also holds for this perturbed problem.  Let $(f_\text{pert}^\star, \lambda_\text{pert}^\star)$ be primal-dual optimal for the perturbed problems we have defined above.  The following saddle-point relation is evident from the fact that strong duality holds:
\begin{align}
    \Lambda(f_\text{pert}^\star, \lambda') + c\epsilon\left(1 + \norm{\lambda'}_{L^1}\right) \leq D_\text{pert}^\star(\gamma) = P_\text{pert}^\star(\gamma) \leq \Lambda(f', \lambda_\text{pert}^\star) + c\epsilon\left(1 + \norm{\lambda^\star_\text{pert}}_{L^1}\right)
\end{align}
where the inequalities hold for all $f'\in\calF$ and for all $\lambda'\in\calB(\Eall)$.  Using this result for the choice of $f' = f^\star$, where we recall that $f^\star$ is defined in~\eqref{eq:def-of-primal-opt} as the primal optimal solution to~\eqref{eq:relax-mbdg}, it follows from~\eqref{eq:upper-bound-pert} that
\begin{align}
    D_\epsilon^\star(\gamma) \leq D^\star_\text{pert}(\gamma) \leq \Lambda(f^\star, \lambda_\text{pert}^\star) + c\epsilon\left(1 + \norm{\lambda_\text{pert}^\star}_{L^1} \right) \label{eq:almost-done}
\end{align}
Now, recalling the original saddle-point relation in~\eqref{eq:upper-bound-pert}, it holds that $\Lambda(f^\star, \lambda_\text{pert}^\star) \leq \Lambda(f^\star, \lambda^\star)$.  Using this fact along with~\eqref{eq:almost-done} yields the following result:
\begin{align}
    D_\epsilon^\star(\gamma) \leq \Lambda(f^\star, \lambda^\star) + c\epsilon\left(1 + \norm{\lambda_\text{pert}^\star}_{L^1} \right) = P^\star(\gamma) + c\epsilon\left(1 + \norm{\lambda_\text{pert}^\star}_{L^1} \right)
\end{align}
This completes the proof.
\end{proof}

\subsection{Characterizing the empirical gap (used in Theorem~\ref{thm:duality-gap})}

\begin{proposition}[Empirical gap]\label{prop:empir-gap}
Assume $\ell$ and $d$ are non-negative and bounded in $[-B,B]$ and let $\vcdim$ denote the VC-dimension of the hypothesis class $\mathcal{A}_\epsilon$.  Then it holds with probability $1-\delta$ over the $N$ samples from each domain that
\begin{align}
    |D_\epsilon^\star(\gamma) - D^\star_{\epsilon,N,\Etrain}(\gamma)| \leq 2B \sqrt{\frac{1}{N} \left[1 + \log\left(\frac{4(2N)^{\vcdim}}{\delta}\right)\right]} \label{eq:restate-empirical-gap}
\end{align}
\end{proposition}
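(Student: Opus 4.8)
The plan is to bound the empirical gap by a two-stage argument: first reduce the difference of the two saddle-point values to a uniform deviation between the Lagrangian $\Lambda(\theta,\lambda)$ and its empirical counterpart $\hat\Lambda(\theta,\lambda)$, and then control that uniform deviation by a Vapnik--Chervonenkis concentration inequality. For the first stage I would use the elementary non-expansiveness of $\min$ and $\max$ under sup-norm perturbations of the objective: for each fixed $\lambda$ one has $|\min_{\theta\in\calH}\Lambda(\theta,\lambda) - \min_{\theta\in\calH}\hat\Lambda(\theta,\lambda)| \leq \sup_{\theta\in\calH}|\Lambda(\theta,\lambda) - \hat\Lambda(\theta,\lambda)|$, and taking the supremum over $\lambda$ of the resulting dual functions gives
\begin{align}
    |D_\epsilon^\star(\gamma) - D^\star_{\epsilon,N,\Etrain}(\gamma)| \leq \sup_{\theta,\lambda} |\Lambda(\theta,\lambda) - \hat\Lambda(\theta,\lambda)|. \notag
\end{align}
To make this comparison meaningful the two problems must be placed over a common dual feasible set, so I would normalize the empirical dual variable and regard $\lambda$ as ranging over probability measures supported on the (effective) domain set; strong duality together with Remark~\ref{rmk:opt-dual-var} justifies restricting to dual variables of bounded mass.

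Next I would expand the Lagrangian difference as $[R(\theta)-\hat R(\theta)] + \int (\mathcal{L}^e(\theta) - \hat{\mathcal{L}}^e(\theta))\,\mathrm{d}\lambda(e)$ and apply the same H\"older-type bound used in the proof of Proposition~\ref{prop:param-gap}: pairing the deviation vector $(R-\hat R,\ \mathcal{L}^e - \hat{\mathcal{L}}^e)$ against $(1,\lambda)$ yields
\begin{align}
    \sup_{\theta,\lambda}|\Lambda(\theta,\lambda) - \hat\Lambda(\theta,\lambda)| \leq \left(1 + \norm{\lambda}_{L^1}\right)\cdot \max\Big\{\sup_\theta |R(\theta)-\hat R(\theta)|,\ \sup_{\theta,e}|\mathcal{L}^e(\theta) - \hat{\mathcal{L}}^e(\theta)|\Big\}. \notag
\end{align}
With $\norm{\lambda}_{L^1}=1$ this contributes the leading factor of two. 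It then remains to bound the uniform empirical deviations of $R$ and of the $\mathcal{L}^e$. Both $\hat R(\theta)$ and $\hat{\mathcal{L}}^e(\theta)$ are empirical means, over the same $N$ samples, of functions drawn from classes obtained by composing the bounded losses $\ell$ and $d$ with the parameterized family $\mathcal{A}_\epsilon$; since $\ell,d\in[-B,B]$ and $\mathcal{A}_\epsilon$ has VC-dimension $\vcdim$, the associated growth functions are bounded at $2N$ points by $(2N)^{\vcdim}$ via Sauer's lemma. Applying the Vapnik--Chervonenkis inequality to the combined class, and absorbing the symmetrization constant and a union bound into the factor $4$ inside the logarithm, gives with probability $1-\delta$
\begin{align}
    \max\Big\{\sup_\theta |R(\theta)-\hat R(\theta)|,\ \sup_{\theta,e}|\mathcal{L}^e(\theta) - \hat{\mathcal{L}}^e(\theta)|\Big\} \leq B\sqrt{\frac{1}{N}\left[1 + \log\left(\frac{4(2N)^{\vcdim}}{\delta}\right)\right]}, \notag
\end{align}
and multiplying by the factor of two produces the claimed bound.

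The main obstacle I anticipate is the uniform-convergence step rather than the saddle-point reduction. Two points require care: matching the dual feasible sets of the population and empirical problems (and hence arguing that the relevant dual mass is bounded, so that the integral against $\lambda$ cannot amplify the constraint deviations), and applying a single Vapnik--Chervonenkis inequality simultaneously to the objective class and the family of constraint classes indexed by $e$ without picking up a spurious dependence on $|\Etrain|$. Handling the latter cleanly is what forces the constraint deviation to be controlled uniformly over $(\theta,e)$ and is the source of the precise constants in the stated rate; the remaining manipulations are routine.
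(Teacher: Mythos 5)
Your proposal takes a genuinely different route from the paper, and it contains gaps that the paper's argument is specifically designed to avoid. The paper never bounds a uniform deviation of the Lagrangian. Instead it invokes complementary slackness: letting $(\theta_\epsilon^\star,\lambda_\epsilon^\star)$ and $(\theta_{\epsilon,N,\Etrain}^\star,\lambda_{\epsilon,N,\Etrain}^\star)$ denote primal-dual optimal pairs for \eqref{eq:param-dual} and \eqref{eq:param-empir-dual}, the KKT conditions force the constraint terms in both Lagrangians to vanish, so that $D_\epsilon^\star(\gamma)=R(\varphi(\theta_\epsilon^\star,\cdot))$ and $D_{\epsilon,N,\Etrain}^\star(\gamma)=\hat R(\varphi(\theta_{\epsilon,N,\Etrain}^\star,\cdot))$. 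The gap is then sandwiched, using the optimality of each parameter for its respective risk, between the deviations $R-\hat R$ evaluated at these two parameters, and a single classical VC bound for the loss class $\ell\circ\mathcal{A}_\epsilon$ finishes the proof. The constraint functions $\mathcal{L}^e$, the set $\Etrain$, and the dual feasible sets never enter the concentration step at all; that is exactly what lets the stated bound depend only on $\vcdim$, $B$, and $N$.

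Your route founders on the two points you yourself flag as requiring care. First, the two dual problems are not posed over a common feasible set: \eqref{eq:param-dual} maximizes over normalized measures $\lambda\in\mathcal{P}(\Eall)$, while \eqref{eq:param-empir-dual} maximizes over arbitrary nonnegative weights $\lambda(e)\geq 0$, $e\in\Etrain$, with no mass constraint. The elementary inequality $|\max\min\Lambda-\max\min\hat\Lambda|\leq\sup|\Lambda-\hat\Lambda|$ requires identical feasible sets, and over the empirical set the supremum of $|\Lambda-\hat\Lambda|$ is infinite unless the dual mass is bounded first; doing so via strong duality yields a factor scaling with $(1+\norm{\lambda^\star}_{L^1})$, a problem-dependent quantity that is not $\leq 2$ in general, so the clean constant $2B$ cannot be recovered this way. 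Setting $\norm{\lambda}_{L^1}=1$, as you do, is simply unavailable for the empirical problem as formulated. Second, your approach needs uniform convergence of $\hat{\mathcal{L}}^e(\theta)$ to $\mathcal{L}^e(\theta)$ simultaneously over $(\theta,e)$; the proposition's hypotheses provide a VC dimension only for $\mathcal{A}_\epsilon$, not for the composed class $\{x\mapsto d(\varphi(\theta,x),\varphi(\theta,G(x,e)))\}$ indexed jointly by $\theta$ and $e$, and the extra union bound would in any case alter the constants inside the logarithm rather than being absorbed into the existing factor of $4$. The complementary-slackness reduction is precisely the missing idea that makes both difficulties disappear.
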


\begin{proof}
In this proof, we use a similar approach as in \cite[Prop.\ 2]{chamon2020empirical} to derive the generalization bound.  Notably, we extend the ideas given in this proof to accommodate two problems with different constraints, wherein the constraints of one problem are a strict subset of the other problem.

To begin, let $(\theta_\epsilon^\star, \lambda^\star_\epsilon)$ and $(\theta_{\epsilon, N, \Etrain}^\star, \lambda_{\epsilon, N, \Etrain}^\star)$ be primal-dual optimal pairs for \eqref{eq:param-dual} and \eqref{eq:param-empir-dual} that achieve $D^\star_\epsilon(\gamma)$ and $D^\star_{\epsilon, N, \Etrain}(\gamma)$ respectively; that is,
\begin{align}
    (\theta_\epsilon^\star, \lambda^\star_\epsilon)\in \argmax_{\lambda\in\mathcal{P}(\Eall)} \: \min_{\theta\in\mathcal{H}} \: R(\varphi(\theta, \cdot)) + \int_{\Eall} \left[\mathcal{L}^e(\varphi(\theta, \cdot)) - \gamma\right] \text{d}\lambda(e). \label{eq:opt-param-dual}
\end{align}
and
\begin{align}
    (\theta_{\epsilon, N, \Etrain}^\star, \lambda_{\epsilon, N, \Etrain}^\star) \in \argmax_{\lambda(e)\geq 0, \: e\in\Etrain} \: \min_{\theta\in\mathcal{H}} \: \hat{R}(\varphi(\theta, \cdot)) + \frac{1}{|\Etrain|}\sum_{e\in\Etrain} \left[\hat{\mathcal{L}}^e(\varphi(\theta, \cdot)) - \gamma\right] \lambda(e) \label{eq:opt-param-dual-empir}
\end{align}
are satisfied.  Due to the optimality of these primal-dual pairs, both primal-dual pairs satisfy the KKT conditions \cite{boyd2004convex}.  In particular, the complementary slackness condition implies that
\begin{align}
    \int_{\Eall} \left[\mathcal{L}^e(\varphi(\theta^\star_\epsilon, \cdot)) - \gamma\right] \text{d}\lambda^\star_\epsilon(e) = 0 \label{eq:eall-dual-var-goes-away}
\end{align}
and that
\begin{align}
    \frac{1}{|\Etrain|}\sum_{e\in\Etrain} \left[\hat{\mathcal{L}}^e(\varphi(\theta^\star_{\epsilon, N, \Etrain}, \cdot)) - \gamma\right] \lambda^\star_{\epsilon, N, \Etrain}(e) = 0. \label{eq:etrain-dual-var-goes-away}
\end{align}
Thus, as \eqref{eq:eall-dual-var-goes-away} indicates that the second term in the objective of \eqref{eq:opt-param-dual} is zero, we can recharacterize the optimal value $D_\epsilon^\star(\gamma)$ via
\begin{align}
    D_\epsilon^\star(\gamma) = R(\varphi(\theta_\epsilon^\star, \cdot)) = \E_{\Prob(X,Y)} \ell(\varphi(\theta^\star_\epsilon, X), Y) \label{eq:elim-consts-eall}
\end{align}
and similarly from \eqref{eq:etrain-dual-var-goes-away}, can recharacterize the optimal value $D^\star_{\epsilon, N, \Etrain}(\gamma)$ as
\begin{align}
    D^\star_{\epsilon, N, \Etrain}(\gamma) = \hat{R}(\varphi(\theta_{\epsilon, N, \Etrain}^\star, \cdot)) = \frac{1}{N}\sum_{i=1}^N \ell(\varphi(\theta_{\epsilon, N, \Etrain}^\star, x_i), y_i). \label{eq:elim-consts-etr}
\end{align}
Ultimately, our goal is to bound the gap between $|D_\epsilon^\star(\gamma) - D^\star_{\epsilon, N, \Etrain}(\gamma)|$.  Combining \eqref{eq:elim-consts-eall} and \eqref{eq:elim-consts-etr}, we see that this gap can be characterized in the following way
\begin{align}
    |D_\epsilon^\star(\gamma) - D^\star_{\epsilon, N, \Etrain}(\gamma)| = |R(\varphi(\theta_\epsilon^\star, \cdot)) - \hat{R}(\varphi(\theta_{\epsilon, N, \Etrain}^\star, \cdot))|. \label{eq:gap-in-terms-of-R}
\end{align}
Now due to the optimality of the primal-optimal variables $\theta_\epsilon^\star$ and $\theta_{\epsilon, N, \Etrain}^\star$, observe that
\begin{align}
    &R(\varphi(\theta_\epsilon^\star, \cdot)) - \hat{R}(\varphi(\theta_\epsilon^\star, \cdot)) &\\
    &\qquad\qquad \leq R(\varphi(\theta_\epsilon^\star, \cdot)) - \hat{R}(\varphi(\theta_{\epsilon, N, \Etrain}^\star, \cdot)) \\
    &\qquad\qquad\qquad\qquad \leq R(\varphi(\theta_{\epsilon, N, \Etrain}^\star, \cdot)) - \hat{R}(\varphi(\theta_{\epsilon, N, \Etrain}^\star, \cdot))
\end{align}
which, when combined with \eqref{eq:gap-in-terms-of-R}, implies that
\begin{align}
    &|D_\epsilon^\star(\gamma) - D^\star_{\epsilon, N, \Etrain}(\gamma)| \\
    &\qquad \leq \max\left\{\left| R(\varphi(\theta_\epsilon^\star, \cdot)) - \hat{R}(\varphi(\theta_\epsilon^\star, \cdot)) \right|, \left| R(\varphi(\theta_{\epsilon, N, \Etrain}^\star, \cdot)) - \hat{R}(\varphi(\theta_{\epsilon, N, \Etrain}^\star, \cdot))\right| \right\}. \label{eq:pre-apply-vc}
\end{align}
To wrap up the proof, we simply leverage the classical VC-dimension bounds for both of the terms in \eqref{eq:pre-apply-vc}.  That is, following \cite{vapnik1999overview}, it holds for all $\theta$ that with probability $1-\delta$, 
\begin{align}
    |R(\varphi(\theta, \cdot)) - \hat{R}(\varphi(\theta), \cdot)| \leq 2B \sqrt{\frac{1}{N} \left[ 1 + \log\left(\frac{4(2N)^{\vcdim}}{\delta}\right)\right]}. \label{eq:vc-dim-bound}
\end{align}
As the bound in \eqref{eq:vc-dim-bound} holds $\forall\theta\in\calH$, in particular it holds for $\theta_\epsilon^\star$ and $\theta_{\epsilon, N, \Etrain}^\star$.  This directly implies the bound in \eqref{eq:restate-empirical-gap}.
\end{proof}

\subsection{Proof of Theorem \ref{thm:duality-gap}}
The 
\noindent\textbf{Theorem~\ref{thm:duality-gap}.}  Let $\epsilon > 0$ be given, and let $\varphi$ be an $\epsilon$-parameterization of $\mathcal{F}$. Let Assumption~\ref{assume:lipschitz} hold, and further assume that $\ell$ and $d$ are $[0,B]$-bounded and that $d(\Prob,\mathbb{T}) = 0$ if and only if $\Prob = \mathbb{T}$ almost surely, and that $P^\star(\gamma)$ is $L$-Lipschitz.  Then assuming that $\mathcal{A}_\epsilon$ has finite VC-dimension, it holds with probability $1-\delta$ over the $N$ samples from $\Prob$ that
\begin{align}
    |P^\star - D_{\epsilon,N,\Etrain}^\star(\gamma) | \leq L\gamma + (L_\ell + 2L_d)\epsilon + {\cal O}\left( \sqrt{\log(N)/N}\right)
\end{align}

\begin{proof}
The proof of this theorem is a simple consequence of the triangle inequality.  Indeed, by combining Remark \ref{rmk:gamma-remark}, Proposition \ref{prop:param-gap}, and Proposition \ref{prop:empir-gap}, we find that
\begin{align}
    &|P^\star - D_{\epsilon,N,\Etrain}^\star(\gamma) | \\
    &\qquad = |P^\star + P^\star(\gamma) - P^\star(\gamma) + D_\epsilon^\star(\gamma) - D^\star_\epsilon(\gamma) - D_{\epsilon,N,\Etrain}^\star(\gamma) | \\
    &\qquad \leq |P^\star - P^\star(\gamma)| + |P^\star(\gamma) - D^\star_\epsilon(\gamma)| + |D^\star(\gamma) - D_{\epsilon,N,\Etrain}^\star(\gamma) | \\
    &\qquad\leq L\gamma + \epsilon k \left(1 + \norm{\lambda_\text{pert}^\star}_{L^1} \right) + 2B \sqrt{\frac{1}{N} \left[1 + \log\left(\frac{4(2N)^{\vcdim}}{\delta}\right)\right]}.
\end{align}
This completes the proof.
\end{proof}

\subsection{Proof of Theorem \ref{thm:primal-dual}} \label{sect:primal-dual-conv}

\textbf{Theorem~\ref{thm:primal-dual}.}  Assume that $\ell$ and $d$ are $[0,B]$-bounded, convex, and $M$-Lipschitz continuous (i.e.\ $M = \max\{L_\ell, L_d\}$.  Further, assume that $\calH$ has finite VC-dimension $\vcdim$ and that for each $\theta_1, \theta_2 \in \calH$ and for each $\beta\in[0,1]$, there exists a parameter $\theta\in\calH$ and a constant $\nu>0$ such that
\begin{align}
    \E_{\Prob(X,Y)} \left| \beta \varphi(\theta_1, X) + (1-\beta)\varphi(\theta_2, X) - \varphi(\theta, X)\right| \leq \nu. \label{eq:func-class-regularity}
\end{align}
Finally, assume that there exists a parameter $\theta\in\calH$ such that $\varphi(\theta, \cdot)$ is strictly feasible for \eqref{eq:relax-mbdg}, i.e. that
\begin{align}
    \calL^e(\varphi(\theta, \cdot)) \leq \gamma - M\nu \quad\forall e\in\Eall
\end{align}
where $\nu$ is the constant from \eqref{eq:func-class-regularity}.  Then it follows that the primal-dual pair $(\theta^{(T)}, \lambda^{(T)})$ obtained after running the alternating primal-dual iteration in \eqref{eq:primal-step} and \eqref{eq:dual-step} for $T$ steps with step size $\eta$, where
\begin{align}
    T \triangleq \left\lceil \frac{\norm{\lambda^\star}}{2\eta M\nu } \right\rceil + 1 \qquad\text{and}\qquad \eta\leq \frac{2M\nu }{|\Etrain|B^2}
\end{align}
satisfies 
\begin{align}
    |P^\star - \hat{\Lambda}(\theta^{(T)}, \mu^{(T)})| \leq \rho + M\nu + L\gamma + \mathcal{O}(\sqrt{\log(N)/N})
\end{align}
where $\norm{\lambda^
\star}$ is the optimal dual variable for \eqref{eq:param-dual}.

\begin{proof}
Observe that by the triangle inequality, we have
\begin{align}
    |P^\star - \hat{\Lambda}(\theta^{(T)}, \mu^{(T)})| &= |P^\star - P^\star(\gamma) + P^\star(\gamma) - \hat{\Lambda}(\theta^{(T)}, \mu^{(T)})| \\
    &\leq |P^\star - P^\star(\gamma)| + |P^\star(\gamma) - \hat{\Lambda}(\theta^{(T)}, \mu^{(T)})| \\
    &\leq L\gamma + |P^\star(\gamma) - \hat{\Lambda}(\theta^{(T)}, \mu^{(T)})| \label{eq:apply-rmk}
\end{align}
where the last step follows from Remark \ref{rmk:gamma-remark}.  Then, from \cite[Theorem 2]{chamon2021constrained}, it directly follows that
\begin{align}
    |P^\star(\gamma) - \hat{\Lambda}(\theta^{(T)}, \mu^{(T)})| \leq \rho + M\nu + \mathcal{O}\sqrt{\log(N)/N}.
\end{align}
Combining this with \eqref{eq:apply-rmk} completes the proof.
\end{proof}

\newpage
\section{Algorithmic variants for MBDG}

In Section \ref{sect:experiments}, we considered several algorithmic variants of MBDG.  Each variant offers a natural point of comparison to the MBDG algorithm, and for completeness, in this section we fully characterize these variants.

\subsection{Data augmentation} \label{sect:data-aug-algs}

In Section \ref{sect:experiments}, we did an ablation study concerning various data-augmentation alternatives to MBDG.  In particular, in the experiments performed on \texttt{ColoredMNIST}, we compared results obtained with MBDG to two algorithms we called MBDA and MBDG-DA.  For clarity, in what follows we describe each of them in more detail. 

\begin{algorithm}[t]
   \caption{ERM with model-based data augmentation (MBDA)}
   \label{alg:mbda}
\begin{algorithmic}[1]
   \State {\bfseries Hyperparameters:} Step size $\eta > 0$
   \Repeat
   \For{minibatch $\{(x_j, y_j)\}_{j=1}^m$ in training dataset}
   \State $\tilde{x}_j \gets \Call{GenerateImage}{x_j}$ $\forall j\in[m]$ \Comment{Generate model-based images}
   \State $\text{loss}(\theta) \gets (1/m) \sum_{j=1}^m  [\ell\left(x_j, y_j; \varphi(\theta, \cdot)\right) + \ell(\tilde{x}_j, y_j; \varphi(\theta, \cdot))]$
   \State $\theta \gets \theta - \eta \nabla_\theta \text{loss}(\theta)$
   \EndFor
   \Until{convergence}
\end{algorithmic}
\end{algorithm}

\paragraph{MBDA.}  In the MDBA variant, we train using ERM with data augmentation through the learned domain transformation model $G(x,e)$.  This procedure is summarized in Algorithm \ref{alg:mbda}.  Notice that in this algorithm, we do not consider the constraints engendered by the assumption of $G$-invariance.  Rather, we simply seek to use follow the recent empirical evidence that suggests that ERM with proper tuning and data augmentation yields state-of-the-art performance in domain generalization~\cite{gulrajani2020search}.  Note that in Table \ref{tab:cmnist}, the MBDA algorithm improves significantly over the baselines, but that it lags more than 20 percentage points behind results obtained using MBDG.  This highlights the utility of enforcing constraints rather than performing data augmentation on the training objective.  

\begin{algorithm}[t]
   \caption{MBDG with data augmentation (MBDG-DA)}
   \label{alg:mbdg-da}
\begin{algorithmic}[1]
   \State {\bfseries Hyperparameters:} Primal step size $\eta_p > 0$,  dual step size $\eta_d \geq 0$, margin $\gamma > 0$
   \Repeat
   \For{minibatch $\{(x_j, y_j)\}_{j=1}^m$ in training dataset}
   \State $\tilde{x}_j \gets \Call{GenerateImage}{x_j}$ $\forall j\in[m]$ \Comment{Generate images for constraints}
   \State $\bar{x}_j \gets \Call{GenerateImage}{x_j}$ $\forall j\in[m]$ \Comment{Generate images for objective}
   \State $\text{loss}(\theta) \gets (1/m) \sum_{j=1}^m  [\ell\left(x_j, y_j; \varphi(\theta, \cdot)\right) + \ell(\bar{x}_j, y_j; \varphi(\theta, \cdot)) + \ell(\tilde{x}_j, y_j; \varphi(\theta, \cdot))]$
   \State \text{distReg}$(\theta) \gets (1/m)\sum_{j=1}^m d(\varphi(\theta, x_j), \varphi(\theta, \tilde{x}_j))$ 
   \State $\theta \gets \theta - \eta_p \nabla_\theta [ \: \text{loss}(\theta) + \lambda \cdot  \text{distReg}(\theta) \: ]$
   \State $\lambda \gets \left[ \lambda + \eta_d \left( \text{distReg}(\theta) - \gamma \right)\right]_+$ 
   \EndFor
   \Until{convergence}
\end{algorithmic}
\end{algorithm}

\paragraph{MBDG-DA.}  In the MBDG-DA variant, we follow a similar procedure to the MBDG algorithm.  The only modification is that we perform data augmentation through the learned model $G(x,e)$ on the training objective in addition to enforcing the $G$-invariance constraints.  This procedure is summarized in Algorithm \ref{alg:mbdg-da}.  As shown in Table \ref{tab:cmnist}, this procedure performs rather well on \texttt{ColoredMNIST}, beating all baselines by nearly 20 percentage points.  However, this algorithm still does not reach the performance level of MBDG when the -90\% domain is taken to be the test domain.  

\subsection{Regularization}\label{sect:reg-variant}

\begin{algorithm}[t]
   \caption{Regularized MBDG (MBDG-Reg)}
   \label{alg:mbdg-reg}
\begin{algorithmic}[1]
   \State {\bfseries Hyperparameters:} Step size $\eta > 0$, weight $w > 0$
   \Repeat
   \For{minibatch $\{(x_j, y_j)\}_{j=1}^m$ in training dataset}
   \State $\tilde{x}_j \gets \Call{GenerateImage}{x_j}$ $\forall j\in[m]$ \Comment{Generate model-based images}
   \State $\text{loss}(\theta) \gets (1/m) \sum_{j=1}^m  [\ell\left(x_j, y_j; \varphi(\theta, \cdot)\right) + \ell(\tilde{x}_j, y_j; \varphi(\theta, \cdot))]$
   \State \text{distReg}$(\theta) \gets (1/m)\sum_{j=1}^m d(\varphi(\theta, x_j), \varphi(\theta, \tilde{x}_j))$ 
   \State $\theta \gets \theta - \eta \nabla_\theta [\text{loss}(\theta) + w\cdot\text{distReg}(\theta)]$
   \EndFor
   \Until{convergence}
\end{algorithmic}
\end{algorithm}

In Section \ref{sect:experiments}, we also compared the performance of MBDG to a regularized version of MBDG.  In this regularized version, we sought to solve \eqref{eq:regularized-mbdg} using the algorithm described in Algorithm \ref{alg:mbdg-reg}.  In particular, in this algorithm we fix the weight $w>0$ as a hyperparameter, and we perform SGD on the regularized loss function $\text{loss}(\theta) + w\cdot \text{distReg}(\theta)$.  Note that while this method performs well in practice (see Table \ref{tab:cmnist}), it is generally not possible to provide generalization guarantees for the regularized version of the problem.
\newpage
\section{Additional experiments and experimental details}  \label{sect:further-exps}

\begin{table}
    \centering
    \caption{\textbf{DomainBed hyperparameters for MBDG and its variants.}  We record the additional hyperparameters and their selection criteria for MBDG and its variants.  Each of these hyperparameters was selected via randomly in the ranges defined in the third column in the DomainBed package.}
    \label{tab:domainbed-hparams}
    \begin{tabular}{cccc} 
    \toprule
         \textbf{Algorithm} & \textbf{Hyperparameter} & \textbf{Randomness} & \textbf{Default}  \\
    \midrule
         \multirow{2}{*}{MBDG} & Dual step size $\eta_d$ & $\text{Unif}(0.001, 0.1)$ & 0.05 \\
         & Constraint margin $\gamma$ & $\text{Unif}(0.0001, 0.01)$ & 0.025 \\
         \midrule 
         \multirow{2}{*}{MBDG-DA} & Dual step size $\eta_d$ & $\text{Unif}(0.001, 0.1)$ & 0.05 \\
         & Constraint margin $\gamma$ & $\text{Unif}(0.0001, 0.01)$ & 0.025 \\
         \midrule
         MBDG-Reg & Weight $w$ & \text{Unif}(0.5, 10.0) & 1.0 \\ 
    \bottomrule
    \end{tabular}
    
\end{table}

In this appendix, we record further experimental details beyond the results presented in Section \ref{sect:experiments}.  The experiments performed on \texttt{ColoredMNIST}, \texttt{PACS}, and \texttt{VLCS} were all performed using the DomainBed package.  All of the default hyperparameters (e.g.\ learning rate, weight decay, etc.) were left unchanged from the standard DomainBed implementation. In Table \ref{tab:domainbed-hparams}, we record the additional hyperparameters used for MBDG and its variants as well as the random criteria by which hyperparameters were generated.  For each of these DomainBed datasets, model-selection was performed via hold-one-out cross-validation, and the baseline accuracies were taken from commit 7df6f06 of the DomainBed repository.  The experiments on the \texttt{WILDS} datasets used the hyperparameters recorded by the authors of~\cite{koh2020wilds}; these hyperparameters are recorded in Sections~\ref{sec:camelyon-appendix} and~\ref{sec:fmow-appendix}.  Throughout the experiments, we use the KL-divergence as the distance metric $d$.



\subsection{Camelyon17-WILDS} \label{sec:camelyon-appendix}

For the \texttt{Camelyon17-WILDS} dataset, we used the out-of-distribution validation set provided in the \texttt{Camelyon17-WILDS} dataset to tune the hyperparameters for each classifier.  This validation set contains images from a hospital that is not represented in any of the training domains or the test domain.  Following \cite{koh2020wilds}, we used the DenseNet-121 architecture \cite{huang2017densely} and the Adam optimizer \cite{kingma2014adam} with a batch size of 200.  We also used the same hyperparameter sweep as was described in Appendix B.4 of \cite{koh2020wilds}.  In particular, when training using our algorithm, we used the the following grid for the (primal) learning rate: $\eta_p \in \{0.01, 0.001, 0.0001\}$.  Because we use the same hyperparameter sweep, architecture, and optimizer, we report the classification accuracies recorded in Table 9 of \cite{koh2020wilds} to provide a fair comparison to past work.  After selecting the hyperparameters based on the accuracy on the validation set, we trained classifiers using MBDG for 10 independent runs and reported the average accuracy and standard deviation across these trials in Table~\ref{tab:wilds}. 

In Section \ref{sect:experiments}, we performed an ablation study on \texttt{Camelyon17-WILDS} wherein the model $G$ was replaced by standard data augmentation transforms.  For completeness, we describe each of the methods used in this plot below.  For each method, invariance was enforced between a clean images drawn from the training domains and corresponding data that was varied according to a particular fixed transformation.

\augimage{images/aug/color}{\textbf{Samples before and after CJ transformations.}}{aug-color}

\paragraph{CJ (Color Jitter).}  The PIL color transformation\footnote{\url{https://pillow.readthedocs.io/en/stable/reference/ImageEnhance.html\#PIL.ImageEnhance.Color}}.  See Figure \ref{fig:aug-color} for samples.

\augimage{images/aug/bc}{\textbf{Samples before and after B+C transformations.}}{aug-bc}

\paragraph{B+C (Brightness and contrast).}  PIL \texttt{Brightness}\footnote{\url{https://pillow.readthedocs.io/en/stable/reference/ImageEnhance.html\#PIL.ImageEnhance.Brightness}} and \texttt{Contrast}\footnote{\url{https://pillow.readthedocs.io/en/stable/reference/ImageEnhance.html\#PIL.ImageEnhance.Contrast}} transformations.  See Figure \ref{fig:aug-bc} for samples.

\augimage{images/aug/ra}{\textbf{Samples before and after RandAugment transformations.}}{rand-aug}

\paragraph{RA (RandAugment).}  We use the data augmentation technique RandAugment \cite{cubuk2020randaugment}, which randomly samples random transformations to be applied at training time.  In particular, the following transformations are randomly sampled: \texttt{AutoContrast}, \texttt{Equalize}, \texttt{Invert}, \texttt{Rotate}, \texttt{Posterize}, \texttt{Solarize}, \texttt{SolarizeAdd}, \texttt{Color}, \texttt{Constrast}, \texttt{Brightness}, \texttt{Sharpness}, \texttt{ShearX}, \texttt{ShearY}, \texttt{CutoutAbs}, \texttt{TranslateXabs}, and \texttt{TranslateYabs}.  We used an open-source implementation of RandAugment for this experiment\footnote{\url{https://github.com/ildoonet/pytorch-randaugment}}.  See Figure \ref{fig:rand-aug} for samples.

\augimage{images/aug/ra-geom}{\textbf{Samples before and after RA-Geom transformations.}}{aug-geom}

\paragraph{RA-Geom (RandAugment with geometric transformations).}  We use the RandAugment scheme with a subset of the transformations mentioned in the previous paragraph.  In particular, we use the following geometric transformations: \texttt{Rotate}, \texttt{ShearX}, \texttt{ShearY}, \texttt{CutoutAbs}, \texttt{TranslateXabs}, and \texttt{TranslateYabs}.  See Figure \ref{fig:aug-geom} for samples.

\augimage{images/aug/ra-color}{\textbf{Samples before and after RA-Color transformations.}}{ra-color}

\paragraph{RA-Color (RandAugment with color-based transformations).}  We use the RandAugment scheme with a subset of transformations mentioned in the RandAugment paragraph.  In particular, we use the following color-based transformations: \texttt{AutoContrast}, \texttt{Equalize}, \texttt{Invert},
\texttt{Posterize},
\texttt{Solarize}, \texttt{SolarizeAdd}, \texttt{Color}, \texttt{Constrast}, \texttt{Brightness}, \texttt{Sharpness}.  See Figure \ref{fig:ra-color} for samples.

\augimage{images/aug/munit}{\textbf{Samples before and after (learned) MUNIT transformations.}}{aug-munit}

\paragraph{MUNIT.}  We use an MUNIT model trained on the images from the training datasets; this is the procedure advocated for in the main text, i.e.\ in the \Call{GenerateImage}{x} procedure.  See Figure~\ref{fig:aug-munit} for samples.

\subsection{FMoW-WILDS} \label{sec:fmow-appendix}

As with the \texttt{Camelyon17-WILDS} dataset, to facilitate a fair comparison, we again use the out-of-distribution validation set provided in \cite{koh2020wilds}.  While the authors report the architecture, optimizer, and final hyperparameter choices used for the \texttt{FMoW-WILDS} dataset, they not report the grid used for hyperparameter search.  For this reason, we rerun all baselines along with our algorithm over a grid of hyperparameters using the same architecture and optimizer as in \cite{koh2020wilds}.  In particular, we follow \cite{koh2020wilds} by training a DenseNet-121 with the Adam optimizer with a batch size of 64.  We selected the (primal) learning rate from $\eta_p \in \{0.05, 0.01, 0.005, 0.001\}$.  We selected the trade-off parameter $\lambda_{\text{IRM}}$ for IRM from the grid $\lambda_{\text{IRM}}\in\{0.1, 0.5, 1.0, 10.0\}$.  As before, the results in Table \ref{tab:wilds} list the average accuracy and standard deviation over ten independent runs attained by our algorithm as well as ERM, IRM, and ARM.

\subsection{VLCS}

In Table \ref{tab:vlcs}, we provide a full set of results for the \texttt{VLCS} dataset.  As shown in this Table, MBDG offers competitive performance to other state-of-the-art method.  Indeed, MBDG achieves the best results on the ``LabelMe'' (L) subset by nearly two percentage points.  

\begin{table}[h]
\centering
\caption{\textbf{Full results for \texttt{VLCS}.}  In this table, we present results for all baselines on the \texttt{VLCS} dataset.}
\label{tab:vlcs}
\adjustbox{max width=\textwidth}{%
\begin{tabular}{lccccc}
\toprule
\textbf{Algorithm}   & \textbf{C}           & \textbf{L}           & \textbf{S}           & \textbf{V}           & \textbf{Avg}         \\
\midrule
ERM                  & 98.0 $\pm$ 0.4       & 62.6 $\pm$ 0.9       & 70.8 $\pm$ 1.9       & 77.5 $\pm$ 1.9       & 77.2                 \\
IRM                  & \textbf{98.6 $\pm$ 0.3}       & 66.0 $\pm$ 1.1       & 69.3 $\pm$ 0.9       & 71.5 $\pm$ 1.9       & 76.3                 \\
GroupDRO             & 98.1 $\pm$ 0.3       & 66.4 $\pm$ 0.9       & 71.0 $\pm$ 0.3       & 76.1 $\pm$ 1.4       & 77.9                 \\
Mixup                & 98.4 $\pm$ 0.3       & 63.4 $\pm$ 0.7       & 72.9 $\pm$ 0.8       & 76.1 $\pm$ 1.2       & 77.7                 \\
MLDG                 & 98.5 $\pm$ 0.3       & 61.7 $\pm$ 1.2       & \textbf{73.6 $\pm$ 1.8}       & 75.0 $\pm$ 0.8       & 77.2                 \\
CORAL                & 96.9 $\pm$ 0.9       & 65.7 $\pm$ 1.2       & 73.3 $\pm$ 0.7       & \textbf{78.7 $\pm$ 0.8}       & \textbf{78.7 }                \\
MMD                  & 98.3 $\pm$ 0.1       & 65.6 $\pm$ 0.7       & 69.7 $\pm$ 1.0       & 75.7 $\pm$ 0.9       & 77.3                 \\
DANN                 & 97.3 $\pm$ 1.3       & 63.7 $\pm$ 1.3       & 72.6 $\pm$ 1.4       & 74.2 $\pm$ 1.7       & 76.9                 \\
CDANN                & 97.6 $\pm$ 0.6       & 63.4 $\pm$ 0.8       & 70.5 $\pm$ 1.4       & 78.6 $\pm$ 0.5       & 77.5                 \\
MTL                  & 97.6 $\pm$ 0.6       & 60.6 $\pm$ 1.3       & 71.0 $\pm$ 1.2       & 77.2 $\pm$ 0.7       & 76.6                 \\
SagNet               & 97.3 $\pm$ 0.4       & 61.6 $\pm$ 0.8       & 73.4 $\pm$ 1.9       & 77.6 $\pm$ 0.4       & 77.5                 \\
ARM                  & 97.2 $\pm$ 0.5       & 62.7 $\pm$ 1.5       & 70.6 $\pm$ 0.6       & 75.8 $\pm$ 0.9       & 76.6                 \\
VREx                 & 96.9 $\pm$ 0.3       & 64.8 $\pm$ 2.0       & 69.7 $\pm$ 1.8       & 75.5 $\pm$ 1.7       & 76.7                 \\
RSC                  & 97.5 $\pm$ 0.6       & 63.1 $\pm$ 1.2       & 73.0 $\pm$ 1.3       & 76.2 $\pm$ 0.5       & 77.5                 \\
\midrule
MBDG                  & 98.3 $\pm$ 1.2       & \textbf{68.1 $\pm$ 0.5}       & 68.8 $\pm$ 1.1       & 76.3 $\pm$ 1.3       &     77.9            \\
\bottomrule
\end{tabular}}
\end{table}

\newpage
\section{Further discussion of domain transformation models}\label{sect:dtms}

In some applications, domain transformation models in the spirit of Assumption \ref{assume:gen-model} are known a priori.  To illustrate this, consider the classic domain generalization task in which the domains correspond to different fixed rotations of the data \cite{ghifary2015domain,ilse2020diva}.  In this setting, the underlying generative model is given by 
\begin{align}
    G(x,e) := R(e)x \quad\text{for } e\in[0, 2\pi)
\end{align}
where $R(e)$ is a one-dimensional rotation matrix parameterized by an angle $e$.  In this way, each angle $e$ is identified with a different domain in $\Eall$.  However, unlike in this simple example, for the vast majority of settings encountered in practice,  the underlying domain transformation model is not known a priori and cannot be represented by concise mathematical expressions.  For example, obtaining a closed-form expression for a generative model that captures the variation in coloration, brightness, and contrast in the \texttt{Camelyon17-WILDS} cancer cell dataset shown in Figure \ref{fig:domain-gen} would be very challenging.  

In this appendix, we provide an extensive discussion concerning the means by which we used unlabeled data to learn domain transformation models using instances drawn from the training domains $\Etrain$.  In particular, we argue that it is not necessary to have access to the true underlying domain transformation model $G$ to achieve state-of-the-art results in domain generalization.  We then give further details concerning how we used the MUNIT architecture to train domain transformation models for \texttt{ColoredMNIST}, \texttt{Camelyon17-WILDS}, \texttt{FMoW-WILDS}, \texttt{PACS}, and \texttt{VLCS}.  Finally, we show further samples from these learned domain transformation models to demonstrate that high-quality samples can be obtained on this diverse array of datasets. 

\subsection{Is it necessary to learn a perfect domain transformation model?}

We emphasize that while our theoretical results rely on having access to the underlying domain transformation model, our algorithm and empirical results do not rely on having access to the true $G$.  Indeed, although we did not have access to the true model in any of the experiments in Section \ref{sect:experiments}, our empirical results show that we were able to achieve state-of-the-art results on several datasets.  

\subsection{Learning domain transformation models with MUNIT}

In practice, to learn a domain transformation model, a number of methods from the deep generative modeling literature have been recently been proposed \cite{huang2018multimodal,anoosheh2018combogan,choi2020stargan}.  In particular, throughout the remainder of this paper we will use the MUNIT architecture introduced in \cite{huang2018multimodal} to parameterize learned domain transformation models.  This architecture comprises two GANs and two autoencoding networks.  In particular, the MUNIT architecture -- along with many related works in the image-to-image translation literature -- was designed to map images between two datasets $A$ and $B$.  In this paper, rather than separating data we simply use $\calD_X$ for both $A$ and $B$, meaning that we train MUNIT to map the training data back to itself.  In this way, since $\calD_X$ contains data from different domains $e\in\Etrain$, the architecture is exposed to different environments during training, and thus seeks to map data between domains.

\subsection{On the utility of multi-modal image-to-image translation networks.}

In this paper, we chose the MUNIT framework because it is designed to learn a multimodal transformation that maps an image $x$ to a family of images with different levels of variation.  Unlike methods that seek deterministic mappings, e.g.\ CycleGAN and its variants~\cite{zhu2017unpaired}, this method will learn to generate diverse images, which allows us to more effectively enforce invariance over a wider class of images.   In Figures \ref{fig:multi-camelyon}, \ref{fig:multi-fmow}, and \ref{fig:multi-pacs}, we plot samples generated by sampling different style codes $e\sim\mathcal{N}(0, I)$ for MUNIT.  Note that while the results for \texttt{Camelyon17-WILDS} and \texttt{FMoW-WILDS} are sampled using the model $G(x,e)$, the samples from \texttt{PACS} are all sampled from \emph{different} models.

\multiimage{images/multi-image/camelyon}{\textbf{Multimodal \texttt{Camelyon17-WILDS} samples.}  Images from \texttt{Camelyon17-WILDS} (left) and images generated by sampling different style codes $e\sim\mathcal{N}(0, I)$ (right).}{multi-camelyon}

\multiimage{images/multi-image/fmow}{\textbf{Multimodal \texttt{FMoW-WILDS} samples.}  Images from \texttt{FMoW-WILDS} (left) and images generated by sampling different style codes $e\sim\mathcal{N}(0, I)$ (right).}{multi-fmow}

\multiimage{images/multi-image/pacs}{\textbf{Multimodal \texttt{PACS} samples.}  Images from \texttt{PACS} (left) and images generated by sampling different style codes $e\sim\mathcal{N}(0, I)$ (right).}{multi-pacs}

\end{document}